\newcommand{\vep}{\bm{\varepsilon}}
\newcommand{\lnorm}{\left\lVert}
\newcommand{\rnorm}{\right\rVert}
\theoremstyle{plain}
\newtheorem{theorem}{Theorem}[section]
\newtheorem*{theorem*}{Theorem}
\newtheorem{lemma}[theorem]{Lemma}
\newtheorem*{lemma*}{Lemma}
\newtheorem{corollary}{Corollary}[section]
\theoremstyle{definition}
\newtheorem{definition}{Definition}[section]
\newtheorem{assumption}{Assumption}[section]
\newtheorem{problem}{Problem}[section]
\theoremstyle{remark}
\def\eqref#1{equation~\ref{#1}}
\def\1{\bm{1}}
\def\rvx{{\mathbf{x}}}
\def\rmA{{\mathbf{A}}}
\def\rmB{{\mathbf{B}}}
\def\rmP{{\mathbf{P}}}
\def\vh{{\bm{h}}}
\def\vs{{\bm{s}}}
\def\vx{{\bm{x}}}
\def\vy{{\bm{y}}}
\def\mA{{\bm{A}}}
\def\mI{{\bm{I}}}
\def\mX{{\bm{X}}}
\DeclareMathAlphabet{\mathsfit}{\encodingdefault}{\sfdefault}{m}{sl}
\SetMathAlphabet{\mathsfit}{bold}{\encodingdefault}{\sfdefault}{bx}{n}
\def\gC{{\mathcal{C}}}
\def\gD{{\mathcal{D}}}
\def\gE{{\mathcal{E}}}
\def\gG{{\mathcal{G}}}
\def\gN{{\mathcal{N}}}
\def\gO{{\mathcal{O}}}
\def\gQ{{\mathcal{Q}}}
\def\gS{{\mathcal{S}}}
\def\gU{{\mathcal{U}}}
\def\gV{{\mathcal{V}}}
\def\gY{{\mathcal{Y}}}
\newcommand{\E}{\mathbb{E}}
\DeclareMathOperator{\sign}{sign}
\title{Online Algorithm for Node Feature Forecasting in Temporal Graphs}
\author{%
Aniq Ur Rahman \quad  Justin P. Coon\\
University of Oxford, U.K.\\
\texttt{aniq.rahman@eng.ox.ac.uk}, \quad  
\texttt{justin.coon@eng.ox.ac.uk}
}
\begin{document}

\maketitle
\begin{abstract}
In this paper, we propose an online algorithm \texttt{mspace} for forecasting node features in temporal graphs, which captures spatial cross-correlation among different nodes as well as the temporal auto-correlation within a node. The algorithm can be used for both probabilistic and deterministic multi-step forecasting, making it applicable for estimation and generation tasks. Comparative evaluations against various baselines, including temporal graph neural network (TGNN) models and classical Kalman filters, demonstrate that \texttt{mspace} performs at par with the state-of-the-art and even surpasses them on some datasets. Importantly, \texttt{mspace} demonstrates consistent performance across datasets with varying training sizes, a notable advantage over TGNN models that require abundant training samples to effectively learn the spatiotemporal trends in the data. Therefore, employing \texttt{mspace} is advantageous in scenarios where the training sample availability is limited. 
Additionally, we establish theoretical bounds on multi-step forecasting error of \texttt{mspace} and show that it scales linearly with the number of forecast steps $q$ as $\gO(q)$. For an asymptotically large number of nodes $n$, and timesteps $T$, the computational complexity of \texttt{mspace} grows linearly with both $n$, and $T$, i.e., $\gO(nT)$, while its space complexity remains constant $\gO(1)$.
We compare the performance of various \texttt{mspace} variants against ten recent TGNN baselines and two classical baselines, \texttt{ARIMA} and the \texttt{Kalman} filter across ten real-world datasets. Additionally, we propose a technique to generate synthetic datasets to aid in evaluating node feature forecasting methods, with the potential to serve as a benchmark for future research. Lastly, we have investigated the interpretability of different \texttt{mspace} variants by analyzing model parameters alongside dataset characteristics to jointly derive model-centric and data-centric insights. 
\end{abstract}




\section{Introduction}
\label{sec:intro}
Temporal graphs  are a powerful tool for modelling real-world data that evolves over time. They are increasingly being used in diverse fields, such as recommendation systems \cite{gao2022graph}, social networks \cite{deng2019learning}, and transportation systems \cite{yu2017spatio}, to name a few. 
Temporal graph learning (TGL) can be viewed as the task of learning on a sequence of graphs that form a time series. The changes in the graph can be of several types: changes to the number of nodes, the features of existing nodes, the configuration of edges, or the features of existing edges. Moreover, a temporal graph can result from a single or a combination of these changes. The TGL methods can be applied to various tasks, such as regression, classification, and clustering, at three levels: node, edge, and graph \citep{longa_graph_2023}.

In this work, we focus on node feature forecasting, also known as node regression, where the previous temporal states of a graph are used to predict its future node features. In most temporal graph neural network (TGNN) models, the previous states are encoded into a super-state or dynamic graph embedding \cite{barros2021survey}, guided by the graph structure. This dynamic embedding is then used to forecast the future node features. While the TGNN models perform well, they could be more interpretable, as a direct relationship between the node features and the embeddings cannot be understood straightforwardly. Furthermore, most embedding aggregation mechanisms impose a strong assumption that the neighbours influence a node in proportion to their edge weight \cite{wang_dissecting_nodate}.

TGNN methods \cite{li_diffusion_2017, micheli_discrete-time_2022, wu_graph_2019, fang_spatial-temporal_2021, liu_graph-based_2023} typically involve a training phase where the model learns from training data and is then deployed on test data without further training due to computational costs. If the test data distribution differs from the training data, an offline model cannot adapt. Therefore, when dealing with time-series data, it is crucial to use a lightweight online algorithm that can adapt to changes in data distribution while also performing forecasts. Moreover, TGNN models are typically trained to forecast a predetermined number of future steps. If we want to increase the number of forecast steps, even by one, the model needs to be reinitialized and retrained.

Inspired by the simplicity of Markov models, we define the state of a graph at a given time in an interpretable manner and propose a lightweight model that can be deployed without any training. The algorithm is designed with a mechanism to prioritize recent trends in the data over historical ones, allowing it to adapt to changes in data distribution.


\paragraph{Contributions} The contributions of our work are summarized as follows:
\begin{itemize}
    \item We have proposed an online learning algorithm \texttt{mspace} for node feature forecasting in temporal graphs, which can sequentially predict the node features for $q \in \mathbb{N}$ future timesteps after observing only two past node features.
    \item The algorithm \texttt{mspace} can produce both probabilistic and deterministic forecasts, making it suitable for generative and predictive tasks.
    \item The root mean square error (RMSE) of $q$-step iterative forecast scales linearly in the number of steps $q$, i.e. ${\rm RMSE}(q) = \gO(q)$, (see Appendix~\ref{app:bound}).
    \item For asymptotically large number of nodes $n$, and timesteps $T$, the computational complexity of \texttt{msapce} grows linearly with both $n$, and $T$, i.e., $\gO(nT)$, while the space complexity is constant $\gO(1)$, (see Appendix~\ref{app:complexity}).
    \item We have compared the performance of different variants of \texttt{mspace} against ten recent TGNN baselines, and two classical baselines \texttt{ARIMA}, and \texttt{Kalman} filter.
    \item We have evaluated \texttt{mspace} on four datasets for single-step forecasting and six datasets for multi-step forecasting.
    \item In addition to the evaluation on ten real-world datasets, we have proposed a technique to generate synthetic datasets that can aid in a more thorough evaluation of node feature forecasting methods. The synthetic datasets have the potential to serve as benchmark for future research.
    \item We have investigated the interpretability of different \texttt{mspace} variants by analyzing the model parameters along with the dataset characteristics to jointly derive model-centric and data-centric insights, (see Appendix~\ref{app:interpret}).
    \item To facilitate the reproducibility of results, the code is made available \href{https://anonymous.4open.science/r/NeurIPS2024Submissionmspace/README.md}{here}.
\end{itemize}

\paragraph{Notation} We denote vectors with lowercase boldface $\vx$, and matrices and tensors with uppercase boldface $\mX$. Sets are written in calligraphic font such as $\gV, \gU, \gS$, with the exception of graphs $\gG$, and queues $\gQ$. The operator $\succ$ is used in two contexts: $\vx \succ \bm{0}$ is an element wise positivity check on the vector $\vx$, and $\rmA \succ \bm{0}$ indicates that the matrix $\rmA$ is positive definite. $\mathbb{I}( \cdot )$ is the indicator function, and  $[m] \triangleq \{1, 2, \cdots, m\}$ for any $m\in \mathbb{N}$. We denote the distributions of continuous variables by $p(\cdot)$, and of discrete variables by $P(\cdot)$. The statement $\vx \sim p$ means that  $\vx$ is sampled from $p$. The Hadamard product operator is denoted by $\odot$ while the Kronecker product operator is denoted by $\otimes$. The trace of a matrix $\rmA$ is written as ${\rm tr}(\rmA)$.

We denote the neighbours of a node $v$ for an arbitrary number of hops as $\gU_v$.  We introduce the operator $\langle \cdot \rangle$ to arrange the nodes in a set $\gU$ in ascending numerical order of the node IDs. When another set or vector is super-scripted with $\langle \gU \rangle$, the elements within that set or vector are filtered and arranged as per $\langle \gU \rangle$. 

\newpage
\section{Methodology}
\label{sec:problem}
\paragraph{Problem Formulation}
A discrete-time temporal graph is defined as $\{ \gG_t = (\gV, \gE, \mX_t) : t \in [T] \}$, where $\gV= [n]$ is the set of nodes,  $\gE \subseteq \gV \times \gV$ is the set of edges, and $\mX_t \in \mathbb{R}^{n \times d}$ is the node feature matrix at time $t$. The set of edges $\gE$ can  alternatively be represented by the adjacency matrix denoted as $\mA \in \{0, 1\}^{n \times n}$.
The node feature vector is denoted by $\vx_t(v) \in \mathbb{R}^{d}$ such that $\mX_t = \begin{bmatrix}\vx_t(v) \end{bmatrix}^\top_{v \in \gV}$, and we refer to the first-order differencing \cite{shumway_time_2017} of a node feature vector as \textcolor{teal}{\textbf{shock}}. For a node $v \in \gV$ we define the shock at time $t$ as $\vep_t(v) \triangleq \bm{x}_t(v) - \bm{x}_{t-1}(v)$. The shock of the nodes in an ordered set $\gU$ at time $t$ is denoted by $\vep_t^{\langle \gU \rangle} \in \mathbb{R}^{|\gU|d}$. The shock at time $t$ for an arbitrary set of nodes is $\vep_t$.

\begin{assumption}
    The shocks $\{ \vep_1 , \vep_2 , \vep_3 \cdots \vep_T \}$ is assumed to be sampled from a continuous-state Markov chain defined on $\mathbb{R}^{nd}$ such that $p\left(\vep_{t+1} \mid \vep_t , \vep_{t-1}. \cdots \right) = p\left(\vep_{t+1} \mid \vep_t \right)$. 
\end{assumption}
This is a weak assumption because a continuous-state Markov chain has infinite number of states. However, having infinite number of states makes it impossible to learn the transition kernel from limited samples without additional assumptions on the model. To circumvent this, \textit{linear dynamical systems} and \textit{autoregressive models} are used in the literature \cite{barber2012bayesian}.

Let $p(\vep' \mid \vep)$ denote the transition probability $\vep \rightarrow \vep'$ in a continuous-state Markov chain $\mathfrak{Z}_0$ defined over $\gC$. A discrete-state Markov chain $\mathfrak{Z}_1$ defined over finite $\gS$ with transition probability $\rmP_{\vs, \vs'}$ can be constructed from $p(\vep' \mid \vep)$ through a mapping $\Psi: \gC \rightarrow \gS$ as
\begin{align}
    \rmP_{\vs, \vs'} = \frac{\int\limits_{\gC}\int\limits_{\gC} p(\vep' \mid \vep) p(\vep)\, \mathbb{I}( \Psi(\vep) = \vs ) \, \mathbb{I}( \Psi(\vep') = \vs' ) \, d \vep \, d\vep'}{\int\limits_{\gC}\int\limits_{\gC} p(\vep' \mid \vep) p(\vep) \,  \mathbb{I}( \Psi(\vep) = \vs ) \, d \vep \, d\vep'}.
\end{align}
\begin{wrapfigure}[7]{r}{0.4\columnwidth}
    \vspace{-15pt}
    \centering
    \includegraphics[width=0.38\columnwidth]{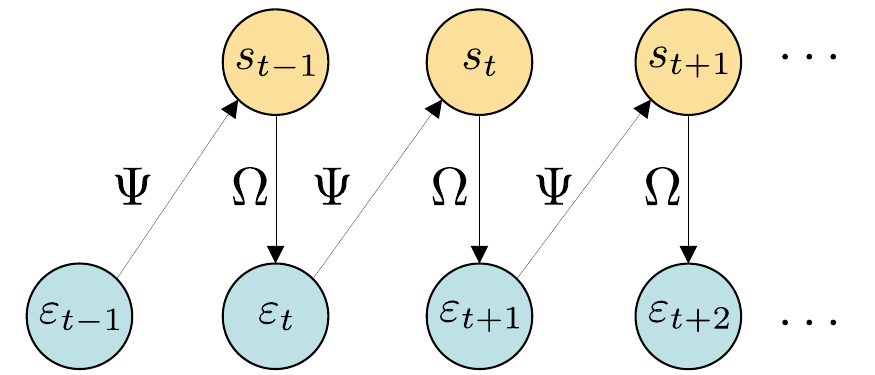}
    \caption{Markov Approximation.}
    \label{fig:state_shock}
\end{wrapfigure}
For a continuous-state Markov chain sample $\{ \vep_1, \vep_2, \cdots \vep_T \}$, we can estimate $\rmP$ directly from $\{ \Psi(\vep_1), \Psi(\vep_2), \cdots \Psi(\vep_T) \}$ without the need of $p(\vep' \mid \vep)$.
Consider a random function $\Omega: \gS \rightarrow \gC$, then the transition kernel can be defined over $\gS$ as:
\begin{align}
    \hat{\rmP}_{\vs, \vs'} = \int_{\gC} p(\Omega(\vs))\, \mathbb{I}(\Psi(\hat{\vep}')=\vs') \, d\vep'.
\end{align}

We refer to $\Psi$ as the \textcolor{teal}{\textbf{state function}}, and $\Omega$ as the \textcolor{teal}{\textbf{sampling function}}. The approximated Markov chain defined over $\gS$ resulting from $\Psi, \Omega$ is denoted as $\hat{\mathfrak{Z}}(\Psi, \Omega)$, with $p(\hat{\vep}' \mid \vep) = p(\Omega \circ \Psi(\vep))$.
Ideally, the goal is to find the pair of functions $(\Psi, \Omega)$ such that (a) $\rmP_{\vs, \vs'} = \hat{\rmP}_{\vs, \vs'} \, \forall \vs, \vs' \in \gS$, and (b)~$p(\vep' | \vep) = p(\hat{\vep}' | \vep)\, \forall \vep \in \gC$. In the absence of $p(\vep' | \vep)$, we can make use of markov chain samples.
\begin{problem}[$q$-step node feature forecasting]
    Let $\{ \vep_t : t \in [T] \} \sim \mathfrak{Z}_0$ be the sequence of shocks sampled from the original Markov chain, then for each $\vep_t$, a sequence of $q$ future shocks are generated from $\hat{\mathfrak{Z}}(\Psi, \Omega)$ with initial state $\vep_t$, i.e., $\{ \hat{\vep}_{t+k} : k \in [q] \}$ where $\hat{\vep}_{t+k} = \Omega \circ \Psi (\hat{\vep}_{t+k-1}), k>1$. The problem is to design $\Psi, \Omega$ such that $\min\limits_{\Psi, \Omega} \sum_{k\in[q]} \lnorm \sum_{j\in[k]} \vep_{t+j} - \hat{\vep}_{t+j} \rnorm^2 \, \forall t \in [T-q]$. 
\end{problem}
\paragraph{Proposed Model}
Instead of creating a single model to approximate $p(\vep_{t+1}^{\langle \gV \rangle} \mid \vep_{t}^{\langle \gV \rangle})$, we create a model for each node $v \in \gV$ to approximate $p(\vep_{t+1}^{\langle \gU_v \rangle} \mid \vep_{t}^{\langle \gU_v \rangle})$ where $\gU_v$ denotes the neighbours of node $v$ based on the following Assumption.
\begin{assumption}
    For nodes $v, u, u' \in \gV$, if $(v, u) \in \gE$ and $(v, u') \notin \gE$ then $\rho(\vx(v), \vx(u)) \geq 0$ and $\rho(\vx(v), \vx(u')) = 0$, where $\rho(\cdot, \cdot)$ is a correlation measure between two variables.
    \label{assum:edge1}
\end{assumption}
We propose two variants of the \textbf{state function}, one which captures the characteristics of the shock $\Psi_{\texttt{S}}$, and the other which is concerned with the timestamps $\Psi_{\texttt{T}}$ and captures seasonality.
\begin{itemize}
    \item $ \Psi_{\texttt{S}} : \mathbb{R}^{|\gU|d} \rightarrow \{ -1 , 1 \}^{|\gU|d}, \quad \Psi_{\texttt{S}}(\vep^{\langle \gU \rangle}) \triangleq \sign(\vep^{\langle \gU \rangle})$.
    \item $\Psi_{\texttt{T}}: \mathbb{N} \rightarrow \{ 0, 1, \cdots \tau_0-1 \}, \quad  \Psi_{\texttt{T}}( t ) \triangleq t \,{\rm mod}\, \tau_0$, where $\tau_0 \in \mathbb{N}$ is the time period.
\end{itemize}
We also define two variants of the \textbf{sampling function}:  probabilistic $\Omega_{\gN}$, and deterministic $\Omega_{\mu}$ where $\Omega_{\gN}(\vs) \triangleq \gN(\vep'; \bm{\mu}(\vs), \bm{\Sigma}(\vs)), \, \forall \vs \in \gS$, and $\Omega_{\mu}(\vs) \triangleq \bm{\mu}(\vs), \, \forall \vs \in \gS$.

\section{Algorithm}
\label{sec:algorithm}
We name our algorithm \textcolor{teal}{\texttt{mspace}} with a suffix specifying the state and sampling functions. For example, \texttt{mspace-S$\gN$} represents the algorithm with state function $\Psi_{\texttt{S}}$, and sampling function $\Omega_{\gN}$. For each node $v \in \gV$, we approximate $p( \vep_{t+1}^{\langle \gU_v \rangle} \mid \Psi_{\texttt{S}}(\vep_t^{\langle \gU_v \rangle}) = \vs )$ as a Gaussian distribution with mean vector $\bm{\mu}_v(\vs) \in \mathbb{R}^{|\gU_v|d}$ and covariance matrix $\bm{\Sigma}_v(\vs) \in \mathbb{R}^{|\gU_v|d\times |\gU_v|d}$ indexed by the state $\vs \in \{-1,1\}^{|\gU_v|d}$. The parameters $\bm{\mu}_v(\vs), \bm{\Sigma}_v(\vs)$ are learnt through maximum likelihood estimation (\texttt{MLE}). For each node $v \in \gV$, and state $\vs$ we maintain a \textcolor{teal}{\textbf{queue}} $\gQ_v(\vs)$ of maximum size $M$ in which the shocks succeeding a given state $\vs$ are collected. The \texttt{MLE} solution is calculated as $\bm{\mu}_v(\vs) \leftarrow {\rm mean}(\gQ_v(\vs))$, and $\bm{\Sigma}_v(\vs) \leftarrow {\rm covariance}(\gQ_v(\vs))$. 

\begin{wrapfigure}[10]{r}{0.45\columnwidth}
\vspace{-15pt}
    \centering
    \includegraphics[width=0.45\columnwidth]{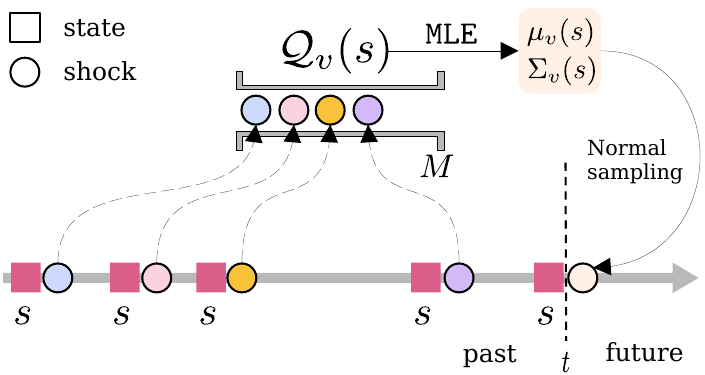}
    \caption{Operation of a queue.}
    \label{fig:markovfaux}
\end{wrapfigure}
The use of a fixed-size queue ensures that the model prioritises recent data over historical data, thereby allowing the system to adapt to prevailing trends. It must be noted that obtaining the parameters $\bm{\mu}_v(\vs), \bm{\Sigma}_v(\vs)$ from historical data \textit{relaxes the Markov assumption} in the original model. The queue $\gQ_v(\vs) = \{ \vep_{\tau}^{\langle \gU_V \rangle}: \Psi_{\texttt{S}}(\vep_{\tau-1}^{\langle \gU_v \rangle}) = \vs,  \tau < t  \}$ contains shocks from the past (see Fig.~\ref{fig:markovfaux}). Therefore, the estimated sample $\vep_{t+1}^{\langle \gU_v \rangle}$ depends on certain shocks from the past which violates the Markov property.

As \texttt{mspace} is an online algorithm, we might encounter unobserved states for which the queue is empty, and therefore cannot employ \texttt{MLE}. To facilitate \textit{inductive inference}, as a state $\vs_t$ is encountered, we find the state $\vs^* \in \gS_v$ which is the closest to $\vs_t$, i.e., $\vs^* \leftarrow \arg \min_{\vs \in \gS_v} \lnorm \vs - \vs_t \rnorm$, where $\gS_v$ is the set of states observed till time $t$.

\vspace{-5pt}
\begin{algorithm}[h!]
\label{alg:mspace_SN}
\caption{\texttt{mspace-S$\gN$}}
\begin{multicols}{2}
\begin{algorithmic}[1]
   \INPUT $\gG = (\gV, \gE, \mX)$, $r\in [0,1)$, $q, M$ 
   \OUTPUT $\hat{\vep}_t(v),$ $\forall v \in \gV, t \in [\lfloor r \cdot T \rfloor, T]$
   
   \STATE $\vep_t \leftarrow \vx_t - \vx_{t-1}, \quad \forall t \in [T]$

    \vspace{5pt}
   \textit{Offline training} $(\mathsf{A})$
   \FOR{$t \in [ \lfloor r \cdot T \rfloor ]$}
   \FOR{$v \in \gV$}
   \STATE $\displaystyle \vs_t \leftarrow \Psi_{\texttt{S}}\left( \vep_t^{\langle \gU_v \rangle}  \right)$
   \STATE $\gS_v \leftarrow \gS_v \cup \left\{ \vs_t  \right\}$
   \STATE $\gQ_v\left(\vs_t\right) \leftarrow$ enqueue $ \vep_{t+1}^{\langle \gU_v \rangle}$
   \ENDFOR
   \ENDFOR
   \vspace{5pt}
   \FOR{$v \in \gV$}
   \STATE   $\bm{\mu}_v(\vs)\leftarrow{\rm mean}(\gQ_v(\vs)), \, \forall s \in \gS_v$
   \STATE   $\bm{\Sigma}_v(\vs)\leftarrow{\rm covariance}(\gQ_v(\vs)), \, \forall s \in \gS_v$
   \ENDFOR

   \textit{Online learning}  $(\mathsf{B})$
   \FOR{$t \in [ \lfloor r \cdot T \rfloor , T-q ] $}
   \FOR{$v \in \gV$}
   \STATE $\displaystyle \vs_t \leftarrow \Psi_{\texttt{S}}\left( \vep_t^{\langle \gU_v \rangle}  \right)$
   \STATE $\displaystyle \vs^* \leftarrow \arg \min_{\vs \in \gS_v} \lnorm \vs - \vs_t \rnorm$
   \STATE $\hat{\vep}_{t+1}^{\langle \gU_v \rangle} \sim \gN( \vep; \bm{\mu}_v(\vs^*), \bm{\Sigma}_v(\vs^*)  )$
   \FOR{$k \in [q]\setminus \{1\}$}
   \STATE $\displaystyle \vs^* \leftarrow \arg \min_{\vs \in \gS_v} \lnorm \vs - \Psi\left( \hat{\vep}_{t+k-1}^{\langle \gU_v \rangle}  \right) \rnorm$
   \STATE $\hat{\vep}_{t+k}^{\langle \gU_v \rangle} \sim \gN( \vep; \bm{\mu}_v(\vs^*), \bm{\Sigma}_v(\vs^*)  )$
   \ENDFOR
   \STATE $\hat{\vep}_{t+k}(v) \leftarrow \hat{\vep}_{t+k}^{\langle \gU_v \rangle}(v), \quad \forall k \in [q]$
   \STATE Update $\gS_v, \gQ_v; \bm{\mu}_v(\vs), \bm{\Sigma}_v(\vs), \forall \vs \in \gS_v$
   \ENDFOR
   \ENDFOR
\end{algorithmic}
\end{multicols}
\vspace{-5pt}
\end{algorithm}
\begin{wrapfigure}[12]{l}{0.4\columnwidth}
\vspace{-7pt}
    \centering
    \includegraphics[width=0.4\columnwidth]{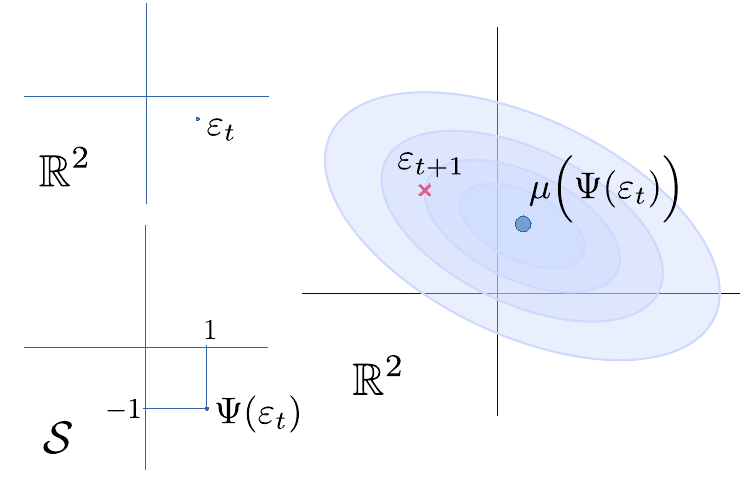}
    
    \caption{Shock Distribution.}
    \label{fig:sampling}
\end{wrapfigure}
\vspace{-10pt}
\paragraph{Example} For the purpose of explaining \texttt{mspace-S$\gN$} we consider an example with two nodes $n=2$, and feature dimension $d=1$. In Fig.~\ref{fig:sampling} we first show the shock vector $\vep_t \in \mathbb{R}^2$. The state of shock $\vep_t$, denoted by $\Psi(\vep_t)$ is marked in $\gS \in \{-1,1\}^2$. Corresponding to this state, we have a Gaussian distribution $\gN(\vep; \bm{\mu}(\Psi(\vep_t)) , \bm{\Sigma}(\Psi(\vep_t)))$ depicted as an ellipse. The next shock $\vep_{t+1}$ is sampled from this distribution. This distribution is updated as we gather more information over time. The volume of the Gaussian density in a quadrant is equal to the probability of the next shock's state being in that quadrant, i.e., the transition kernel $\hat{\rmP}_{\vs, \vs'} = \int_{\vs' \odot \vep \succ \mathbf{0}} \gN \left(\vep; \bm{\mu}(\vs), \bm{\Sigma}(\vs) \right) \, d\vep.$

\section{Related Works}
\label{sec:related}

\paragraph{Correlated Time Series Forecasting}
A set of $n$ time series data denoted as $\vx_t(v), \forall v \in [n], t \in [T]$ is assumed to be exhibit spatio-temporal correlation \cite{wu2021autocts, lai_lightcts_2023}. The correlations can then be discerned from the observations to perform forecasting.
The correlated time series (CTS) data can be viewed as a temporal graph $\gG = (\mX_t, \mA)$, with $\mX_t \triangleq \begin{bmatrix}
    \vx_t(v)
\end{bmatrix}_{v \in [n]}$ where the \textit{spatial correlation} between $\vx_t(u)$ and $ \vx_t(u)$ is quantified as the edge weight $\rmA_{u,v}$, and $\rmA_{u,u}$ signifies the \textit{temporal correlation} within $\vx_t(u)$.
The architecture of existing CTS forecasting methods consist of spatial (S) and temporal (T) operators. The S-operator can be a graph convolutional network (GCN) \cite{kipf_semi-supervised_2017} or a Transformer~\cite{vaswani_attention_2017}. As for the T-operator, convolutional neural network (CNN), recurrent neural network (RNN) \cite{chung_empirical_2014} or Transformer~\cite{zeng2023transformers} can be used.

\paragraph{Temporal Graph Neural Network}
A Graph Neural Network (GNN) is a type of neural network that operates on graph-structured data, such as social networks, citation networks, and molecular graphs. GNNs aim to learn node and graph representations by aggregating and transforming information from neighbouring nodes and edges \cite{wu_comprehensive_2021}. GNNs have shown promising results in various applications, such as node classification, link prediction, and graph classification.

\begin{wrapfigure}[8]{l}{0.4\columnwidth}
\vspace{-10pt}
    \centering
    \includegraphics[width=0.4\columnwidth]{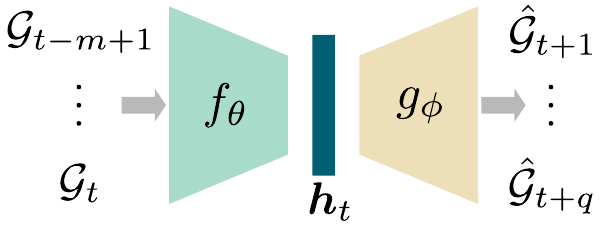}
    \caption{TGNN architecture.}
    \label{fig:tgnn-arch}
\end{wrapfigure}
Temporal GNN (TGNN) \cite{longa_graph_2023} is an extension of GNNs which operates on temporal graphs $\gG_t = (\mX_t, \mA_t)$ where $\mX_t$ denotes the node features, and $\mA_t$ is the evolving adjacency matrix. The TGNN architecture can be viewed as a neural network encoder-decoder pair $(f_{\theta}, g_{\phi})$ (Fig.~\ref{fig:tgnn-arch}). A sequence of $m$ past graph snapshots is first encoded into an embedding $\vh_t = f_{\theta} \big( \{ \gG_{t-m+1}, \cdots \gG_t \}  \big)$, and then a sequence of $q$ future graph snapshots is estimated by the decoder as $\{ \hat{\gG}_{t+1}, \cdots \hat{\gG}_{t+q} \} = g_{\phi}(\vh_t)$. The parameters $(\theta, \phi)$ are trained to minimize the difference between the true sequence $\{ \gG_{t+1}, \cdots \gG_{t+q} \}$ and the predicted sequence $\{ \hat{\gG}_{t+1}, \cdots \hat{\gG}_{t+q} \}$. In node feature forecasting, the objective is to minimize the difference between the node feature matrices $\{ \hat{\mX}_{t+1}, \cdots \hat{\mX}_{t+q} \}$ and $\{ {\mX}_{t+1}, \cdots {\mX}_{t+q} \}$, while in temporal link prediction, the goal is to minimize the difference between the graph structures $\{ \hat{\mA}_{t+1}, \cdots \hat{\mA}_{t+q} \}$ and $\{ {\mA}_{t+1}, \cdots {\mA}_{t+q} \}$.

There are two main approaches to implementing TGNNs: model evolution and embedding evolution. 
In \textit{model evolution}, the parameters of a static GNN are updated over time to capture the temporal dynamics of the graph, e.g., \texttt{EvolveGCN}~\cite{pareja_evolvegcn_2020}. 
In \textit{embedding evolution}, the GNN parameters remain fixed, and the node and edge embeddings are updated over time to learn the evolving graph structure and node features \cite{li_diffusion_2017, zhao_t-gcn_2019, micheli_discrete-time_2022, wu_graph_2019, fang_spatial-temporal_2021, liu_graph-based_2023}. The TGNN methods are described in Appendix~\ref{app:baselines}.

\begin{wrapfigure}[6]{r}{0.22\columnwidth}
\vspace{-25pt}
    \centering
    \includegraphics[width=0.22\columnwidth]{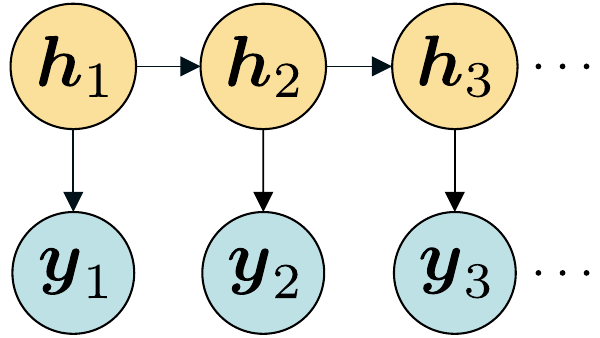}
    \caption{LDS.}
    \label{fig:lds}
\end{wrapfigure}
\paragraph{Linear Dynamical System}
In a linear dynamical system (LDS) \cite{barber2012bayesian}, the observation $\vy_t$ is modelled as a linear function of the latent vector $\vh_t$. The \textit{transition model} dictates the temporal evolution of the latent state $\vh_t = \rmA_t \vh_{t-1} + \bm{\eta}_t$, with $\bm{\eta}_t \sim \gN(\bm{\eta}; \bar{\vh}_t, \bm{\Sigma}_t)$, and the \textit{emission model} defines the relation between the observation and the latent state $\vy_t = \rmB_t \vh_t + \bm{\zeta}_t, \bm{\zeta}_t \sim \gN(\bm{\zeta}_t; \bar{\vy}_t, \bm{\Sigma}_t' )$. The LDS describes a first-order Markov model $p((\vy_t, \vh_t)_{t=1}^{T}) = p(\vh_1) p (\vy_1 \mid \vh_1) \prod_{t=2}^{T} p(\vh_t \mid \vh_{t-1}) p(\vy_t \mid \vh_t)$, where $p(\vh_t \mid \vh_{t-1}) = \gN(\vh_t; \rmA_t \vh_{t-1} + \bar{\vh}_t, \bm{\Sigma})$, and $p(\vy-t \mid \vh_t) = \gN(\vy_t; \rmB_t \vh_t + \bar{\vy}_t, \bm{\Sigma}_t')$. Therefore a LDS is defined by the parameters $(\rmA_t, \rmB_t, \bm{\Sigma}_t,  \bm{\Sigma}_t', \bar{\vh}_t, \bar{\vy}_t)$ and initial state $\vh_1$. In simplified models the parameters can be considered time-invariant. In the literature, LDS is also referred to as Kalman filter \cite{welch_introduction_1997}, or Gaussian state space model \cite{eleftheriadis2017identification}.

\paragraph{Gaussian Mixture Model}
A Gaussian mixture model (GMM) \cite{mclachlan_finite_2019} is a weighted sum of multiple Gaussian distribution components. An $M$-component GMM is defined as:
\begin{align}
   \textstyle p\left(\vx \right) = \sum_{i \in [M]} w_i \cdot \gN(\vx \,;\, \bm{\mu}_i, \bm{\Sigma}_i), \quad \sum_{i \in [M]} w_i = 1.
\end{align}
where $w_i$ denotes the probability of the sample belonging to the $i^{\rm th}$ component.
The parameters of the GMM  $ \{ (w_i, \bm{\mu}_i, \bm{\Sigma}_i) : \forall i \in [M] \}$  are learnt through \textit{expectation-maximisation} (EM) algorithm \cite[][Ch. 20, Sec. 3]{barber2012bayesian} , \textit{maximum a posteriori} (MAP) estimation, or \textit{maximum likelihood} estimation (MLE)  \cite[][Def. 8.30]{barber2012bayesian}.

\section{Results}
\label{sec:result}
\paragraph{Baselines} We compare the performance of \texttt{mspace} with the following recent TGNN baselines: \texttt{DCRNN}~\cite{li_diffusion_2017}, \texttt{TGCN}~\cite{zhao_t-gcn_2019}, \texttt{EGCN-H}~\cite{pareja_evolvegcn_2020}, \texttt{EGCN-O}~\cite{pareja_evolvegcn_2020}, \texttt{DynGESN}~\cite{micheli_discrete-time_2022}, \texttt{GWNet}~\cite{wu_graph_2019}, \texttt{STGODE}~\cite{fang_spatial-temporal_2021}, \texttt{FOGS}~\cite{rao_fogs_2022}, \texttt{GRAM-ODE}~\cite{liu_graph-based_2023}, \texttt{LightCTS}~\cite{lai_lightcts_2023}.
Additionally, we also evaluate the performance of classic autoregressive method \texttt{ARIMA}~\cite{box_distribution_1970}, and the famous LDS, the Kalman filter~\cite{welch_introduction_1997}. We introduce two variants of the Kalman filter: \texttt{Kalman-$\vx$}, which considers the node features as observations, and \texttt{Kalman-$\vep$}, which operates on the shocks. For more details, please see Appendix~\ref{app:eval}.

\paragraph{Datasets} We use the datasets \texttt{tennis}, \texttt{wikimath}, \texttt{pedalme}, and \texttt{cpox} for single-step forecasting as they are relatively smaller in terms of number of nodes $n$ and samples $T$. For multi-step forecasting we use the larger traffic datasets \texttt{PEMS03}, \texttt{PEMS04}, \texttt{PEMS07}, \texttt{PEMS08}, \texttt{PEMSBAY}, and \texttt{METRLA}. The datasets \texttt{PEMS03/04/07/08} report traffic flow, while \texttt{PEMSBAY}, and \texttt{METRLA} report traffic speed. 
\begin{table}[h!]
    \centering
    \small
    \resizebox{\columnwidth}{!}{%
    \begin{tabular}{ccccccccccc}
    \toprule
         & \texttt{tennis} & \texttt{wikimath} & \texttt{pedalme} & \texttt{cpox} & \texttt{PEMS03} & \texttt{PEMS04} & \texttt{PEMS07} & \texttt{PEMS08} & \texttt{PEMSBAY} &\texttt{METRLA} \\
         \midrule
         $n$ & 1000 & 1068  & 15  & 20 & 358 & 307  & 883  & 170 & 325 & 207\\
         $T$ & 120 & 731  & 35 & 520 & 26K  & 17K  & 28K & 18K & 52K  & 34K\\
         \bottomrule
    \end{tabular}}
\end{table}

\vspace{-10pt}
\paragraph{Single-step Forecasting}
In Table~\ref{result:single}, we have single-step forecasting RMSE results for various models with training ratio $0.9$. The best result is marked \textbf{bold}, and the second-best is \underline{underlined}.

The models \texttt{DCRNN}, \texttt{ECGN}, and \texttt{TGCN} exhibit similar performance across all datasets, which may be attributed to their use of convolutional GNNs for spatial encoding. 
\texttt{Kalman-$\vep$} performs poorly across all datasets, indicating challenges in establishing a state-space relation for shocks. In contrast, \texttt{Kalman-$\vx$} performs notably well, outperforming other methods on \texttt{tennis} and \texttt{pedalme} datasets. 

\begin{wraptable}[14]{r}{0.5\textwidth}
\vspace{-15pt}
\centering
\caption{Single-step forecasting RMSE, \tiny{($M=20$)}.}
\resizebox{0.5\columnwidth}{!}{%
\begin{tabular}{@{}lcccccc@{}}
\toprule
\texttt{}& \texttt{tennis} & \texttt{wikimath} & \texttt{pedalme} & \texttt{cpox}  \\
\midrule
\texttt{DynGESN} & 150.41 & 906.85 & 1.25 & 0.95 \\
\textcolor{purple}{\texttt{DCRNN}} & 155.43 & 1108.87 & 1.21 & 1.05 \\
\textcolor{purple}{\texttt{EGCN-H}} & 155.44 & 1118.55 & 1.19 & 1.06 \\
\textcolor{purple}{\texttt{EGCN-O}} & 155.43 & 1137.68 & 1.2 & 1.07 \\
\textcolor{purple}{\texttt{TGCN}} & 155.43 & 1109.99 & 1.22 & 1.04 \\
\texttt{LightCTS} & 199.04 & \underline{319.47}  & 1.58  & \underline{0.84} \\
\texttt{GRAM-ODE} & 206.50 & 484.90 & 0.99 & 0.98 \\
\texttt{STGODE} & 172.16 & \textbf{279.87} & 0.91 & \textbf{0.83} \\
\midrule
\texttt{mspace-S$\mu$} & \underline{105.32} & 563.69 & \underline{0.86} & 1.58 \\
\texttt{mspace-S$\gN$} & 117.23 & 725.42 & 1.35 & 2.11 \\
\midrule
\texttt{Kalman-$\vx$} & \textbf{73.01} & 792.6 & \textbf{0.66} & 1.42\\
\texttt{Kalman-$\vep$} & 7.5K & 64K & 1.79 & 10.2\\
\bottomrule
\end{tabular}
}
\label{result:single}
\end{wraptable}
For \texttt{wikimath} and \texttt{cpox}, \texttt{STGODE} shows the best performance, followed by \texttt{LightCTS} and \texttt{GRAM-ODE}, potentially due to a higher number of training samples. The light-weight methods such as \texttt{Kalman-$\vx$} and \texttt{mspace} exploit the unavailability of enough training samples and perform better on \texttt{tennis} and \texttt{pedalme}.

We notice that \texttt{mspace-S$\mu$} achieves a balanced performance between TGNN models and \texttt{Kalman-$\vx$} across all datasets except for \texttt{cpox}. The subpar performance of \texttt{mspace-S*} on the \texttt{cpox} dataset may be attributed to the seasonal trend, given that it represents the weekly count of chickenpox cases.
\begin{wrapfigure}[12]{l}{0.58\columnwidth}
\vspace{-15pt}
    \centering
    \subfigure[Training set.]{\includegraphics[width=0.28\columnwidth]{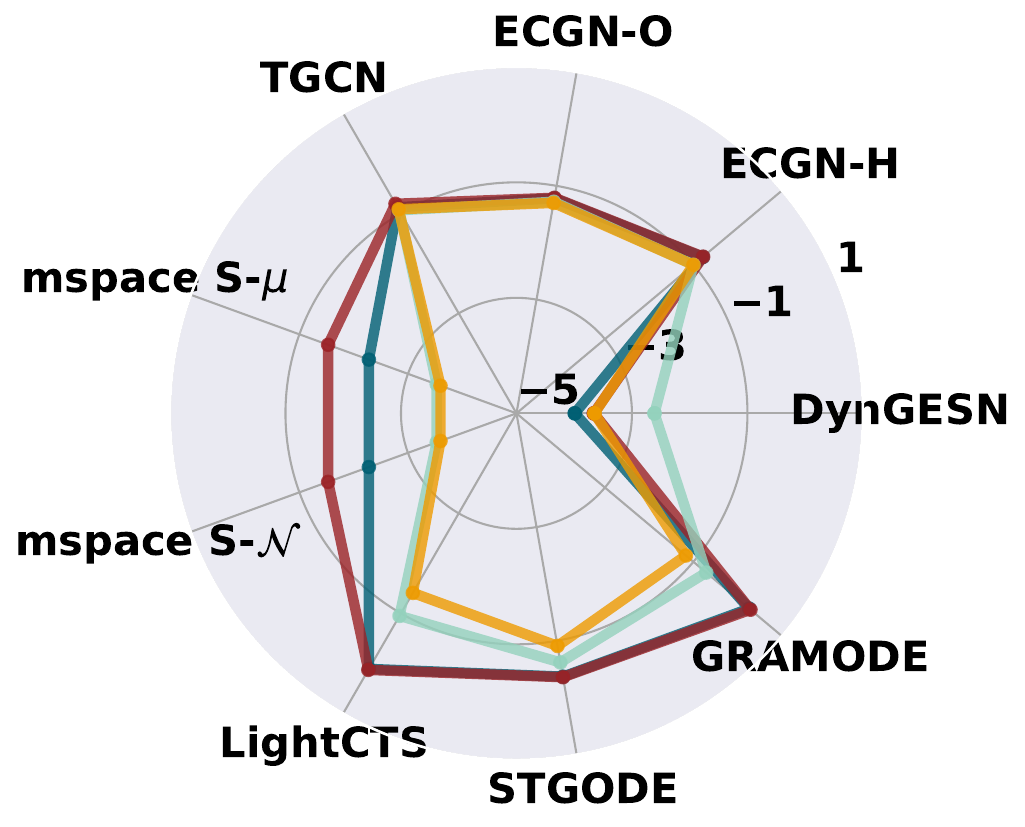}\label{fig:traingpu}}
    \subfigure[Test set.]{\includegraphics[width=0.28\columnwidth]{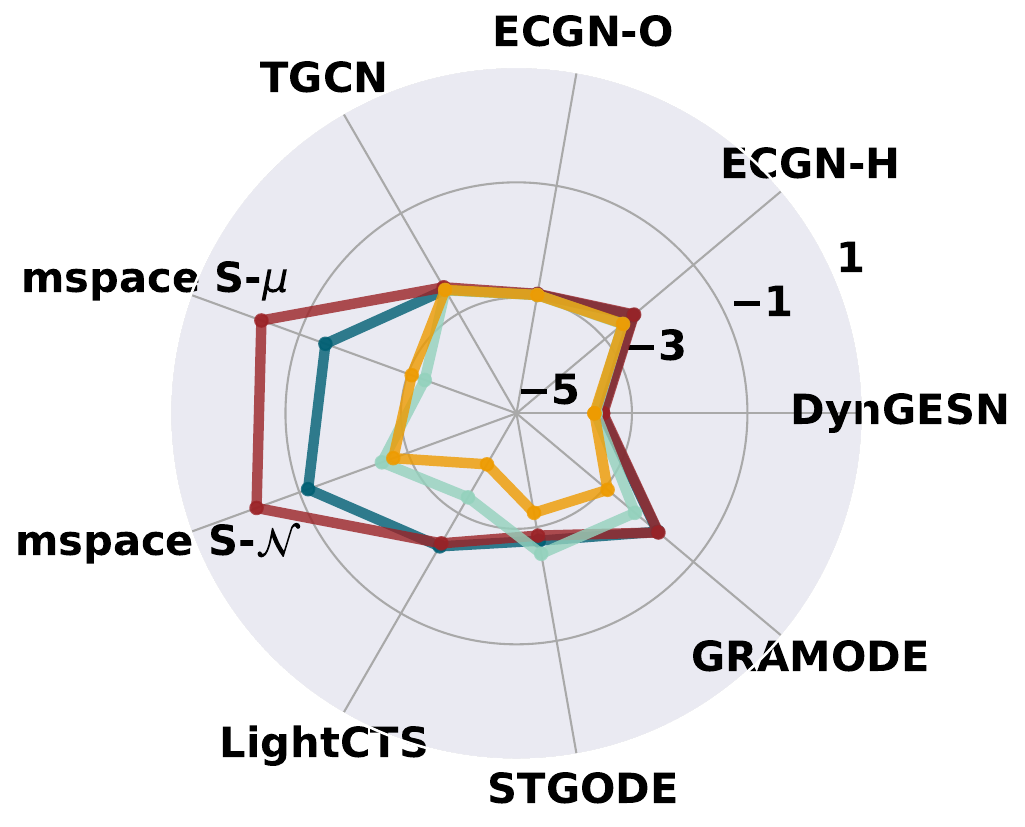} \label{fig:testgpu}}
    \vspace{-3pt}
    \caption{Average execution time per time-step for the datasets: 
    \textcolor[HTML]{005F73}{\texttt{tennis}}, 
    \textcolor[HTML]{9b2226}{\texttt{wikimath}},
    \textcolor[HTML]{94d2bd}{\texttt{pedalme}}, and
    \textcolor[HTML]{ee9b00}{\texttt{cpox}}.
    }
    \label{fig:radarplots}
\end{wrapfigure}

In Fig.~\ref{fig:radarplots}, we report the \textcolor{teal}{\textbf{average execution time per time-step}} ($\log_{10}$ scale) for the training as well as the test set. The TGNN models are run on the GPU, while \texttt{mspace} ran on the CPU\footnote{\textbf{GPU}: NVIDIA GeForce RTX™ 3060. \textbf{CPU}: 12th Gen Intel® Core™ i7-12700 × 20; 16.0 GiB.}.

For most TGNN models the execution time is similar across all datasets, whereas for \texttt{mspace-S*}, the execution time grows with the number of nodes as expected (see Algorithm~\ref{alg:mspace_SN}). The execution time of \texttt{mspace} can be reduced by parallelization.

The execution time of \texttt{mspace-S*} on the training set is lower than all the baselines except \texttt{DynGESN}. On the contrary, \texttt{mspace-S*} appears to be the slowest on the test set. This is because the baseline models only perform evaluation, while \texttt{mspace-S*} being an online algorithm also updates its parameters at every time-step.

\paragraph{Multi-step Forecasting} For the TGNN models, we use the 
$6:2:2$ train-validation-test chronological split in line with the experiments reported by the baselines. For \texttt{mspace} and \texttt{Kalman}, the train-test chronological split is $8:2$, as they do not require a validation set. In Table~\ref{result:multi_rmse_mae} we report the multi-step $q = 12$ forecasting RMSE, and mean absolute error (MAE) on the test set.

\begin{table}[h!]
\centering
\caption{Multi-step forecasting RMSE and MAE, \tiny{($M=20$)}.}
\resizebox{\columnwidth}{!}{%
\begin{tabular}{@{}lcccccccccccc@{}}
\toprule
\multirow{2}{*}{\texttt{}} & \multicolumn{2}{c}{\texttt{PEMS03}} & \multicolumn{2}{c}{\texttt{PEMS04}} & \multicolumn{2}{c}{\texttt{PEMS07}} & \multicolumn{2}{c}{\texttt{PEMS08}} & \multicolumn{2}{c}{\texttt{PEMSBAY}} & \multicolumn{2}{c}{\texttt{METRLA}}\\
\cmidrule(lr){2-3} \cmidrule(lr){4-5} \cmidrule(lr){6-7} \cmidrule(lr){8-9} \cmidrule(lr){10-11} \cmidrule(lr){12-13}
& \footnotesize{RMSE} & \footnotesize{MAE} & \footnotesize{RMSE} & \footnotesize{MAE} & \footnotesize{RMSE} & \footnotesize{MAE} & \footnotesize{RMSE} & \footnotesize{MAE} & \footnotesize{RMSE} & \footnotesize{MAE} & \footnotesize{RMSE} &\footnotesize{MAE} \\
\midrule
\texttt{GRAM-ODE} & \underline{26.40} & \underline{15.72} & 31.05 & 19.55 & \underline{34.42} & \underline{21.75} & 25.17 & 16.05 & \textbf{3.34} & \textbf{1.67}& \textbf{6.64} & \underline{3.44}\\
\texttt{STGODE} & 27.84 & 16.50 & 32.82 & 20.84 & 37.54 & 22.99 & 25.97 & 16.81 & 4.89 & 2.30& 7.37 & 3.75\\
\texttt{DCRNN} & 30.31 & 18.18 & 38.12 & 24.70 & 38.58 & 25.30 & 27.83 & 17.86 & 4.74 & 2.07& 7.60 & 3.60\\
\texttt{ARIMA} & 47.59 & 33.51 & 48.80 & 33.73 & 59.27 & 38.17 & 44.32 & 31.09 & 6.50 & 3.38& 13.23 & 6.90\\
\texttt{GWNet} & 32.94 & 19.85 & 39.70 & 25.45 & 42.78 & 26.85 & 31.05 & 19.13 & 4.85 & 1.95& 7.81 & 3.53\\
\texttt{LightCTS} & - & - & 30.14 & 18.79 & - & - & 23.49 & 14.63 & 4.32 & \underline{1.89}& \underline{7.21} & \textbf{3.42}\\
\texttt{FOGS} & \textbf{24.09} & \textbf{15.06} & 31.33 & 19.35 & \textbf{33.96} & \textbf{20.62} & 24.09 & 14.92 & - & -& - & -\\
\midrule
\texttt{mspace-S$\mu$} & 36.51 & 26.43 & \underline{18.85} & \underline{13.25} & 54.39 & 38.83 & \underline{14.61} & \underline{10.36} & 5.14 & 3.04& 10.64 & 7.13\\
\texttt{mspace-T$\mu$} & 26.53 & 18.31 & \textbf{13.49} & \textbf{8.70} & 38.63 & 24.02 & \textbf{10.35} & \textbf{6.33} & \underline{3.65} & 2.07& 9.22 & 6.14\\
\midrule
\texttt{Kalman-$\vx$} & 45.38 & 33.21 & 33.75 & 15.26 & 64.95 & 48.01 & 27.40 & 12.40 & 5.71 & 3.87& 13.97 & 10.7\\
\texttt{Kalman-$\vep$} & 749 & 619 & 818 & 709 & 2313 & 1988 & 460 & 399 & 50.2 & 43.1& 127.1 & 109\\
\bottomrule
\end{tabular}}
\label{result:multi_rmse_mae}
\end{table}

\begin{wrapfigure}[16]{r}{0.6\columnwidth}
\vspace{-10pt}
    \centering
    \includegraphics[width=0.59\columnwidth]{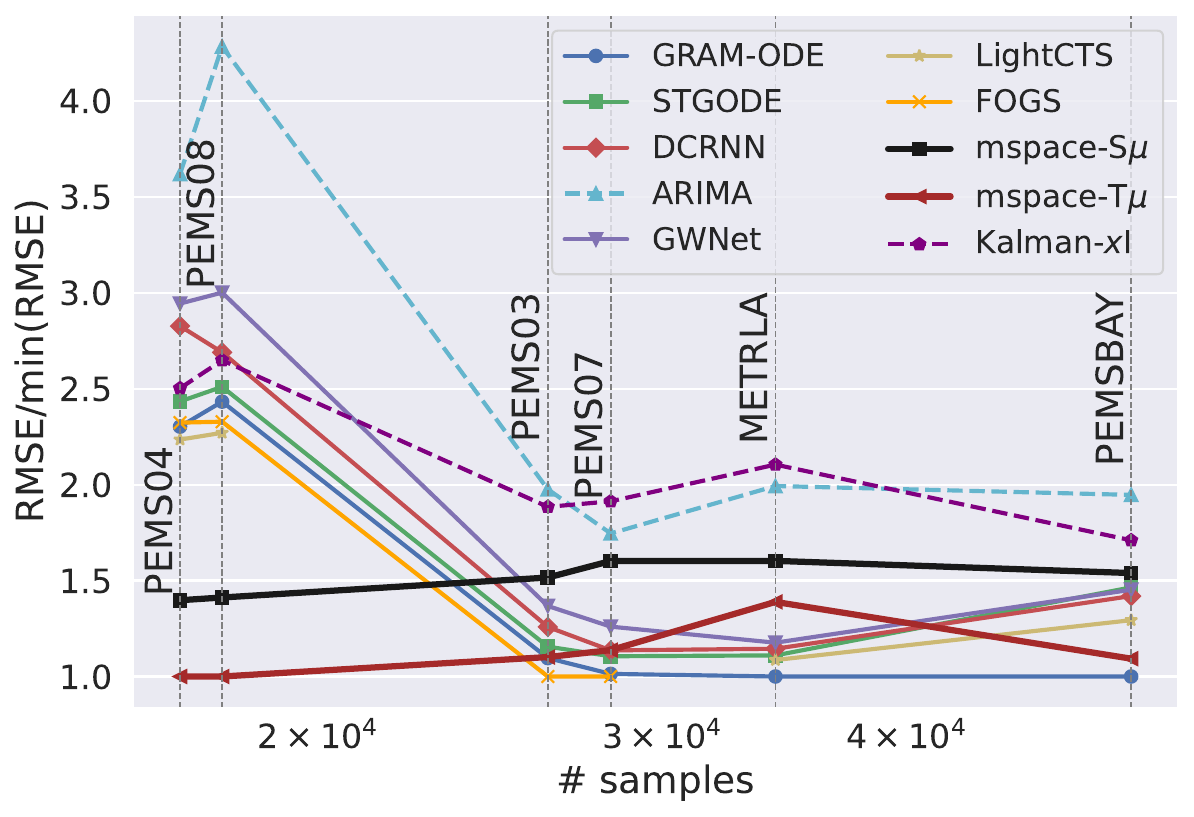} 
    \vspace{-10pt}
    \caption{Multi-step forecasting normalized RMSE.}
    \label{fig:rmse_trend}
\end{wrapfigure}
Figure~\ref{fig:rmse_trend} shows the RMSE of the models, normalized to the minimum RMSE for the dataset, plotted against the number of available training samples. We observe that \texttt{mspace-T$\mu$} performs competitively across all datasets with the exception of \texttt{METRLA}. Moreover, \texttt{mspace-T$\mu$} demonstrates superior performance compared to \texttt{mspace-S$\mu$} across all the datasets which suggests that temporal auto-correlation dominate spatial cross-correlation among the nodes. 

TGNN models, being neural networks, rely heavily on the amount of training data available. With the relatively small number of training samples in \texttt{PEMS04} and \texttt{PEMS08}, these models underperform. In contrast, both variants of \texttt{mspace} significantly surpass the state-of-the-art (SoTA), demonstrating their effectiveness with smaller datasets. Furthermore, \texttt{mspace-T$\mu$} ranks as the second-best model for the largest dataset, \texttt{PEMSBAY}. Therefore, we conclude that \texttt{mspace} offers consistent performance across datasets with varying sample sizes, and it is particularly advantageous when training data is limited.

\begin{wrapfigure}[12]{l}{0.5\columnwidth}
\vspace{-15pt}
    \centering
    \subfigure{\includegraphics[height=0.18\columnwidth]{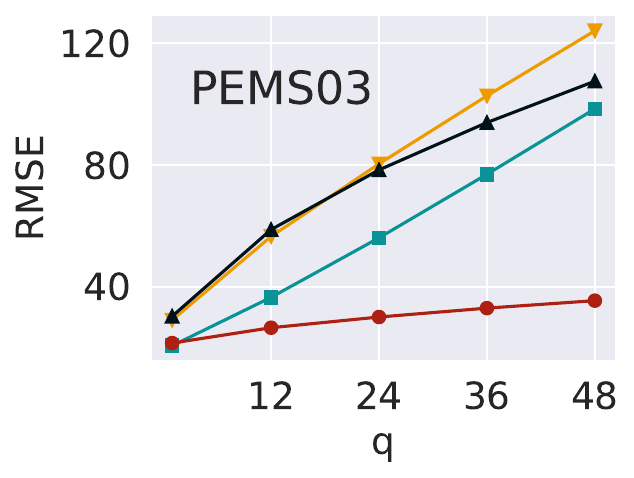}\label{fig:rmse03}}
    \subfigure{\includegraphics[height=0.18\columnwidth]{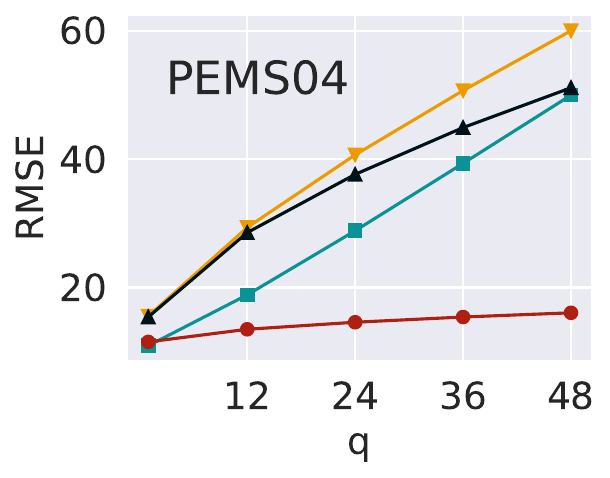}\label{fig:rmse04}}
    \caption{Scaling of error with the number of forecast steps $q$ using different \texttt{mspace} variants:
    \textcolor[HTML]{ee9b00}{$\blacktriangledown$~\texttt{mspace-S$\gN$}},
    \textcolor[HTML]{001219}{$\blacktriangle$~\texttt{mspace-T$\gN$}},
    \textcolor[HTML]{0a9396}{\tiny{$\blacksquare$}}~\textcolor[HTML]{0a9396}{\texttt{mspace-S$\mu$}},
    \textcolor[HTML]{ae2012}{$\bullet$~\texttt{mspace-T$\mu$}}.}
    \label{fig:bound_rmse}
    \label{fig:rmse_trend}
\end{wrapfigure}
In Fig.~\ref{fig:bound_rmse}, we illustrate how the RMSE scales with the number of forecast steps $q$ for different variants of \texttt{mspace}. The scaling law for \texttt{mspace-S*} appears linear, while for \texttt{mspace-T*}, it appears sublinear. We investigate this theoretically in Appendix~\ref{app:bound}. 

The TGNN baselines conduct forecasting for $q=12$ future steps, relying on the node features from the preceding $12$ time steps as input. In contrast, \texttt{mspace} requires only the node features from the two previous time steps. Additionally, \texttt{mspace} has the flexibility to forecast for any $q \in \mathbb{N}$, whereas TGNN models are limited to forecasting up to the specified number of steps they were trained on. Moreover, \texttt{mspace} offers both probabilistic ($\Omega_{\gN}$) and deterministic ($\Omega_{\mu}$) forecasts, a capability absent in the baselines. Finally, while TGNN baselines exploit the edge weights information for predictions, \texttt{mspace} achieves comparable results using only the graph structure.

\begin{wrapfigure}[20]{r}{0.5\columnwidth}
\resizebox{0.45\columnwidth}{!}{%
    \begin{minipage}{0.5\columnwidth}
    \vspace{-25pt}
      \begin{algorithm}[H]
        \caption{Synthetic Data Generation}
        \label{alg:syn_data}
        \begin{algorithmic}[1]
        \INPUT $\gG = (\gV, \gE)$, $d$, $\mu_{\min}$, $\mu_{\max}$, $\sigma^2_{\min}$, $\sigma^2_{\max}$, $\mu_0$, $\sigma^2_0$, $\tau$, $\mu_{\tau}$, $\sigma^2_{\tau}$. 
          \STATE $\vep_0 \sim {\rm Bernoulli}^{nd} \left( \frac{1}{2} \right) $
        \STATE $\vx_0 \sim \gN(\vx; \mu_0 \bm{1}, \sigma^2_0 \mI)$
        \FOR{ $t \in [T]$}
        \STATE $\vs_{t-1} \leftarrow \Psi_{\texttt{S}}(\vep_{t-1})$
        \IF{ $\vs_{t-1} \notin \gS $ }
        \STATE $\gS \leftarrow \gS \cup \{ \vs_{t-1} \}$  
        \STATE $\bm{\mu}(\vs_{t-1}) \sim {\rm Uniform}^{nd}(\mu_{\min}, \mu_{\max}) $
        \STATE $\Tilde{\bm{\Sigma}} \sim {\rm Uniform}^{nd\times nd}(\sigma^2_{\min}, \sigma^2_{\max}) $
        \STATE $\hat{\bm{\Sigma}} \leftarrow \frac{1}{2}\left( \Tilde{\bm{\Sigma}} + \Tilde{\bm{\Sigma}}^\top \right)$
        \STATE $\bm{\Sigma}(\vs_{t-1}) \leftarrow \hat{\bm{\Sigma}} \odot ( \mA \otimes \bm{1}_{d\times d} )$ 
        \ENDIF
        \STATE $\vep_t \sim \gN(\vep; \bm{\mu}(\vs_{t-1}), \bm{\Sigma}(\vs_{t-1}))$
        \STATE $\vx_t = \vx_{t-1} + \vep_t $
        \ENDFOR
        \IF{$\tau > 0$}
        \STATE $\vy_t \sim \gN(\vy; \mu_\tau \bm{1}, \sigma^2_{\tau} \mI ) \quad \forall t \in [\tau]$
        \STATE $\vx_t \leftarrow \vx_t + \vy_{t \,{\rm mod}\, \tau} \quad \forall t \in [T]$
        \ENDIF
        \end{algorithmic}
      \end{algorithm}
    \end{minipage}
    }
  \end{wrapfigure}
\paragraph{Synthetic Datasets}  In traffic datasets, seasonality outweighs cross-nodal correlation, making it challenging to assess the efficacy of a TGL algorithms on node feature forecasting task. To address this gap, we propose a synthetic dataset generation technique in line with the design idea of \texttt{mspace} which is described in Algorithm~\ref{alg:syn_data}.

In steps 8-10, we construct a covariance matrix adhering to Assumption~\ref{assum:edge1}, and in step 12, we sample the shock from a multivariate normal distribution. In steps 16-17, a random signal $\vy$ is tiled with period $\tau$ and added to the node features to introduce seasonality into the dataset.

The synthetic datasets can be utilized to analyze how various factors such as graph structure, periodicity, connectivity, sample size, and other parameters affect error metrics. The experiments on number of samples and data periodicity using synthetic datasets is deferred to Appendix~\ref{app:synthetic}.

\section{Analysis}
\label{sec:analysis}
\paragraph{Error Bounds}
We present the error bounds of \texttt{mspace} in the following theorem, a detailed proof of which can be found in Appendix~\ref{app:bound}.
\begin{theorem}
    The RMSE of \texttt{mspace} for a $q$-step node feature forecast is upper bounded as
    ${\rm RMSE}(q) \leq \sqrt{ \alpha q^2 + (3\alpha + \beta)q + \beta  }$, where $\alpha, \beta \in \mathbb{R}^+$ are constants that depend on the data, as well as the variant of the \texttt{mspace} algorithm.
    \label{thm:rmse}
\end{theorem}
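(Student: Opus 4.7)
The plan is to translate the $q$-step feature error into a cumulative shock error and then control the diagonal and off-diagonal contributions separately. Since $\hat{\vx}_{t+k} = \vx_t + \sum_{j=1}^{k}\hat{\vep}_{t+j}$ and $\vx_{t+k} = \vx_t + \sum_{j=1}^{k}\vep_{t+j}$, I would set $\vdelta_{t+j} \triangleq \hat{\vep}_{t+j} - \vep_{t+j}$ and write
\begin{equation*}
\mathbb{E}\,\bigl\lVert\hat{\vx}_{t+q} - \vx_{t+q}\bigr\rVert^{2}
= \sum_{j=1}^{q}\mathbb{E}\,\lVert\vdelta_{t+j}\rVert^{2}
+ 2\!\!\sum_{1\le i<j\le q}\!\!\mathbb{E}\,\langle\vdelta_{t+i},\vdelta_{t+j}\rangle.
\end{equation*}
This already has the $q$ versus $q^{2}$ split that appears in the theorem, so the whole proof reduces to getting a uniform bound on each diagonal term and each cross term.

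For the diagonal terms I would bound $\mathbb{E}\lVert\vdelta_{t+j}\rVert^{2}$ by decomposing it into (i) the Gaussian sampling variance $\mathrm{tr}(\bm{\Sigma}_{v}(\vs^{*}))$ (identically zero in the $\Omega_{\mu}$ variants), (ii) the bias $\lVert\bm{\mu}_{v}(\vs^{*}) - \mathbb{E}[\vep_{t+j}\mid\vep_{t+j-1}]\rVert^{2}$ from using the argmin-closest state $\vs^{*}\in\gS_{v}$ rather than the true state, and (iii) the queue-induced estimation error for $\bm{\mu}_{v}$, $\bm{\Sigma}_{v}$, which is controlled by the finite queue length $M$ via a standard MLE concentration argument. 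Because the state space $\{-1,+1\}^{|\gU_{v}|d}$ is finite and the shocks have bounded second moment, each contribution is uniformly bounded by a constant that I would call $\beta$.

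For the cross terms I would apply Cauchy--Schwarz followed by a uniform bound on the conditional correlation $|\mathbb{E}\langle\vdelta_{t+i},\vdelta_{t+j}\rangle|\le\alpha$. This is where the iterative nature of \texttt{mspace} matters: conditioning on the observed history, $\vdelta_{t+j}$ for $j>1$ is driven by the state $\Psi_{\texttt{S}}(\hat{\vep}_{t+j-1})$ which itself depends on $\vdelta_{t+j-1}$; the bound $\alpha$ therefore must absorb both the intrinsic Markov cross-correlation of the true shocks and the feedback introduced by the sampling function. Summing $q(q-1)/2$ such cross terms gives the $\alpha q^{2}$ leading term, and combining with the $\beta q$ diagonal term yields $\alpha q^{2} + (3\alpha+\beta)q + \beta$ once the residual $\mathcal{O}(q)$ and constant terms (including the initial-step contribution and the $2\alpha q$ slack from reindexing the double sum) are collected. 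Taking the square root then gives the claimed RMSE bound, and the $\mathcal{O}(q)$ asymptotic is immediate.

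The main obstacle is step two of the cross-term analysis: the queue mechanism explicitly \emph{relaxes} the Markov property (as flagged in Section~\ref{sec:algorithm}), so standard Markov mixing arguments do not apply directly. I would circumvent this by conditioning on the sigma-algebra generated by the last $M$ observations, using the fact that $\bm{\mu}_{v}(\vs)$ and $\bm{\Sigma}_{v}(\vs)$ are measurable with respect to this filtration and that the state set is finite; a Lipschitz argument on the map $\vs\mapsto(\bm{\mu}_{v}(\vs),\bm{\Sigma}_{v}(\vs))$ then controls the nearest-state substitution error and certifies the uniform $\alpha$. With $\alpha$ and $\beta$ in hand the bound follows by algebraic collection.
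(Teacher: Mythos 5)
Your proposal arrives at the same polynomial form, but by a genuinely different decomposition, and in the process it both misses a simplification the paper exploits and manufactures an obstacle the paper never faces. The paper's proof leans on the conditionally Gaussian structure: the cumulative error $\sum_{j\in[i]}\hat{\vep}_{t+j}(v)-\vep_{t+j}(v)$ is Gaussian with mean equal to the sum of per-step biases and covariance equal to the sum of per-step covariances (the sampling noises being independent across steps given the states), so $\E\lnorm\cdot\rnorm^2 = \lnorm\bm{\mu}\rnorm^2+{\rm tr}(\bm{\Sigma})$ splits the error cleanly: the $\alpha q^2$ term comes solely from the coherent accumulation of bias (handled by the triangle inequality and a worst-case supremum), and the $\beta q$ term solely from the linearly accumulating trace. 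No control of temporal cross-correlations of the noise is ever needed. Your diagonal/off-diagonal split keeps bias and noise together inside $\bm{\delta}_{t+j}$, which is why you end up needing a uniform bound on $\E\langle\bm{\delta}_{t+i},\bm{\delta}_{t+j}\rangle$ and then propose a filtration-plus-Lipschitz argument to cope with the non-Markov queue feedback; in fact plain Cauchy--Schwarz for random vectors already yields $|\E\langle\bm{\delta}_{t+i},\bm{\delta}_{t+j}\rangle|\le\sup_j\E\lnorm\bm{\delta}_{t+j}\rnorm^2$ with no mixing argument, which suffices for the theorem as stated (the constants are unspecified) but collapses the distinction between $\alpha$ and $\beta$; to recover the paper's sharper structure you would split $\bm{\delta}_{t+j}$ into bias plus zero-mean noise and note that the noise cross terms vanish, which is precisely the paper's mean/trace decomposition. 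Two smaller mismatches: the paper's ${\rm RMSE}(q)$ averages the cumulative error over all horizons $i\in[q]$, and it is this averaging (via $\sum_i i^2$ and $\sum_i i$) that produces the specific $(3\alpha+\beta)q+\beta$ lower-order terms, whereas you bound only the final horizon $q$ and wave at the bookkeeping; and the queue-length-$M$ MLE concentration you invoke for the diagonal terms is unnecessary, since the paper simply defines $\alpha_{v,1},\alpha_{v,2}$ as worst-case suprema of the realized bias norm and covariance trace, whatever the estimates happen to be.
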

\begin{corollary}
    In the asymptotic case of large $q$, the RMSE grows linearly with $q$: ${\rm RMSE}(q) = \gO(q)$.
\end{corollary}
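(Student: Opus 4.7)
The plan is to bound the mean squared $q$-step forecast error and then square-root it. Writing the cumulative forecast error as $\vx_{t+q}-\hat{\vx}_{t+q}=\sum_{k=1}^{q}\bm{e}_k$ with $\bm{e}_k\triangleq\vep_{t+k}-\hat{\vep}_{t+k}$, expanding the squared norm yields
\begin{equation*}
\mathbb{E}\lnorm\vx_{t+q}-\hat{\vx}_{t+q}\rnorm^2=\sum_{k=1}^{q}\mathbb{E}\lnorm\bm{e}_k\rnorm^2+2\sum_{1\le j<k\le q}\mathbb{E}\langle\bm{e}_j,\bm{e}_k\rangle,
\end{equation*}
so the task reduces to controlling a diagonal sum and a cross-term sum.

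First I would obtain a per-step bound. Since $\hat{\vep}_{t+k}\sim\gN(\bm{\mu}_v(\vs^*),\bm{\Sigma}_v(\vs^*))$ for $\Omega_{\gN}$ (or $\hat{\vep}_{t+k}=\bm{\mu}_v(\vs^*)$ for $\Omega_{\mu}$), a bias--variance split gives $\mathbb{E}\lnorm\bm{e}_k\rnorm^2=\lnorm\vep_{t+k}-\bm{\mu}_v(\vs^*)\rnorm^2+{\rm tr}(\bm{\Sigma}_v(\vs^*))$. I would bound the first summand by a constant $\alpha_0$ absorbing (i)~the finite-sample \texttt{MLE} error inside $\gQ_v(\vs^*)$, (ii)~the nearest-state approximation introduced by $\arg\min_{\vs\in\gS_v}\lnorm\vs-\vs_t\rnorm$, and (iii)~the empirical diameter of the shock range, while the trace is uniformly bounded by some $\beta_0$ (vanishing for $\Omega_{\mu}$). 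The cross terms are not zero because $\hat{\vep}_{t+k}$ depends on $\hat{\vep}_{t+k-1}$ through $\Psi_{\texttt{S}}$; applying Cauchy--Schwarz term-wise and summing over the $\binom{q}{2}$ pairs already delivers the leading $\alpha q^2$ behaviour.

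Next, I would sharpen the constants via a recursion on the partial sum $S_k=\sum_{j\le k}\bm{e}_j$ of the form
\begin{equation*}
\mathbb{E}\lnorm S_k\rnorm^2\le\mathbb{E}\lnorm S_{k-1}\rnorm^2+2\sqrt{\alpha_0}\sqrt{\mathbb{E}\lnorm S_{k-1}\rnorm^2}+\alpha_0+\beta_0,
\end{equation*}
which, after a straightforward induction on $k$, admits a closed-form upper envelope of the shape $\alpha q^2+(3\alpha+\beta)q+\beta$ for appropriate $\alpha,\beta$ expressed in terms of $\alpha_0,\beta_0$ (and implicitly the graph structure, through the neighbourhoods $\gU_v$ that determine the per-node dimension $|\gU_v|d$). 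Taking the square root gives the stated bound, and the corollary is immediate since $\sqrt{\alpha q^2+\gO(q)}=q\sqrt{\alpha}\bigl(1+\gO(q^{-1})\bigr)=\gO(q)$.

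The main obstacle is controlling the per-step bias when state mis-identification occurs, i.e.\ when $\Psi_{\texttt{S}}(\hat{\vep}_{t+k-1})\neq\Psi_{\texttt{S}}(\vep_{t+k-1})$. In that regime the arg-min fallback may return a state whose \texttt{MLE} mean is far from the true conditional mean, and keeping $\alpha_0$ finite (and independent of $q$) will require either a Lipschitz-type assumption on the map $\vs\mapsto\bm{\mu}_v(\vs)$ or a uniform a-priori bound on the support of the shock distribution. Threading such an assumption carefully through the recursion so that the leading constant $\alpha$ does not itself grow with $q$ is the delicate step; everything else is essentially a triangle-inequality / Cauchy--Schwarz bookkeeping exercise.
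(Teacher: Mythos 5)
Your proposal is correct and follows essentially the same route as the paper: decompose the $q$-step error into a sum of per-step shock errors, apply a bias--variance split using $\E\left[\lnorm \vx \rnorm^2\right] = \lnorm \bm{\mu} \rnorm^2 + {\rm tr}(\bm{\Sigma})$, bound the per-step bias and trace by data-dependent constants to get ${\rm MSE}(q) \le \alpha q^2 + (3\alpha+\beta)q + \beta$, and take a square root so that ${\rm RMSE}(q) = \gO(q)$. Your explicit Cauchy--Schwarz handling of the cross terms is equivalent bookkeeping to the paper's triangle inequality on the summed mean offsets, and the state-misidentification concern you raise is sidestepped in the paper exactly as you anticipate, by defining $\alpha_{v,1}$ as a supremum of the mean--shock discrepancy over the data.
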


\paragraph{Complexity Analysis} We denote the \textit{computational complexity} operator as $\mathfrak{C}(\cdot)$, and the \textit{space complexity} operator as $\mathfrak{M}(\cdot)$, where the argument of each operator is an algorithm or a portion of an algorithm. The optional offline part of \texttt{mspace} is denoted by $\mathsf{A}$, while the online part is denoted by $\mathsf{B}$. In Table~\ref{tab:complex_CM}, we exhibit the computational and space complexities of the different \texttt{mspace} variants, where $b \triangleq \max_{v \in [n]} |\gU_v|$ is the maximum degree. For more details please refer to Appendix~\ref{app:complexity}.
\begin{table}[h!]
    \centering
    \caption{Computational and space complexity of different \texttt{mspace} variants.}
    \label{tab:complex_CM}
    \resizebox{\columnwidth}{!}{%
    \begin{tabular}{l|l|l}
    
         &  \multicolumn{1}{c|}{$\Psi_{\texttt{S}}$} & 
         \multicolumn{1}{c}{$\Psi_{\texttt{T}}$}  \\ \bottomrule
         $\Omega_{\gN}$& 
         \parbox{8cm}{\begin{align*}
             &\textstyle \mathfrak{C}(\mathsf{A}) = \gO \left( ndb \left( rT + dbM  \min\{rT, 2^{bd}\} \right) \right) \\
             &\textstyle  \mathfrak{C}(\mathsf{B}) = \gO \left( (1-r)Tnd^2b^2 \left( qdb +  M  \min\left\{ \frac{(1+r)}{2}T, 2^{bd}   \right\}  \right)  \right)\\
             &\textstyle  \mathfrak{M}(\mathsf{A}\cup\mathsf{B}) = \gO \left( db(M + db)\min\{ T, 2^{bd} \}\right)
         \end{align*}}
         &
         \parbox{5cm}{\begin{align*}
             &\textstyle  \mathfrak{C}(\mathsf{A}) =  \gO\left( nrT + d^2Mn \tau_0   \right) \\
             &\textstyle  \mathfrak{C}(\mathsf{B}) = \gO\left( (1-r)Tn d^2  (  qd + M\tau_0 ) \right)\\
             &\textstyle  \mathfrak{M}(\mathsf{A}\cup\mathsf{B}) = \gO \Big( d(M + d) \tau_0    \Big)
         \end{align*}}
         \\
         \midrule
         $\Omega_{\mu}$&
         \parbox{8cm}{\begin{align*}
             &\textstyle \mathfrak{C}(\mathsf{A}) = \gO\left( ndb \left( rT + M  \min\{rT, 2^{bd}\} \right) \right) \\
             & \textstyle  \mathfrak{C}(\mathsf{B}) = \gO \left( (1-r)Tndb (q+M) \min \left\{ \frac{(1+r)}{2}T, 2^{bd}  \right\}  \right)\\
             &\textstyle  \mathfrak{M}(\mathsf{A}\cup\mathsf{B}) = \gO \left( Mdb  \min\{ T, 2^{bd} \}   \right)
         \end{align*}}
         & 
         \parbox{5cm}{\begin{align*}
             &\textstyle  \mathfrak{C}(\mathsf{A}) =  \gO\left( nrT + dMn \tau_0   \right) \\
             &\textstyle  \mathfrak{C}(\mathsf{B}) = \gO\left( (1-r)Tn  d (q+M)\tau_0 \right)\\
             &\textstyle  \mathfrak{M}(\mathsf{A}\cup\mathsf{B}) = \gO \left( Md \tau_0 \right)
         \end{align*}}
         \\
    \end{tabular}
    }
\end{table}
\begin{theorem}
    For asymptotically large number of nodes $n$ and timesteps $T$, the computational complexity of  \texttt{mspace} is $\gO(nT)$, and the space complexity is $\gO(1)$ across all variants.
    \label{thm:complex}
\end{theorem}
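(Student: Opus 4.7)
The plan is to verify that each entry of Table~\ref{tab:complex_CM} collapses to the claimed asymptotic order once we fix the algorithm- and dataset-level parameters $d, b, M, q, \tau_0, r$ as constants independent of $n$ and $T$, and then let $n, T \to \infty$. This is the natural asymptotic regime: the feature dimension $d$, the maximum neighbourhood size $b$ (assuming sparse graphs), the queue cap $M$, the forecast horizon $q$, the period $\tau_0$, and the split ratio $r$ are all bounded by problem-specific constants.

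The key simplification step concerns the $\min\{\cdot, 2^{bd}\}$ factors that appear throughout the $\Psi_{\texttt{S}}$ rows. I would observe that the set of possible states $\gS \subseteq \{-1,+1\}^{bd}$ has cardinality at most $2^{bd}$, so for any $T$ exceeding the threshold $2^{bd}/r$ the minima saturate at the constant $2^{bd}$. Substituting this into $\mathfrak{C}(\mathsf{A})$ for the $\Psi_{\texttt{S}}\Omega_{\gN}$ variant gives $\gO(ndbrT) + \gO(nd^2b^2M\cdot 2^{bd}) = \gO(nT) + \gO(n) = \gO(nT)$, and into $\mathfrak{C}(\mathsf{B})$ gives $\gO((1-r)Tnd^2b^2(qdb + M\cdot 2^{bd})) = \gO(nT)$. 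The space term reduces to $\gO(db(M+db)\cdot 2^{bd}) = \gO(1)$. The argument for the $\Psi_{\texttt{S}}\Omega_{\mu}$ row is identical up to the removal of the $d^2b^2$ factors coming from covariance storage.

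For the $\Psi_{\texttt{T}}$ rows no state-space saturation is needed, because $\tau_0$ is a constant from the outset. I would simply read off $\mathfrak{C}(\mathsf{A}) = \gO(nrT) + \gO(d^2Mn\tau_0) = \gO(nT) + \gO(n) = \gO(nT)$ and $\mathfrak{C}(\mathsf{B}) = \gO((1-r)Tnd^2(qd + M\tau_0)) = \gO(nT)$, while $\mathfrak{M}(\mathsf{A}\cup\mathsf{B}) = \gO(d(M+d)\tau_0) = \gO(1)$. Summing $\mathfrak{C}(\mathsf{A}) + \mathfrak{C}(\mathsf{B})$ in every variant yields $\gO(nT)$ as claimed.

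The main obstacle is conceptual rather than technical: one has to be explicit about which quantities are treated as constants. In particular, the $\gO(nT)$ bound would degrade if $b$ scaled with $n$ (as in a dense graph) or if $d$ scaled with $n$, because the hidden constants contain $d^2 b^2 \cdot 2^{bd}$, which is astronomical though still finite. I would therefore preface the proof with an explicit statement of these assumptions (bounded degree, bounded feature dimension, fixed hyperparameters), after which the arithmetic is routine term-by-term simplification and one short combinatorial remark about $|\gS| \leq 2^{bd}$.
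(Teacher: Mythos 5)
Your proposal is correct and follows essentially the same route as the paper's proof: treat $r, d, M, q, \tau_0, b$ as constants, note that the $\min\{\cdot, 2^{bd}\}$ factors saturate at the constant $2^{bd}$ for large $T$, and read off $\gO(nT)$ and $\gO(1)$ from the tabulated complexities of each variant. Your explicit caveat that the bound presupposes $b$ and $d$ not scaling with $n$ is a welcome clarification but does not change the argument.
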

The proof is detailed in Appendix~\ref{app:asymp_complex}.

\newpage
\section{Discussion}
\label{sec:discuss}
In this section we discuss the limitations of \texttt{mspace} and how they can be overcome. Firstly, \texttt{mspace} only considers binary edges, i.e.. $\mA \in \{0,1\}^{n \times n}$ instead of a weighted adjacency matrix $\mA \in \mathbb{R}^{n \times n}$. This does not imply that we have used datasets with binary edges, rather it means that we have used a binarized version of the adjacency matrix as input to \texttt{mspace} while the baselines exploited weighted edges. Secondly, we assume that the graph structure is fixed throughout, while for a truly dynamic graph, the graph structure should also be dynamic. Lastly, we have proposed two state functions: one that focuses on cross-correlation among the nodes, and the other that considers seasonality. Therefore, a state function which  combines both can be studied in an extension of our work in the future.

\paragraph{On incorporating edge weights} We now investigate how we can incorporate edge weights in \texttt{mspace}, and if it has any potential benefits. In addition to Assumption~\ref{assum:edge1}, consider the following:
\begin{assumption}
    For nodes $v, u, u' \in \gV$, if $|\mA_{v, u}| \geq |\mA_{v, u'}|$ then $|\rho(\vx(v), \vx(u))| \geq |\rho(\vx(v), \vx(u'))|$, where $\mA_{v,u} \in \mathbb{R}$ denotes the edge weight of $(v,u) \in \gE$.
    \label{assum:weight}
\end{assumption}
Assumption~\ref{assum:edge1} can be applied to both weighted and unweighted graphs, while Assumption~\ref{assum:weight} is applicable only to weighted graphs. It is also evident that Assumption~\ref{assum:weight} is stronger than Assumption~\ref{assum:edge1}. Therefore, we base \texttt{mspace-S*} on Assumption~\ref{assum:edge1} and the correlation between the connected nodes are determined intrinsically through the conditional distributions, as the state $\Psi_{\texttt{S}}\left(\vep^{\langle \gU_v \rangle}\right)$ encodes the structural information of a node $v$ w.r.t its neighbours. However, we can enforce Assumption~\ref{assum:weight} through: $\vs^* \sim \left\{ \vs \in \gS_v : \lnorm 
\mA_v^{\langle \gU_v \rangle} \odot \left( \vs - \vs_t \right) \rnorm < \delta \right\}$, where $\delta \in \mathbb{R}^+$.

\paragraph{On adapting to dynamic graph structures} Algorithms that exploit dynamic graph structures are based on the temporal extension of Assumption~\ref{assum:weight}, formulated as:
\begin{assumption}
    For nodes $u,v \in \gV$, and time-steps $t, t' \in [T]$, if $|\mA_{u,v}(t)| \geq |\mA_{u,v}(t')|$, then $|\rho( \vx_t(u), \vx_t(v) )| \geq |\rho( \vx_{t'}(u), \vx_{t'}(v) )|$.
    \label{assum:dynamic}
\end{assumption}
Finding the matched state as $\vs^* \sim \left\{ \vs \in \gS_v : \lnorm 
\mA_v^{\langle \gU_v \rangle}(t) \odot \left( \vs - \vs_t^{\langle \gU_v \rangle} \right) \rnorm < \delta \right\}, \delta \in \mathbb{R}^+$ makes \texttt{mspace} compatible with dynamic graph structure. However, the number of nodes in the graph must remain fixed, i.e., \texttt{mspace} cannot deal with node addition or deletion.

\paragraph{On creating a state function which combines $\Psi_\texttt{S}$ and $\Psi_\texttt{T}$ } We can define $\Psi_{\texttt{ST}}: \mathbb{R}^{|\gU|d} \times \mathbb{N} \rightarrow \{ -1,1\}^{|\gU|d} \times \{ 0, 1, \cdots \tau_0-1 \}$ as $\Psi_{\texttt{ST}}\left(\vep^{\langle \gU \rangle}, t \right) \triangleq \begin{bmatrix} \sign(\vep^{\langle \gU \rangle})^\top & t \,{\rm mod}\, \tau_0  \end{bmatrix}^\top$.
In essence, the queues $\gQ_v(\vs), \forall \vs \in \gS_v, \forall v \in [n]$ in \texttt{mspace-ST} would have lesser entries compared to \texttt{mspace-S} which might lead to poor estimates and consequently make the algorithm data-intensive. Furthermore, in the step where we find the closest state $\vs^*$, the spatial and temporal parts can be assigned different weights: $\vs^* \leftarrow \arg \min_{\vs \in \gS_v} \lnorm 
\begin{bmatrix}
    \mathbf{1}_{d|\gU_v|} & \gamma
\end{bmatrix}^\top \odot \left( \vs - \vs_t^{\langle \gU_v \rangle} \right) \rnorm$, where $\gamma \in \mathbb{R}^+$.

\section{Conclusion}
\label{sec:conclusion}

In conclusion, our proposed algorithm, \texttt{mspace}, performs at par with the SoTA TGNN models across various temporal graph datasets. As an online learning algorithm, \texttt{mspace} is adaptive to changes in data distribution and is suitable for deployment in scenarios where training samples are limited. 
The interpretability of \texttt{mspace} sets it apart from black-box deep learning models, allowing for a clearer understanding of the underlying mechanisms driving predictions. This emphasis on interpretability represents a significant step forward in the field of temporal graph learning. In Sec.~\ref{sec:discuss}, we discussed the potential limitations of \texttt{mspace}, and suggested design changes through which they can be overcome.

In addition to the algorithm, we also introduced a synthetic temporal graph generator in which the features of the nodes evolve with the influence of their neighbours in a non-linear way. These synthetic datasets can serve as a valuable resource for benchmarking algorithms.

\newpage

\bibliography{references}

\begin{thebibliography}{10}

\bibitem{gao2022graph}
C.~Gao, X.~Wang, X.~He, and Y.~Li, ``Graph neural networks for recommender system,'' in {\em Proceedings of the Fifteenth ACM International Conference on Web Search and Data Mining}, pp.~1623--1625, 2022.

\bibitem{deng2019learning}
S.~Deng, H.~Rangwala, and Y.~Ning, ``Learning dynamic context graphs for predicting social events,'' in {\em Proceedings of the 25th ACM SIGKDD International Conference on Knowledge Discovery \& Data Mining}, pp.~1007--1016, 2019.

\bibitem{yu2017spatio}
B.~Yu, H.~Yin, and Z.~Zhu, ``Spatio-temporal graph convolutional networks: A deep learning framework for traffic forecasting,'' {\em arXiv preprint arXiv:1709.04875}, 2017.

\bibitem{longa_graph_2023}
A.~Longa, V.~Lachi, G.~Santin, M.~Bianchini, B.~Lepri, P.~Lio, A.~Passerini, {\em et~al.}, ``Graph neural networks for temporal graphs: State of the art, open challenges, and opportunities,'' {\em Transactions on Machine Learning Research}, 2023.

\bibitem{barros2021survey}
C.~D. Barros, M.~R. Mendon{\c{c}}a, A.~B. Vieira, and A.~Ziviani, ``A survey on embedding dynamic graphs,'' {\em ACM Computing Surveys (CSUR)}, vol.~55, no.~1, pp.~1--37, 2021.

\bibitem{wang_dissecting_nodate}
Y.~Wang, Y.~Wang, J.~Yang, and Z.~Lin, ``Dissecting the {Diffusion} {Process} in {Linear} {Graph} {Convolutional} {Networks},''

\bibitem{li_diffusion_2017}
Y.~Li, R.~Yu, C.~Shahabi, and Y.~Liu, ``Diffusion convolutional recurrent neural network: {Data}-driven traffic forecasting,'' {\em arXiv preprint arXiv:1707.01926}, 2017.

\bibitem{micheli_discrete-time_2022}
A.~Micheli and D.~Tortorella, ``Discrete-time dynamic graph echo state networks,'' {\em Neurocomputing}, vol.~496, pp.~85--95, 2022.
\newblock Publisher: Elsevier.

\bibitem{wu_graph_2019}
Z.~Wu, S.~Pan, G.~Long, J.~Jiang, and C.~Zhang, ``Graph {WaveNet} for {Deep} {Spatial}-{Temporal} {Graph} {Modeling},'' May 2019.
\newblock arXiv:1906.00121 [cs, stat].

\bibitem{fang_spatial-temporal_2021}
Z.~Fang, Q.~Long, G.~Song, and K.~Xie, ``Spatial-{Temporal} {Graph} {ODE} {Networks} for {Traffic} {Flow} {Forecasting},'' in {\em Proceedings of the 27th {ACM} {SIGKDD} {Conference} on {Knowledge} {Discovery} \& {Data} {Mining}}, pp.~364--373, Aug. 2021.
\newblock arXiv:2106.12931 [cs].

\bibitem{liu_graph-based_2023}
Z.~Liu, P.~Shojaee, and C.~K. Reddy, ``Graph-based {Multi}-{ODE} {Neural} {Networks} for {Spatio}-{Temporal} {Traffic} {Forecasting},'' {\em arXiv preprint arXiv:2305.18687}, 2023.

\bibitem{shumway_time_2017}
R.~H. Shumway and D.~S. Stoffer, {\em Time {Series} {Analysis} and {Its} {Applications}: {With} {R} {Examples}}.
\newblock Springer {Texts} in {Statistics}, Cham: Springer International Publishing, 2017.

\bibitem{barber2012bayesian}
D.~Barber, {\em Bayesian reasoning and machine learning}.
\newblock Cambridge University Press, 2012.

\bibitem{wu2021autocts}
X.~Wu, D.~Zhang, C.~Guo, C.~He, B.~Yang, and C.~S. Jensen, ``Auto{CTS}: Automated correlated time series forecasting,'' {\em Proceedings of the VLDB Endowment}, vol.~15, no.~4, pp.~971--983, 2021.

\bibitem{lai_lightcts_2023}
Z.~Lai, D.~Zhang, H.~Li, C.~S. Jensen, H.~Lu, and Y.~Zhao, ``Light{CTS}: A lightweight framework for correlated time series forecasting,'' {\em Proceedings of the ACM on Management of Data}, vol.~1, no.~2, pp.~1--26, 2023.

\bibitem{kipf_semi-supervised_2017}
T.~N. Kipf and M.~Welling, ``Semi-{Supervised} {Classification} with {Graph} {Convolutional} {Networks},'' Feb. 2017.
\newblock arXiv:1609.02907 [cs, stat].

\bibitem{vaswani_attention_2017}
A.~Vaswani, N.~Shazeer, N.~Parmar, J.~Uszkoreit, L.~Jones, A.~N. Gomez, L.~Kaiser, and I.~Polosukhin, ``Attention is {All} you {Need},'' {\em Neural Information Processing Systems}, 2017.

\bibitem{chung_empirical_2014}
J.~Chung, C.~Gulcehre, K.~Cho, and Y.~Bengio, ``Empirical {Evaluation} of {Gated} {Recurrent} {Neural} {Networks} on {Sequence} {Modeling},'' Dec. 2014.
\newblock arXiv:1412.3555 [cs].

\bibitem{zeng2023transformers}
A.~Zeng, M.~Chen, L.~Zhang, and Q.~Xu, ``Are transformers effective for time series forecasting?,'' in {\em Proceedings of the AAAI conference on artificial intelligence}, vol.~37, pp.~11121--11128, 2023.

\bibitem{wu_comprehensive_2021}
Z.~Wu, S.~Pan, F.~Chen, G.~Long, C.~Zhang, and P.~S. Yu, ``A {Comprehensive} {Survey} on {Graph} {Neural} {Networks},'' {\em IEEE Transactions on Neural Networks and Learning Systems}, vol.~32, pp.~4--24, Jan. 2021.

\bibitem{pareja_evolvegcn_2020}
A.~Pareja, G.~Domeniconi, J.~Chen, T.~Ma, T.~Suzumura, H.~Kanezashi, T.~Kaler, T.~Schardl, and C.~Leiserson, ``Evolvegcn: {Evolving} graph convolutional networks for dynamic graphs,'' in {\em Proceedings of the {AAAI} conference on artificial intelligence}, vol.~34, pp.~5363--5370, 2020.
\newblock Issue: 04.

\bibitem{zhao_t-gcn_2019}
L.~Zhao, Y.~Song, C.~Zhang, Y.~Liu, P.~Wang, T.~Lin, M.~Deng, and H.~Li, ``T-gcn: {A} temporal graph convolutional network for traffic prediction,'' {\em IEEE transactions on intelligent transportation systems}, vol.~21, no.~9, pp.~3848--3858, 2019.
\newblock Publisher: IEEE.

\bibitem{welch_introduction_1997}
G.~Welch, ``An {Introduction} to the {Kalman} {Filter},'' 1997.

\bibitem{eleftheriadis2017identification}
S.~Eleftheriadis, T.~Nicholson, M.~Deisenroth, and J.~Hensman, ``Identification of gaussian process state space models,'' {\em Advances in neural information processing systems}, vol.~30, 2017.

\bibitem{mclachlan_finite_2019}
G.~J. McLachlan, S.~X. Lee, and S.~I. Rathnayake, ``Finite {Mixture} {Models},'' 2019.

\bibitem{rao_fogs_2022}
X.~Rao, H.~Wang, L.~Zhang, J.~Li, S.~Shang, and P.~Han, ``{FOGS}: {First}-{Order} {Gradient} {Supervision} with {Learning}-based {Graph} for {Traffic} {Flow} {Forecasting},'' in {\em Proceedings of the {Thirty}-{First} {International} {Joint} {Conference} on {Artificial} {Intelligence}}, (Vienna, Austria), pp.~3926--3932, International Joint Conferences on Artificial Intelligence Organization, July 2022.

\bibitem{box_distribution_1970}
G.~E.~P. Box and D.~A. Pierce, ``Distribution of {Residual} {Autocorrelations} in {Autoregressive}-{Integrated} {Moving} {Average} {Time} {Series} {Models},'' {\em Journal of the American Statistical Association}, vol.~65, pp.~1509--1526, Dec. 1970.

\bibitem{gilpin2018explaining}
L.~H. Gilpin, D.~Bau, B.~Z. Yuan, A.~Bajwa, M.~Specter, and L.~Kagal, ``Explaining explanations: An overview of interpretability of machine learning,'' in {\em 2018 IEEE 5th International Conference on data science and advanced analytics (DSAA)}, pp.~80--89, IEEE, 2018.

\bibitem{du2019techniques}
M.~Du, N.~Liu, and X.~Hu, ``Techniques for interpretable machine learning,'' {\em Communications of the ACM}, vol.~63, no.~1, pp.~68--77, 2019.

\bibitem{caraiani2014predictive}
P.~Caraiani, ``The predictive power of singular value decomposition entropy for stock market dynamics,'' {\em Physica A: Statistical Mechanics and its Applications}, vol.~393, pp.~571--578, 2014.

\bibitem{beres2018temporal}
F.~B{\'e}res, R.~P{\'a}lovics, A.~Ol{\'a}h, and A.~A. Bencz{\'u}r, ``Temporal walk based centrality metric for graph streams,'' {\em Applied network science}, vol.~3, pp.~1--26, 2018.

\bibitem{rozemberczki2021pytorch}
B.~Rozemberczki, P.~Scherer, Y.~He, G.~Panagopoulos, A.~Riedel, M.~Astefanoaei, O.~Kiss, F.~Beres, G.~Lopez, N.~Collignon, {\em et~al.}, ``Pytorch geometric temporal: Spatiotemporal signal processing with neural machine learning models,'' in {\em Proceedings of the 30th ACM international conference on information \& knowledge management}, pp.~4564--4573, 2021.

\bibitem{rozemberczki2021chickenpox}
B.~Rozemberczki, P.~Scherer, O.~Kiss, R.~Sarkar, and T.~Ferenci, ``Chickenpox cases in hungary: a benchmark dataset for spatiotemporal signal processing with graph neural networks,'' {\em arXiv preprint arXiv:2102.08100}, 2021.

\end{thebibliography}
\bibliographystyle{ieeetr}
\setcitestyle{square}
\appendix


\newpage
\section{Interpretability}
\label{app:interpret}

In this section, we examine \texttt{mspace} in light of the following definition of Interpretability.
\begin{definition}
    Consider data $\vx \in \gD$ which is processed by a model $\mathsf{F}_{\theta}$ to produce the output $\hat{\vy} \in \gY$, i.e., $\hat{\vy} = \mathsf{F}_{\theta}(\vx)$, where $\theta$ denotes the model parameters. Moreover, consider a true mapping $f:\vx \mapsto \vy, \, \forall \vx \in \gD$ where $\vy$ is the ground truth associated with the input data $\vx$. Then, an interpretable or explainable model $\mathsf{F}_{\theta}$ fulfils one or more of the following properties \cite{gilpin2018explaining, du2019techniques}:
    \begin{itemize}
        \item  The internals of the model $\mathsf{F}_{\theta}$ can be explained in a way that is understandable to humans.
        \item The output $\hat{\vy}$ can be explained in terms of the properties of the input $\vx$, the input data distribution $\gD$, and the model parameters $\theta$.
        \item The failure of a model on a given input data can be explained.
        \item For a certain distance metric $\Delta: \gY \times \gY \rightarrow \mathbb{R}^+$, theoretical bounds on the expected error $\mathbb{E}_{\vx \sim \gD} [ \Delta(\vy, \mathsf{F}_{\theta}(\vx))] $ can be established based on the description of $\mathsf{F}_{\theta}$, supported by the assumptions on the input data distribution $\gD$.
        \item It can be identified whether the model $\mathsf{F}_{\theta}$ is susceptible to training bias, and to what extent.
    \end{itemize}
\end{definition}




\paragraph{Explaining $\Psi_{\texttt{S}}$} In Fig.~\ref{fig:intuitive}, we depict two consecutive snapshots of a subgraph, focused on node $v$. The dashed circle highlights the corresponding 1-hop neighbourhood $\gU_v$. At any time $t$, we draw green and red arrows to next to the nodes to depict whether its node feature value increased or decreased, respectively.

\begin{figure}[h!]
   \centering
    \includegraphics[width=0.5\columnwidth]{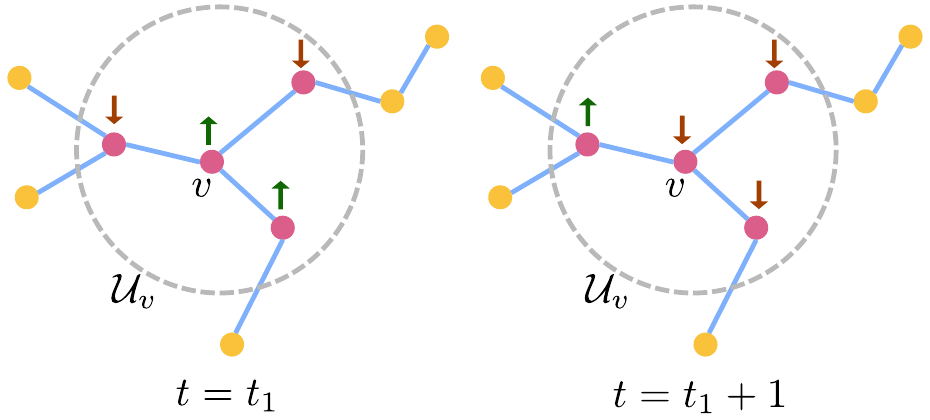}
    \caption{Consecutive subgraph snapshots.}
    \label{fig:intuitive}
\end{figure}
The design of $\Psi_{\texttt{S}}$ was inspired by the correlation dynamics of the stock market \cite{caraiani2014predictive}, where the inter-connectedness of various stocks exerts mutual influence on their respective prices. For instance, within the semiconductor sector, stocks such as \texttt{NVDA}, \texttt{AMD}, and \texttt{TSMC} often exhibit synchronised movements, with slight lead or lag. Similarly, the performance of gold mining stocks can offer insights into the future value of physical gold and companies engaged in precious metal trade. This concept transcends individual industries and encompasses competition across multiple sectors.

Let us record the states at two consecutive time-steps $\vs_{t_1} = \begin{bmatrix}
    1 & -1 & 1 & -1
\end{bmatrix}^\top$, and $\vs_{t_1+1} = \begin{bmatrix}
    -1 & -1 & -1 & 1
\end{bmatrix}^\top$. At the state-level, we iterate through the time-steps, and collect all the states succeeding $\vs = \begin{bmatrix}
    1 & -1 & 1 & -1
\end{bmatrix}^\top$. If we then draw a random sample from this collection of succeeding states, we can predict whether the node feature value is more likely to \textit{increase or decrease}. However, we are interested in predicting the \textit{amount} of change. Therefore, at every time step when the state $\vs_t$ matches $\vs = \begin{bmatrix}
    1 & -1 & 1 & -1
\end{bmatrix}^\top$, we collect the succeeding shock $\vep_{t+1}^{\langle \gU_v \rangle}$ in a queue $\gQ_v(\vs)$, i.e., at time $\tau$, $\gQ_v(\vs) = \left\{ \vep_{t+1}^{\langle \gU_v \rangle} : \vs_t = \vs, \forall t < \tau \right\}$ with $|\gQ_v(\vs)| \leq M$.
The queue entries are then used to approximate a distribution from which a random sample is drawn during forecast.

In Fig.~\ref{fig:S_chars}, we plot the normalized histogram of the trace ${\rm tr}(\cdot)$ of the covariance matrix $\bm{\Sigma}(\vs)$ of all the states $\vs \in \gS_v, v \in [n]$ for all the datasets used in multi-step forecasting. We notice that in both \texttt{PEMS04} and \texttt{PEMS08} the distribution of values is skewed to the left, with a concentration of data points at values close to zero. This explains the better-than-SoTA performance of \texttt{mspace-S$\mu$} on these datasets. In contrast, the histogram of \texttt{METRLA} is completely away from zero, while for \texttt{PEMS03}, and \texttt{PEMS07} there are peaks near zero, but a major mass of the histogram is skewed away from zero. This explains the poor performance of \texttt{mspace-S$\mu$} on these datasets.

\begin{figure}
    \centering
    \includegraphics[width=\columnwidth]{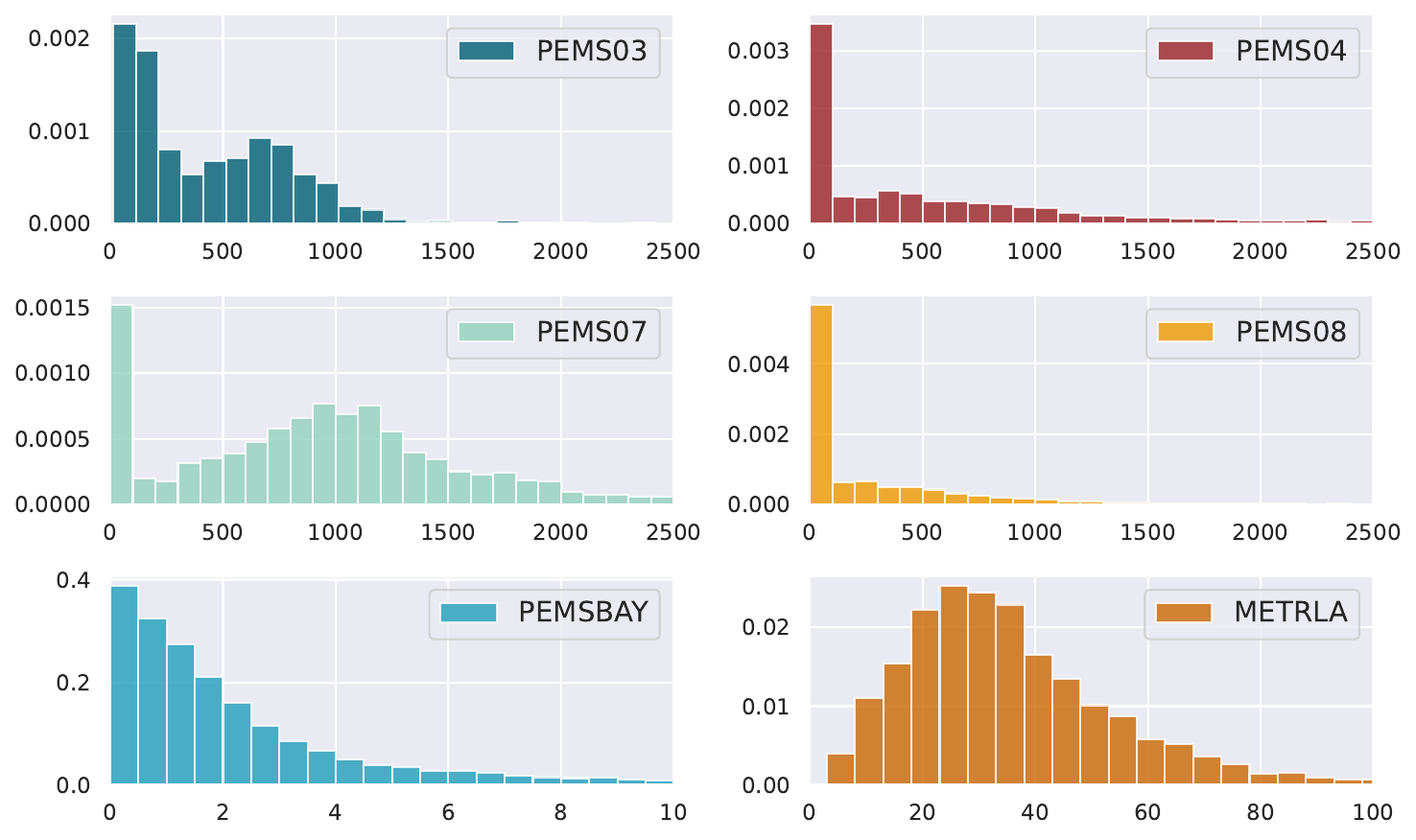}
    \caption{Normalized histogram of $\{ {\rm tr}(\bm{\Sigma}(\vs)): \forall \vs \in \gS_v, \forall v \in [n] \}$ for different datasets.}
    \label{fig:S_chars}
\end{figure}

\paragraph{Explaining $\Psi_{\texttt{T}}$}
Next, we discuss the rationale behind $\Psi_{\texttt{T}}$, which is designed to identify periodic patterns. For instance, in many traffic networks, trends exhibit weekly cycles, with distinct patterns on weekdays compared to weekends. Moreover, on an annual basis, the influence of holidays on traffic can be discerned, as people engage in shopping and other leisure activities. In Fig.~\ref{fig:mspaceT}, we have shown the traffic flow value of \texttt{PEMS04} with weekly (a) and daily (b) periodicity. Fpr the weekly periodic view (a), the trend is more pronounced with less deviation from the mean while for the daily view (b), a scattered trend is visible with high variance across states.
\begin{figure}[h!]
\centering
    \subfigure[\texttt{PEMS04}: Weekly]{\includegraphics[width=0.48\columnwidth]{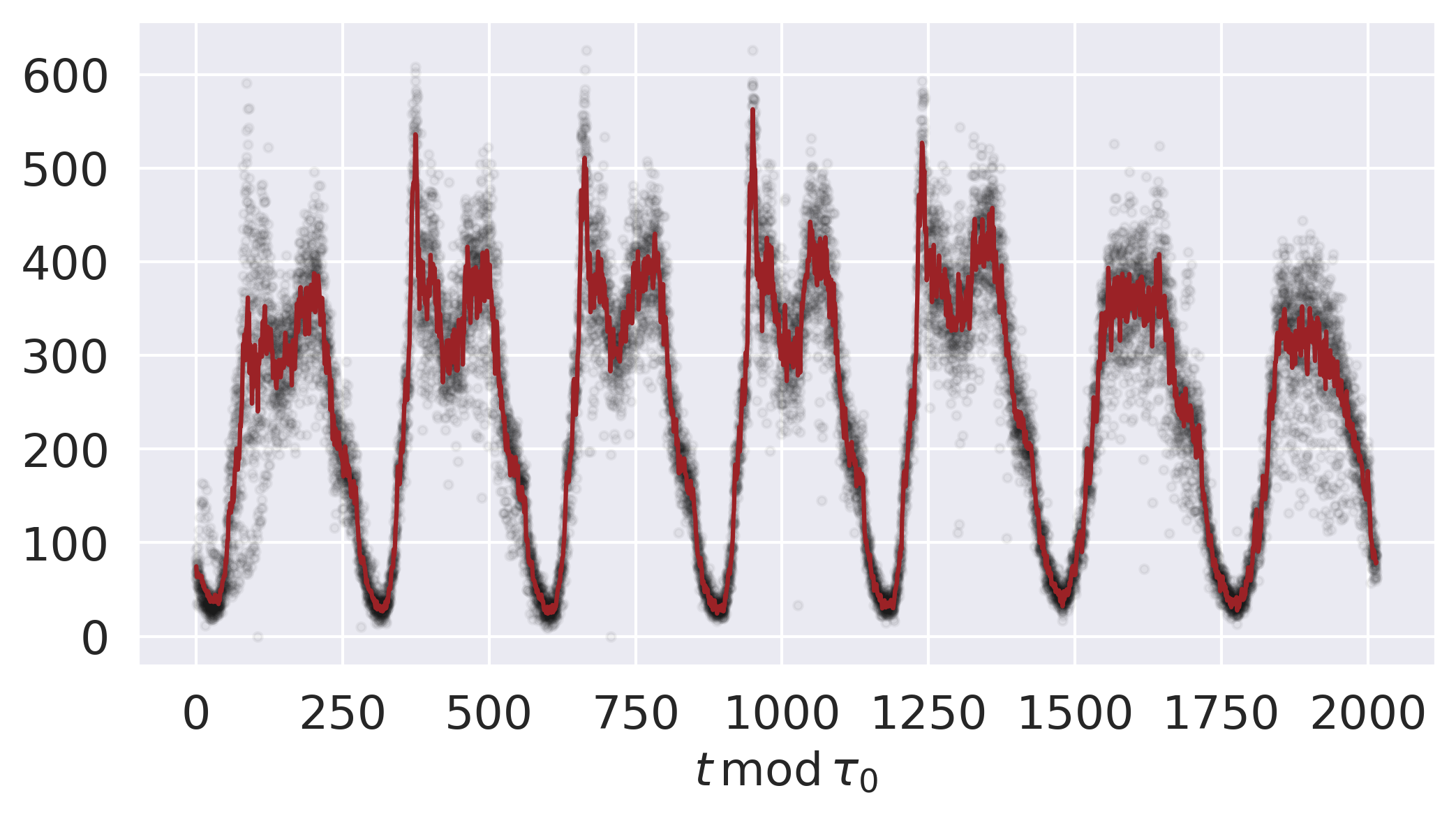}\label{fig:pems04_weekly}}
    \subfigure[\texttt{PEMS04}: Daily]{\includegraphics[width=0.48\columnwidth]{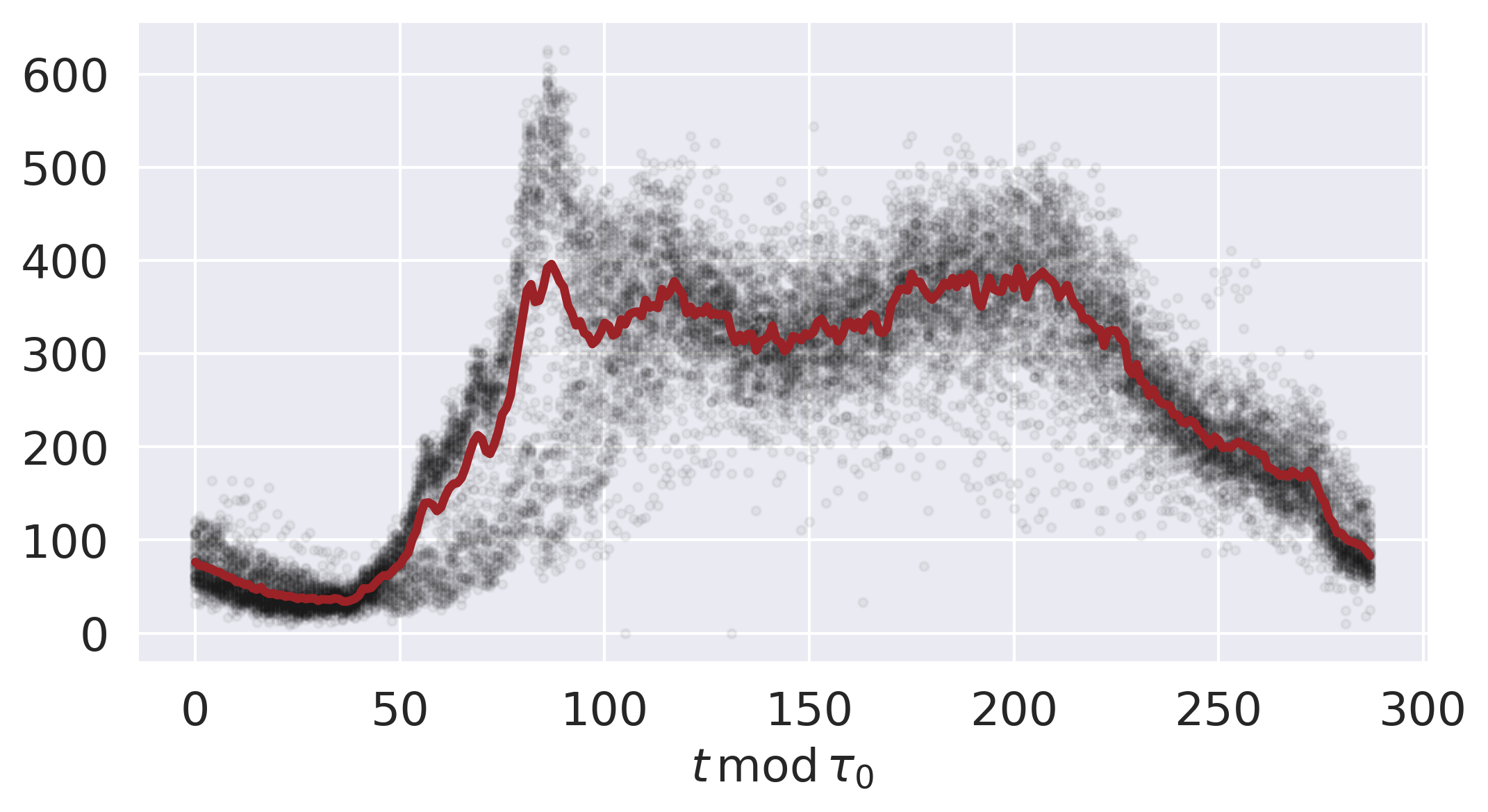}\label{fig:pems04_daily}}
    \caption{Periodic trends in the traffic dataset \texttt{PEMS04}; the black points represent the data-points, and the red line is the mean estimate for each state $t \,{\rm mod} \, \tau_0$.}\label{fig:mspaceT}
\end{figure}
\paragraph{Explaining the Contrastive Performance}
In this section, we look at the data of \texttt{PEMS04}, \texttt{PEMSBAY}, and \texttt{METRLA} and explain why \texttt{mspace-T$\mu$} outperforms the baselines on \texttt{PEMS04} but fails on \texttt{METRLA}. We also explain the relatively better performance of \texttt{mspace-T$\mu$} on \texttt{PEMSBAY} in contrast to \texttt{METRLA}.

First, we compare the weekly trend of \texttt{PEMS04} in Fig.~\ref{fig:mspaceT}(a) with the weekly trend of \texttt{METRLA} in Fig.~\ref{fig:contrast}(b). While the mean-trend for \texttt{PEMS04} is strong with little variance, for \texttt{METRLA}, we observe that the mean is lies in between two dense regions on data-points, far from each which suggests that the mean-trend has high variance which contributes to the failure of \texttt{mspace-T$\mu$}.

Similarly, in Fig.~\ref{fig:contrast}(a), we observe the mean-trend for \texttt{PEMSBAY} which has points of higher variance compared to \texttt{PEMS04} but it is not as bad as \texttt{METRLA}. This difference is also reflected in the performance of \texttt{PEMSBAY} which is at par with the baselines (see Table~\ref{result:multi_rmse_mae}).

\begin{figure}[h!]
\centering
    \subfigure[\texttt{PEMSBAY}: Weekly]{\includegraphics[width=0.48\columnwidth]{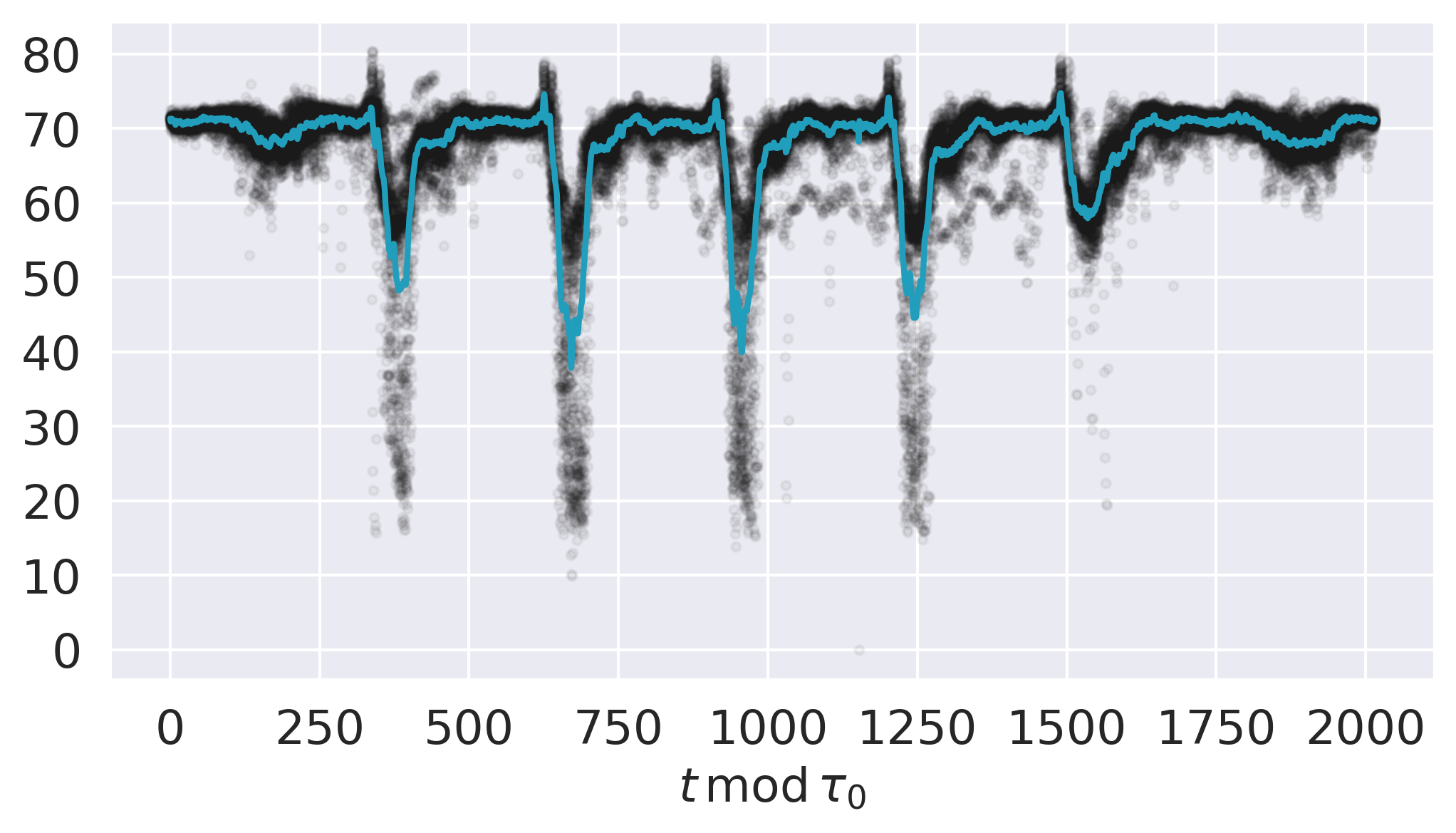}\label{fig:pems04_weekly}}
    \subfigure[\texttt{METRLA}: Weekly]{\includegraphics[width=0.48\columnwidth]{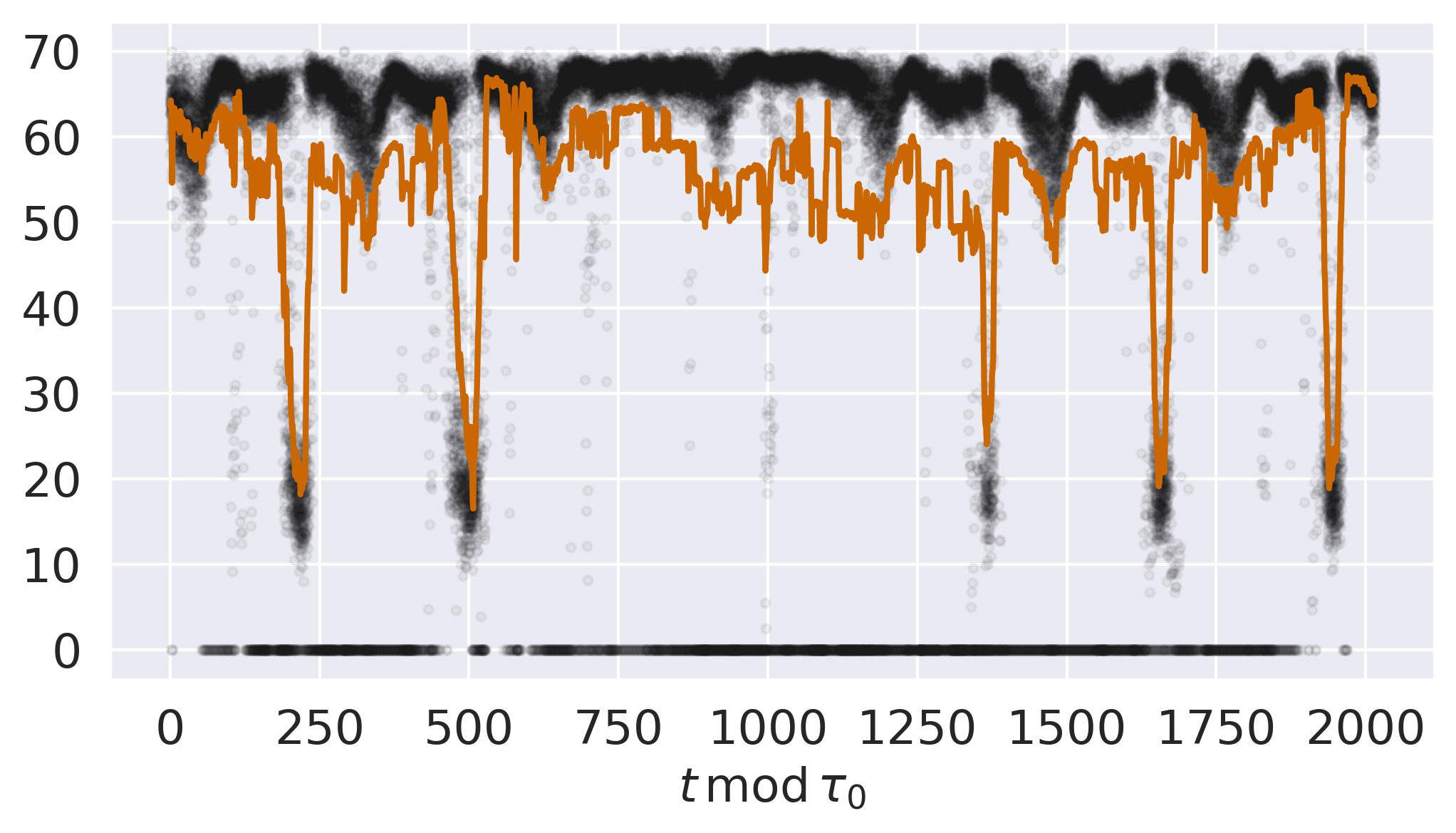}\label{fig:pems04_daily}}
    \caption{Periodic trends in the traffic datasets \texttt{PEMS04} and \texttt{METRLA}; the black points represent the data-points, and the coloured line is the mean estimate for each state $t \, {\rm mod} \, \tau_0$. The darker shades indicates higher density of data-points.}\label{fig:contrast}
\end{figure}

\section{Error Bounds}
\label{app:bound}
\paragraph{Upper Bound} We derive the upper bound on the RMSE for $q$-step iterative forecast below.
\begin{proof}[Proof of Theorem~\ref{thm:rmse}]

    For nodes in $\gU_v, v \in [n]$, the shock at time $t$ is sampled from a Gaussian distribution, the parameters of which depend on the previous shock $\hat{\vep}_{t-1}^{\langle \gU_v \rangle}$ through the state function:
    \begin{align}
        \hat{\vep}_t^{\langle \gU_v \rangle} \sim \gN \left( \hat{\vep}; \bm{\mu}\left( \Psi_{\texttt{S}}\left(  \hat{\vep}_{t-1}^{\langle \gU_v \rangle} \right) \right), \bm{\Sigma}\left(  \Psi_{\texttt{S}}\left(  \hat{\vep}_{t-1}^{\langle \gU_v \rangle} \right) \right) \right)        
    \end{align}
     
    We denote the shock estimated for node $v$ at time $t$ as:   
    \begin{align}
        \hat{\vep}_t(v) = \hat{\vep}_t^{\langle \gU_v \rangle}(v) \sim \gN \left( \hat{\vep}; \bm{\mu}_v\left( \Psi_{\texttt{S}}\left(  \hat{\vep}_{t-1}^{\langle \gU_v \rangle} \right) \right), \bm{\Sigma}_v\left(  \Psi_{\texttt{S}}\left(  \hat{\vep}_{t-1}^{\langle \gU_v \rangle} \right) \right) \right)
    \end{align}

    The mean square error for $q$-step iterative node feature forecasting is defined as:
    \begin{align}
        \textrm{MSE}(q) &\triangleq \frac{1}{ndq}\mathbb{E} \left[ \sum_{v \in [n]}  \sum_{i \in [q]} \lnorm \sum_{j \in [i]} \hat{\vep}_{t+j}(v) - \vep_{t+j}(v)  \rnorm^2 \right] \nonumber \\
        &=  \frac{1}{ndq} \sum_{v \in [n]}  \sum_{i \in [q]} \mathbb{E} \left[ \lnorm \sum_{j \in [i]} \hat{\vep}_{t+j}(v) - \vep_{t+j}(v) \rnorm^2 \right]. 
    \end{align}

    The shock difference between the true shock and predicted shock also follows a Gaussian distribution:
    \begin{align}
        \hat{\vep}_{t+j}(v) - \vep_{t+j}(v)  \sim  \gN \left( \vep;  \bm{\mu}_v\left( \Psi_{\texttt{S}}\left(  \hat{\vep}_{t_j-1}^{\langle \gU_v \rangle} \right) \right) - \vep_{t+j}(v), \bm{\Sigma}_v\left(  \Psi_{\texttt{S}}\left(  \hat{\vep}_{t+j-1}^{\langle \gU_v \rangle} \right) \right) \right).
    \end{align}
    Since, the sum of Gaussian r.v.s is also Gaussian, we have:
    \begin{align}
        \sum_{j \in [i]}  \hat{\vep}_{t+j}(v) - \vep_{t+j}(v) \sim\gN \left( \vep;  \sum_{j \in [i]}  \bm{\mu}_v\left( \Psi_{\texttt{S}}\left(  \hat{\vep}_{t_j-1}^{\langle \gU_v \rangle} \right) \right) - \vep_{t+j}(v), 
        \sum_{j \in [i]} \bm{\Sigma}_v\left(  \Psi_{\texttt{S}}\left(  \hat{\vep}_{t+j-1}^{\langle \gU_v \rangle} \right) \right) \right).
    \end{align}
    Moreover, for a Gaussian r.v. $\rvx \sim \gN(\rvx; \bm{\mu}, \bm{\Sigma})$, $\E\left[ \lnorm \vx \rnorm^2 \right] = \lnorm \bm{\mu} \rnorm^2 + {\rm tr}(\bm{\Sigma})$.
    \begin{align}
        \E\left[ \lnorm  \sum_{j \in [i]} \hat{\vep}_{t+j}(v) -  \vep_{t+j}(v) \rnorm^2 \right] &= \lnorm \sum_{j \in [i]}  \bm{\mu}_v\left( \Psi_{\texttt{S}}\left(  \hat{\vep}_{t+j-1}^{\langle \gU_v \rangle} \right) \right) - \vep_{t+j}(v) \rnorm^2 \nonumber \\
        &\quad  + \sum_{j \in [i]}{\rm tr} \left( \bm{\Sigma}_v\left(  \Psi_{\texttt{S}}\left(  \hat{\vep}_{t+j-1}^{\langle \gU_v \rangle} \right) \right) \right). 
    \end{align}
    \begin{align}
        \lnorm \sum_{j \in [i]}  \bm{\mu}_v\left( \Psi_{\texttt{S}}\left(  \hat{\vep}_{t+j-1}^{\langle \gU_v \rangle} \right) \right) - \vep_{t+j}(v) \rnorm &\leq \sum_{j \in [i]} \lnorm   \bm{\mu}_v\left( \Psi_{\texttt{S}}\left(  \hat{\vep}_{t+j-1}^{\langle \gU_v \rangle} \right) \right) - \vep_{t+j}(v) \rnorm \nonumber \\
        &\leq i \cdot \max_{j \in [i]} \lnorm   \bm{\mu}_v\left( \Psi_{\texttt{S}}\left(  \hat{\vep}_{t+j-1}^{\langle \gU_v \rangle} \right) \right) - \vep_{t+j}(v) \rnorm  \nonumber \\
        &\leq i \cdot \max_{t, j \in \mathbb{N}} \lnorm   \bm{\mu}_v\left( \Psi_{\texttt{S}}\left(  \hat{\vep}_{t+j-1}^{\langle \gU_v \rangle} \right) \right) - \vep_{t+j}(v) \rnorm  \nonumber \\
        &= i \cdot \sqrt{\alpha_{v,1}}.
    \end{align}
    \begin{align}
        \sum_{j \in [i]}{\rm tr} \left( \bm{\Sigma}_v\left(  \Psi_{\texttt{S}}\left(  \hat{\vep}_{t+j-1}^{\langle \gU_v \rangle} \right) \right) \right) &\leq i \cdot \max_{j \in [i]}{\rm tr} \left( \bm{\Sigma}_v\left(  \Psi_{\texttt{S}}\left(  \hat{\vep}_{t+j-1}^{\langle \gU_v \rangle} \right) \right) \right) \leq i \cdot \alpha_{v,2}.
    \end{align}

    \begin{align}
        \E\left[ \lnorm  \sum_{j \in [i]} \hat{\vep}_{t+j}(v) - \vep_{t+j}(v)   \rnorm^2 \right] \leq \alpha_{v,1} \cdot i^2 + \alpha_{v,2}  \cdot i, \quad \alpha_{v,1}, \alpha_{v,2} \in \mathbb{R}^+.
    \end{align}

    \begin{align}
        {\rm MSE}(q) &\leq \frac{1}{ndq} \sum_{v \in [n]} \sum_{i \in [q]}   \alpha_{v,1} \cdot i^2 +  \alpha_{v,2} \cdot i \nonumber \\
        & = \frac{\sum_{v\in[n]} \alpha_{v,1}}{6nd} (q+1)(q+2) + \frac{\sum_{v\in[n]} \alpha_{v,2} }{2nd} (q+1).
    \end{align}

    Let $\alpha \triangleq  \frac{1}{6nd}\sum_{v\in[n]} \alpha_{v,1}$, and $\beta \triangleq \frac{1 }{2nd}\sum_{v\in[n]} \alpha_{v,2} $, then
    \begin{align}
        {\rm MSE}(q) \leq \alpha q^2 + (3\alpha + \beta)q + \beta.
    \end{align}
        
    By Jensen's inequality,
    \begin{align}
        {\rm RMSE}(q) \leq \sqrt{ {\rm MSE}(q) } \leq \sqrt{ \alpha q^2 + (3\alpha + \beta)q + \beta }.
    \end{align}
\end{proof}

The above proof is for \texttt{mspace-S$\gN$} and also applies to \texttt{mspace-T$\gN$}. For \texttt{mspace-S$\mu$} and \texttt{mspace-T$\mu$}, $\beta = 0$.

\paragraph{Lower Bound} Similarly, we can find a lower bound on the MSE for $q$-step iterative forecast:
\begin{align}
    \E\left[ \lnorm  \sum_{j \in [i]} \hat{\vep}_{t+j}(v) -  \vep_{t+j}(v) \rnorm^2 \right] &\geq \sum_{j \in [i]} {\rm tr} \left( \bm{\Sigma}_v\left(  \Psi_{\texttt{S}}\left(  \hat{\vep}_{t+j-1}^{\langle \gU_v \rangle} \right) \right) \right) \nonumber\\
    &\geq i \cdot \min_{j \in [i]} {\rm tr} \left( \bm{\Sigma}_v\left(  \Psi_{\texttt{S}}\left(  \hat{\vep}_{t+j-1}^{\langle \gU_v \rangle} \right) \right) \right) = i \cdot \alpha_{v,3}.
\end{align}
\begin{align}
    {\rm MSE}(q) &\geq \frac{1}{ndq} \sum_{v \in [n]} \sum_{i \in [q]} i \cdot \alpha_{v,3} = \Big( \underbrace{\frac{1}{nd}\sum_{v \in [n]} \alpha_{v,3}}_{\triangleq \beta'} \Big) \cdot (q+1) = \beta' q + \beta'.
\end{align}

\newpage

\section{Complexity Analysis}
\label{app:complexity}
\subsection{Computational Complexity}
\label{app:complex_comp}
We denote the computational complexity operator as $\mathfrak{C}(\cdot)$, the argument of which is an algorithm or part of an algorithm.

\begin{algorithm}[h!]
   \caption{\texttt{mspace-S$\gN$}}
   \label{alg:mspace_2}
\begin{algorithmic}[1]
   \INPUT $\gG = (\gV, \gE, \mX)$, $r\in [0,1)$, $q,M$
   \OUTPUT $\hat{\vep}_t(v), \quad \forall v \in \gV, t \in [\lfloor r \cdot T \rfloor, T]$
   \STATE $\vep_t(v) \leftarrow \vx_t(v) - \vx_{t-1}(v), \quad \forall v \in \gV, \, t \in [T]$ 

   \textit{Offline training $(\mathsf{A})$:}
   \FOR{$t \in [ \lfloor r \cdot T \rfloor ]$} 
   \FOR{$v \in \gV$}
   \STATE $\displaystyle \vs_t^{\langle \gU_v \rangle} \leftarrow \Psi\left( \vep_t^{\langle \gU_v \rangle}  \right)$ \COMMENT{$\sum_{v \in \gV}   d|\gU_v|$}
   \STATE $\gS_v \leftarrow \gS_v \cup \left\{ \vs_t^{\langle \gU_v \rangle}  \right\}$ \COMMENT{$n$}
   \STATE $\gQ_v\left(\vs_t^{\langle \gU_v \rangle} \right) \leftarrow$ enqueue $ \vep_{t+1}^{\langle \gU_v \rangle}$ \COMMENT{$n$}
   \ENDFOR
   \ENDFOR
   \STATE   $\bm{\mu}_v(\vs) \leftarrow {\rm mean}(\gQ_v(\vs)), \quad \forall s \in \gS_v, v \in \gV$ \COMMENT{$\sum_{v \in \gV} d|\gU_v||\gS_v|M$}
   \STATE   $\bm{\Sigma}_v(\vs) \leftarrow {\rm covariance}(\gQ_v(\vs)), \quad \forall s \in \gS_v, v \in \gV$ \COMMENT{$\sum_{v \in \gV} (d|\gU_v|)^2|\gS_v|M$}

   \textit{Online learning $(\mathsf{B})$:}
   \FOR{$t \in [ \lfloor r \cdot T \rfloor , T-q ] $}
   \FOR{$v \in \gV$}
   \STATE $\displaystyle \vs_t^{\langle \gU_v \rangle} \leftarrow \Psi\left( \vep_t^{\langle \gU_v \rangle}  \right)$ \COMMENT{$\sum_{v \in \gV}   d|\gU_v|$}
   \STATE $\displaystyle \vs^* \leftarrow \arg \min_{\vs \in \gS_v} \lnorm \vs - \vs_t^{\langle \gU_v \rangle} \rnorm$ \COMMENT{$\sum_{v \in \gV} d|\gU_v||\gS_v|$}
   \STATE $\hat{\vep}_{t+1}^{\langle \gU_v \rangle} \sim \gN( \vep; \bm{\mu}_v(\vs^*), \bm{\Sigma}_v(\vs^*)  )$ \COMMENT{$\sum_{v \in \gV} (|\gU_v|d)^3$}
   \FOR{$k \in [2, q]$}
   \STATE $\displaystyle \vs^* \leftarrow \arg \min_{\vs \in \gS_v} \lnorm \vs - \Psi\left( \hat{\vep}_{t+k-1}^{\langle \gU_v \rangle}  \right) \rnorm$ \COMMENT{$(q-1) \times \sum_{v \in \gV} d|\gU_v|(1+|\gS_v|)$}
   \STATE $\hat{\vep}_{t+k}^{\langle \gU_v \rangle} \sim \gN( \vep; \bm{\mu}_v(\vs^*), \bm{\Sigma}_v(\vs^*)  )$ \COMMENT{$(q-1) \times \sum_{v \in \gV} (|\gU_v|d)^3$}
   \ENDFOR
   \STATE $\hat{\vep}_{t+k}(v) \leftarrow \hat{\vep}_{t+k}^{\langle \gU_v \rangle}(v), \quad \forall k \in [q]$
   \STATE Update $\gS_v, \gQ_v$ \COMMENT{$2n$}
   \STATE Update $\bm{\mu}_v(\vs), \bm{\Sigma}_v(\vs), \quad \forall \vs \in \gS_v$ \COMMENT{$\sum_{v \in \gV} (d|\gU_v| + d^2|\gU_v|^2)|\gS_v|M$}
   \ENDFOR
   \ENDFOR
\end{algorithmic}
\end{algorithm}

Computational complexity of offline training for \texttt{mspace-S$\gN$} can be written as:
\begin{align}
    \mathfrak{C}(\mathsf{A}) = \gO\Bigg( \underbrace{\lfloor rT \rfloor d \sum_v  |\gU_v|}_{[4]} +   \underbrace{\lfloor rT \rfloor 2n}_{[5], [6]} +  \underbrace{d M  \sum_v |\gU_v||\gS_v|}_{[9] ({\rm mean})} +  \underbrace{d^2 M \sum_v  |\gU_v|^2 |\gS_v|}_{[10] ({\rm covariance})}   \Bigg).
    \label{offline}
\end{align}

Computational complexity of online learning for \texttt{mspace-S$\gN$} can be written as:
\begin{align}
     \mathfrak{C}(\mathsf{B}) =\gO\Bigg( \sum_{t= \lceil rT \rceil}^{T-q} \Bigg\{  \underbrace{dq \sum_v |\gU_v|}_{[13], [17]}+  \underbrace{dq \sum_v |\gU_v||\gS_v|}_{[14], [17]} +  \underbrace{d^3q \sum_v |\gU_v|^3}_{[15], [18] ({\rm sampling})} +  \underbrace{2n}_{[21]} \nonumber \\
     +  \underbrace{d M  \sum_v |\gU_v| |\gS_v|}_{[22] ({\rm mean})} +  \underbrace{d^2 M \sum_v  |\gU_v|^2 |\gS_v| }_{[22] ({\rm covariance})} \Bigg\}  \Bigg).
    \label{online}
\end{align}

\begin{lemma}
    The computational complexity of \texttt{mspace-S$\gN$} is:
    \begin{align*}
        \mathfrak{C}(\mathsf{A}) &= \gO \Big( dbn rT + d^2 b^2 Mn \cdot \min\{rT, 2^{bd}\} \Big),\\
        \mathfrak{C}(\mathsf{B}) &= \gO \left( (1-r)Tnd^2b^2 \left( qdb +  M \cdot \min\left\{ \frac{(1+r)}{2}T, 2^{bd}  \right\} \right)  \right),
    \end{align*}
    where $b = \max_{v \in [n]} |\gU_v|$.
    \label{lm:SN}
\end{lemma}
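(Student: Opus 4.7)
The plan is to start from the per-line cost annotations already given in expressions~\ref{offline} and~\ref{online} and simplify by (i) replacing $\sum_{v} |\gU_v|^k$ with its uniform upper bound $n b^k$ using $b = \max_v |\gU_v|$, and (ii) bounding $|\gS_v|$ both by the number of time steps elapsed and by the cardinality of the ambient state alphabet $\{-1,+1\}^{|\gU_v| d}$, which has size at most $2^{bd}$.

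First I would dispatch the offline part $\mathsf{A}$. Lines 4--6 of Algorithm~\ref{alg:mspace_2} contribute $\gO(rT \cdot dnb + rT \cdot n) = \gO(dbn\,rT)$, so only the maximum-likelihood updates on lines 9--10 remain. Their combined cost is dominated by covariance estimation, $\gO\bigl(d^2 M \sum_v |\gU_v|^2 |\gS_v|\bigr)$. Since a new state is created at most once per observed time step and states live in a set of size at most $2^{bd}$, I would invoke the bound $|\gS_v| \leq \min\{rT,\, 2^{bd}\}$, after which the first claim follows by pulling $\max_v |\gS_v|$ outside the sum over $v$.

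For the online part $\mathsf{B}$ I would split expression~\ref{online} into (a) per-timestep terms whose cost is independent of $|\gS_v|$ (lines 13, 15, 18, 21) and (b) terms whose cost scales with $|\gS_v|$ (lines 14, 17 matching, 22). Group (a) telescopes over $t \in [\lceil rT \rceil,\, T-q]$ to $\gO((1-r)T \cdot n q d^3 b^3)$, where the Gaussian-sampling cost $d^3 b^3$ absorbs the lower-order linear contributions. Group (b) reduces to $\gO\bigl(M d^2 b^2 n \sum_{t} \max_v |\gS_v|(t)\bigr)$. The key observation here is that $|\gS_v|(t) \leq \min\{t,\, 2^{bd}\}$, so the arithmetic mean of $|\gS_v|(t)$ over $t \in [rT,\,T]$ is at most $\min\bigl\{\tfrac{(1+r)T}{2},\, 2^{bd}\bigr\}$, and multiplying by the $(1-r)T$ time steps in the online phase gives exactly the factor appearing in the claim.

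The main obstacle is careful bookkeeping rather than any deep argument: I have to verify that the $2^{bd}$ cap enters \emph{inside} the maximum rather than outside the sum (otherwise one loses a factor of $T$), and that the lower-order contributions—such as the $n$ from the set-update on line 21, the linear mean-estimation cost, and the nonlogarithmic $\gS_v$ enumeration on lines 14 and 17—are indeed absorbed by the dominant $d^3 b^3$ sampling term and the $d^2 b^2 M$ update term. Once the uniform degree bound $b$ and the dual bound $\min\{t,\,2^{bd}\}$ on $|\gS_v|$ are in place, the rest is routine algebraic simplification.
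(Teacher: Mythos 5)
Your proposal is correct and follows essentially the same route as the paper's proof: it starts from the per-line cost annotations, applies the uniform degree bound $\sum_v |\gU_v|^k \leq n b^k$ and the dual bound $|\gS_v| \leq \min\{t, 2^{bd}\}$, and sums $\min\{t,2^{bd}\}$ over the online window to obtain the $\min\bigl\{\tfrac{(1+r)}{2}T,\, 2^{bd}\bigr\}$ factor. The only difference is presentational (you group terms by whether they scale with $|\gS_v|$ before simplifying, whereas the paper simplifies the full sum directly), and you absorb the same lower-order terms into the dominant sampling and covariance-update costs as the paper does.
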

\begin{proof}
    We denote the maximum degree of a node as $b \triangleq \max_{v \in [n]} |\gU_v| <n $ which does not necessarily scale with $n$ unless specified by the graph definition. Furthermore, the total number of states observed for a node till time step $t \in \mathbb{N}$ cannot exceed $t$, i.e., $|\gS_v| \leq t$. We also know the total number of states theoretically possible for node $v$ is $2^{|\gU_v|d}$ for $\Psi_{\texttt{S}}(\cdot)$. Therefore, the number of states observed till time $t$ for node $v$ is upper bounded as: $|\gS_v| \leq \min\left\{ t, 2^{bd} \right\}$.
    Based on this, we can simplify \eqref{offline}, and \eqref{online} as follows:
    \begin{align*}
    \mathfrak{C}(\mathsf{A}) &= \gO \left(  dbnrT + 2nrT + (dbM + d^2 b^2 M) \cdot n\min\{rT, 2^{bd}\} \right)\\
     &= \gO \Big( dbn rT + d^2 b^2 Mn \cdot \min\{rT, 2^{bd}\} \Big).\\
     \mathfrak{C}(\mathsf{B}) &= \gO \left( \sum_{t= \lceil rT \rceil}^{T-q} qdbn + qd^3b^3n + 2n + db(q+M)n \cdot \min\{t, 2^{bd}\} + d^2b^2Mn \cdot \min\{t, 2^{bd}\} \right)\\
    &= \gO \left( \sum_{t= \lceil rT \rceil}^{T-q}  qd^3b^3 n+ (db(q+M) + d^2b^2M )n\cdot \min\{t, 2^{bd}\} \right)\\
     &= \gO \Big( (1-r)T \cdot qd^3b^3n +    d^2b^2Mn \cdot \min\{ (1-r^2)T^2, 2^{bd}(1-r)T  \}   \Big)\\
     &= \gO \left( (1-r)Tn \left( qd^3b^3 +  d^2b^2M \cdot \min\left\{ \frac{(1+r)}{2}T, 2^{bd}  \right\} \right)  \right).
    \end{align*}
\end{proof} 

\begin{lemma}
    The computational complexity of \texttt{mspace-S$\mu$} is:
    \begin{align*}
        \mathfrak{C}(\mathsf{A}) &= \gO\Big( dbn rT + dbMn \cdot \min\{rT, 2^{bd}\} \Big),\\
        \mathfrak{C}(\mathsf{B}) &= \gO \left( (1-r)Tndb (q+M)\cdot \min \left\{ \frac{(1+r)}{2}T, 2^{bd}  \right\}  \right).
    \end{align*}
    \label{lm:Smu}
\end{lemma}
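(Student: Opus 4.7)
The plan is to reuse the line-by-line accounting used for \texttt{mspace-S$\gN$} in Lemma~\ref{lm:SN}, adjusting only those lines whose cost depends on the sampling function. Since \texttt{mspace-S$\mu$} differs from \texttt{mspace-S$\gN$} only through $\Omega_\mu$ in place of $\Omega_\gN$, two simplifications occur: (i) the covariance matrices $\bm{\Sigma}_v(\vs)$ are never computed or stored, so lines analogous to [10] and the covariance portion of [22] are eliminated, saving the $d^2b^2 M |\gS_v|$ term per iteration; (ii) Gaussian sampling is replaced by simply returning the mean vector, so the $d^3b^3$ factorization/sampling cost on lines [15] and [18] is replaced by an $db$-cost lookup.

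First I would restate the per-line costs inherited from Lemma~\ref{lm:SN}, keeping only the $\Psi_\texttt{S}$-evaluation ($d|\gU_v|$), the enqueue/set-update ($n$), the argmin scan over $\gS_v$ ($d|\gU_v||\gS_v|$), and the mean-update ($dM|\gU_v||\gS_v|$) contributions. This yields
\begin{align*}
\mathfrak{C}(\mathsf{A}) &= \gO\Bigg( \underbrace{\lfloor rT\rfloor d\sum_v |\gU_v|}_{\Psi_\texttt{S}} + \underbrace{\lfloor rT\rfloor 2n}_{\text{set/queue}} + \underbrace{dM\sum_v |\gU_v||\gS_v|}_{\text{mean only}} \Bigg),\\
\mathfrak{C}(\mathsf{B}) &= \gO\Bigg( \sum_{t=\lceil rT\rceil}^{T-q}\Big( \underbrace{dq\sum_v |\gU_v|}_{\Psi_\texttt{S}} + \underbrace{dq\sum_v |\gU_v||\gS_v|}_{\text{argmin}} + \underbrace{dbq}_{\Omega_\mu\text{ lookup}} + \underbrace{2n}_{\text{update}} + \underbrace{dM\sum_v |\gU_v||\gS_v|}_{\text{mean update}} \Big)\Bigg).
\end{align*}

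Second, I would apply the same two bounds used in Lemma~\ref{lm:SN}: $|\gU_v|\leq b$ uniformly in $v$, and the observed-state bound $|\gS_v|\leq \min\{t, 2^{bd}\}$ coming from the finite codomain $\{-1,1\}^{bd}$ of $\Psi_\texttt{S}$. Substituting these into the expression for $\mathfrak{C}(\mathsf{A})$ collapses the mean-update term to $dbMn\cdot \min\{rT,2^{bd}\}$, and the $\Psi_\texttt{S}$ plus set/queue terms to $O(dbn\,rT)$, yielding the first claimed bound.

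For $\mathfrak{C}(\mathsf{B})$, after applying the bounds the summand becomes $qdbn + (db(q+M))n\cdot \min\{t,2^{bd}\}$; summing over $t$ from $\lceil rT\rceil$ to $T-q$ and using $\sum_{t=\lceil rT\rceil}^{T-q} \min\{t,2^{bd}\} = O\bigl((1-r)T \cdot \min\{\tfrac{(1+r)}{2}T,\,2^{bd}\}\bigr)$ (exactly as in Lemma~\ref{lm:SN}) gives the stated bound. The only subtle step, and the one I would be most careful with, is showing that the min-bound on $\sum_t \min\{t,2^{bd}\}$ is tight in both asymptotic regimes (small $T$ vs.\ large $T$ relative to $2^{bd}$); this is handled exactly as in the preceding lemma by splitting the sum at the threshold $t=2^{bd}$ and bounding each piece separately. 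No genuinely new difficulty appears since dropping the covariance is a strict removal of terms, not a structural change to the accounting.
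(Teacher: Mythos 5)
Your proposal is correct and follows essentially the same route as the paper's proof: drop the covariance computations, replace the Gaussian sampling on lines [15] and [18] with an $\gO(d|\gU_v|)$ mean lookup, and then apply the bounds $|\gU_v|\leq b$ and $|\gS_v|\leq\min\{t,2^{bd}\}$ before summing over $t$ exactly as in Lemma~\ref{lm:SN}. The only (inconsequential) slip is writing the $\Omega_\mu$ lookup cost as $dbq$ rather than $dq\sum_v|\gU_v|=\gO(dqbn)$, but that term is absorbed by the $\Psi_{\texttt{S}}$ term either way.
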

\begin{proof}
    The sampling steps [15], and [18] in Algorithm~\ref{alg:mspace_2} are replaced with $\hat{\vep}_{t}^{\langle \gU_v \rangle} \leftarrow \bm{\mu}(\vs^*)$ which has a computational complexity of $\gO(d|\gU_v|)$. Moreover, $\Omega_{\mu}(\cdot)$ does not require the covariance matrix, therefore we do not need to compute it. We simplify the computational complexity expressions as:
    \begin{align*}
    \mathfrak{C}(\mathsf{A}) &= \gO\left( \lfloor rT \rfloor d \sum_v  |\gU_v| +   \lfloor rT \rfloor 2n+  d M  \sum_v |\gU_v||\gS_v|  \right)\\ 
     &=    \gO\Big( dbn rT + dbMn \cdot \min\{rT, 2^{bd}\} \Big).\\
    \mathfrak{C}(\mathsf{B}) &= \gO\Bigg( \sum_{t= \lceil rT \rceil}^{T-q} \Big\{  dq \sum_v |\gU_v|+  dq \sum_v |\gU_v||\gS_v|+  \underbrace{dq \sum_v |\gU_v|}_{({\rm sampling})} +  2n +  d M  \sum_v |\gU_v| |\gS_v| \Big\}  \Bigg)\\
    &=\gO \left( \sum_{t= \lceil rT \rceil}^{T-q} 2qdbn  + 2n + db(q+M)n\cdot \min\{t, 2^{bd}\} \right) \\
    &= \gO \left( (1-r)Tndb (q+M)\cdot \min \left\{ \frac{(1+r)}{2}T, 2^{bd}  \right\}  \right).
    \end{align*}
\end{proof}

\begin{lemma}
    The computational complexity of \texttt{mspace-T$\gN$} is:
    \begin{align*}
        \mathfrak{C}(\mathsf{A}) &= \gO\left( nrT + d^2Mn \tau_0   \right),\\
        \mathfrak{C}(\mathsf{B}) &= \gO\Big( (1-r)Tnd^2 \cdot (  M\tau_0 + qd ) \Big).
    \end{align*}
    \label{lm:TN}
\end{lemma}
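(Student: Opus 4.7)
The plan is to mirror line-by-line the accounting already done for \texttt{mspace-S$\gN$} in Lemma~\ref{lm:SN}, but with three substitutions that reflect the structural differences induced by switching $\Psi_{\texttt{S}}$ for $\Psi_{\texttt{T}}$. First, $\Psi_{\texttt{T}}(t) = t \,{\rm mod}\, \tau_0$ depends only on the timestamp, so evaluating it at line 4/13 of Algorithm~\ref{alg:mspace_SN} costs $\gO(1)$ per node rather than $\gO(d|\gU_v|)$. Second, because $\Psi_{\texttt{T}}$ captures only temporal seasonality (no spatial coupling), each per-node queue stores the individual shock $\vep_t(v) \in \mathbb{R}^d$ rather than the neighbourhood shock $\vep_t^{\langle \gU_v \rangle} \in \mathbb{R}^{d|\gU_v|}$; consequently the per-state MLE costs $\gO(dM)$ for the mean and $\gO(d^2 M)$ for the covariance, and drawing a sample from $\gN(\vep; \bm{\mu}_v(\vs), \bm{\Sigma}_v(\vs))$ costs $\gO(d^3)$ (Cholesky plus back-substitution). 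Third, the codomain of $\Psi_{\texttt{T}}$ has cardinality $\tau_0$, so $|\gS_v| \leq \min\{t, \tau_0\}$ replaces the $\min\{t, 2^{bd}\}$ bound used in the $\Psi_{\texttt{S}}$ case.

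With these substitutions, the offline phase $(\mathsf{A})$ contributes $\gO(n)$ per timestep for state computation, set insertion and enqueueing (lines 4--6), summing to $\gO(nrT)$ over the training horizon, plus $\gO(n \cdot |\gS_v| \cdot d^2 M) = \gO(n d^2 M \tau_0)$ for the post-loop MLE of $\bm{\mu}_v, \bm{\Sigma}_v$ over all nodes and states; adding these yields the first claim. For the online phase $(\mathsf{B})$ I would tally, per timestep: $\gO(qn)$ for the $q$ calls to $\Psi_{\texttt{T}}$, $\gO(qn\tau_0)$ for the closest-state searches, $\gO(qnd^3)$ for the $q$ Gaussian samples, and $\gO(nd^2 M \tau_0)$ for the online re-estimation of $\bm{\mu}_v, \bm{\Sigma}_v$; summing over the $(1-r)T$ online steps gives the raw expression $\gO\!\left((1-r)Tn\,(q + q\tau_0 + qd^3 + d^2 M\tau_0)\right)$, which collapses to the claimed $\gO\!\left((1-r)Tnd^2 (M\tau_0 + qd)\right)$ after absorbing the dominated terms.

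The main obstacle will be the absorption step: justifying cleanly that $q + q\tau_0 \lesssim d^2 M\tau_0 + qd^3$ so that the search cost does not produce a stray $qn\tau_0$ term in the final bound. The cleanest resolution is to note that under $\Psi_{\texttt{T}}$ the sequence of observed states is the deterministic cycle $\{0,1,\ldots,\tau_0-1\}$, so after at most $\tau_0$ online timesteps every residue has been observed and the $\arg\min$ in line 15/17 reduces to an $\gO(1)$ hash lookup rather than an $\gO(\tau_0)$ linear scan. This caps the total search cost over the whole online horizon at $\gO(qn\tau_0)$ (a one-time warm-up), which is then dominated by $(1-r)Tnd^2 M\tau_0$ whenever $(1-r)T \geq q/(d^2 M)$, i.e., in the asymptotic regime of interest. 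With this observation the two lower-order terms collapse into the stated bound and the lemma follows.
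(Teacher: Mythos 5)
Your proposal is correct and follows essentially the same route as the paper: its proof likewise substitutes $|\gS_v|\le\tau_0$, an $\gO(1)$ state computation, and $b=1$ (so every per-state object has dimension $d$ rather than $d|\gU_v|$) into the term-by-term accounting already set up for \texttt{mspace-S$\gN$}, and then collapses the per-timestep sum to $\gO\bigl(d^2(M\tau_0+qd)\bigr)$. The only place you go beyond the paper is the absorption of the closest-state-search term: the paper silently folds its $dq\tau_0$ contribution into $d^2M\tau_0+qd^3$ (which strictly needs something like $q\lesssim dM$ or $\tau_0\lesssim d^2$), whereas your observation that under $\Psi_{\texttt{T}}$ the state space is a deterministic cycle of size $\tau_0$ --- so the $\arg\min$ degenerates to an exact-match lookup once every residue has been seen --- is a legitimate and cleaner justification for discarding that term.
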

\begin{proof}
    For the state function $\Psi_{\texttt{T}}$, the total number of states for any node is the period $\tau_0 \in \mathbb{N}$, i.e., $|\gS_v| \leq \tau_0$. Moreover, the state calculation $\vs_t \leftarrow \Psi(t)$ has computational complexity of $\gO(1)$. Most importantly, for $\Psi_{\texttt{T}}$, $b = 1$ as it only focuses on the seasonal trends.
    \begin{align*}
    \mathfrak{C}(\mathsf{A}) &= \gO\left( \lfloor rT \rfloor  \sum_v  1 +  \lfloor rT \rfloor 2n +  d M  \sum_v |\gU_v||\gS_v|  +  d^2 M \sum_v  |\gU_v|^2|\gS_v|   \right)\\
    &= \gO\left( 3nrT + dMn\tau_0 + d^2Mn \tau_0   \right) = \gO\left( nrT + d^2Mn \tau_0   \right).\\
    \mathfrak{C}(\mathsf{B}) &= \gO\Bigg( \sum_{t= \lceil rT \rceil}^{T-q} \Big\{  q \sum_v 1 + dq \sum_v |\gU_v||\gS_v| +  d^3q \sum_v |\gU_v|^3 +  2n \\ 
    &\qquad \qquad \qquad + d M  \sum_v |\gU_v||\gS_v| +  d^2 M \sum_v  |\gU_v|^2|\gS_v| \Big\}  \Bigg)\\
    &=  \gO\left(  \{ q + dq  \tau_0 +  q d^3  +  2 + d M  \tau_0 +  d^2 M  \tau_0 \} \cdot n(1-r)T   \right) \\
    &= \gO\Big( (1-r)Tnd^2  \cdot (  M\tau_0 + qd ) \Big).
\end{align*}
\end{proof}

\begin{lemma}
    The computational complexity of \texttt{mspace-T$\mu$} is:
    \begin{align*}
        \mathfrak{C}(\mathsf{A}) &= \gO\left( nrT + dMn \tau_0   \right),\\
        \mathfrak{C}(\mathsf{B}) &= \gO\Big( (1-r)Tn \cdot d (q+M)\tau_0 \Big).
    \end{align*}
    \label{lm:Tmu}
\end{lemma}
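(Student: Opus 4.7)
The plan is to mirror the approach used in the three preceding lemmas, exploiting two compounding simplifications. The first is the shift from the probabilistic sampler $\Omega_{\gN}$ to the deterministic sampler $\Omega_{\mu}$: this is exactly the modification that took \texttt{mspace-S$\gN$} to \texttt{mspace-S$\mu$}, i.e., the covariance is never assembled, and each sampling line $\hat{\vep}_{t}^{\langle \gU_v \rangle} \sim \gN(\vep; \bm{\mu}_v(\vs^*), \bm{\Sigma}_v(\vs^*))$ collapses to the lookup $\hat{\vep}_{t}^{\langle \gU_v \rangle} \leftarrow \bm{\mu}_v(\vs^*)$, whose cost is $\gO(d|\gU_v|)$ rather than $\gO((d|\gU_v|)^3)$. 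The second is the shift from $\Psi_{\texttt{S}}$ to $\Psi_{\texttt{T}}$: since the state is just $t \bmod \tau_0$, the state computation is $\gO(1)$ per node, the number of distinct states observed is bounded as $|\gS_v|\le \tau_0$, and the state does not depend on any neighbourhood, so effectively $b=1$ in all per-node terms that came from summing over $\gU_v$.

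I would begin, exactly as in the proof of the previous lemma, by writing down the annotated line-by-line cost of Algorithm~\ref{alg:mspace_2} and then crossing out the terms that vanish under $\Omega_{\mu}$. For $\mathfrak{C}(\mathsf{A})$ this leaves only (i) the state evaluation and the $\gS_v,\gQ_v$ updates, contributing $\gO(nrT)$ over the training window, and (ii) the mean pass over each queue, contributing $\gO(dM\sum_v|\gS_v|)=\gO(dMn\tau_0)$. Summing yields $\gO(nrT + dMn\tau_0)$.

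For $\mathfrak{C}(\mathsf{B})$ I would sum the per-timestep cost from $t=\lceil rT\rceil$ to $T-q$. Per timestep and per node, the nontrivial contributions are: the state-matching step $\gO(dq|\gS_v|) = \gO(dq\tau_0)$, the deterministic sampling $\gO(dq)$ (replacing the $\gO(d^3q)$ Gaussian draw of the T$\gN$ variant), the constant-cost queue/state update, and the mean recomputation $\gO(dM|\gS_v|)=\gO(dM\tau_0)$. Summing over the $n$ nodes and the $(1-r)T$ timesteps gives
\begin{align*}
\mathfrak{C}(\mathsf{B}) &= \gO\!\left( (1-r)Tn \cdot \bigl\{ dq\tau_0 + dq + dM\tau_0 \bigr\} \right) \\
&= \gO\!\left( (1-r)Tn \cdot d(q+M)\tau_0 \right),
\end{align*}
which matches the claim. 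The main obstacle is really just book-keeping, namely being careful that the per-node factor of $|\gU_v|$ that appeared in \texttt{mspace-S$\mu$} no longer shows up here because $\Psi_{\texttt{T}}$ ignores neighbours, so the sums $\sum_v |\gU_v||\gS_v|$ collapse to $n\tau_0$ rather than $nb\tau_0$. Once that collapse is justified explicitly (a one-line remark that the state and its associated queue are node-local and depend only on $t$), the lemma follows immediately by dropping the covariance and cubic-sampling contributions from Lemma~\ref{lm:TN}.
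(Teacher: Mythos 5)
Your proposal is correct and follows essentially the same route as the paper: it starts from the annotated line-by-line costs of the algorithm, drops the covariance and cubic sampling terms because of $\Omega_{\mu}$, and uses $|\gS_v|\le\tau_0$ together with the node-locality of $\Psi_{\texttt{T}}$ (effectively $b=1$) to collapse the sums, arriving at the same dominant terms $dq\tau_0$ and $dM\tau_0$ per node per timestep. The minor bookkeeping differences (e.g., whether the $\gO(1)$ state evaluations and the $\gO(dq)$ deterministic lookups are listed separately) are all absorbed asymptotically, so the two arguments coincide.
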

\begin{proof} Based on the explanation provided for \texttt{mspace-T$\gN$}, we simplify the computational complexity expressions for \texttt{mspace-T$\mu$} as:
    \begin{align*}
    \mathfrak{C}(\mathsf{A}) &= \gO\left( \lfloor rT \rfloor  \sum_v  1 +  \lfloor rT \rfloor 2n +  d M  \sum_v |\gS_v|   \right)\\
    &= \gO\left( 3nrT + dMn\tau_0    \right) = \gO\left( nrT + dMn \tau_0   \right).\\
    \mathfrak{C}(\mathsf{B}) &= \gO\left( \sum_{t= \lceil rT \rceil}^{T-q} \left\{  q \sum_v 1 + dq \sum_v |\gS_v| +  2n + d M  \sum_v |\gS_v|  \right\}  \right)\\
    &=  \gO\Big(  \{ q + dq  \tau_0  +  2 + d M  \tau_0  \} \cdot n(1-r)T   \Big) = \gO\Big( (1-r)Tn \cdot d (q+M)\tau_0 \Big).
\end{align*}
\end{proof}

\subsection{Space Complexity}
\label{app:space}

We denote the space complexity operator as $\mathfrak{M}(\cdot)$, the argument of which is an algorithm or part of an algorithm. The variables in  offline training $\mathsf{A}$ are re-used in online learning $\mathsf{B}$. Therefore, we can say that $\mathfrak{M}(\mathsf{B}) = \mathfrak{M}(\mathsf{A} \cup \mathsf{B})$.

In an implementation of \texttt{mspace} where forecasting is sequentially performed for each node $v \in [n]$, memory space can be efficiently reused, except for storing the outputs. This approach optimises memory usage, resulting in a space complexity characterised by:
\begin{align}
    \mathfrak{M}(\mathsf{A} \cup \mathsf{B}) = \gO \left( \max_{\substack{v \in [n],\\ t \in [T]}}    \underbrace{d|\gU_v||\gS_v|}_{\gS_v} + \underbrace{cM d|\gU_v||\gS_v|}_{\gQ_v( \vs) \, \forall \vs \in \gS_v } + \underbrace{cd|\gU_v||\gS_v|}_{\bm{\mu}_v(\vs) \, \forall \vs \in \gS_v} + \underbrace{c(d|\gU_v|)^2|\gS_v|}_{\bm{\Sigma}_v(\vs) \, \forall \vs \in \gS_v}  + \underbrace{d|\gU_v|}_{\vs^*}  \right).
    \label{eq:space}
\end{align}
\begin{lemma}
    The space complexity of \texttt{mspace-S$\gN$} is $\mathfrak{M}( \mathsf{A} \cup \mathsf{B}) = \gO \Big( db(M + db) \cdot  \min\{ T, 2^{bd} \}    \Big).$
    \label{lm:space_SN}
\end{lemma}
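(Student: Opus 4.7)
The plan is to start from the generic bound in equation \eqref{eq:space} and specialize it to the state function $\Psi_{\texttt{S}}$. The key observation, already used in the proof of Lemma~\ref{lm:SN}, is that states produced by $\Psi_{\texttt{S}}$ are sign patterns in $\{-1,+1\}^{|\gU_v|d}$, so at most $2^{bd}$ distinct states are ever possible for any node; since at most one new state can be registered per time-step, we have the per-node bound $|\gS_v| \leq \min\{t, 2^{bd}\}$. Combined with $|\gU_v| \leq b$, this uniformly bounds every factor of $|\gU_v|$ and $|\gS_v|$ that appears inside the maximum in \eqref{eq:space}.

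Next I would identify which of the five contributions inside the maximum dominate. The $d|\gU_v|$ term for storing $\vs^*$ is strictly dominated by $d|\gU_v||\gS_v|$; the storage of $\gS_v$ itself and of the mean vectors $\bm{\mu}_v(\vs)$ each contribute $\gO(db \cdot \min\{T,2^{bd}\})$ and are absorbed into the queue contribution, which scales as $Mdb \cdot \min\{T,2^{bd}\}$. The covariance matrices $\bm{\Sigma}_v(\vs)$ contribute $d^2b^2 \cdot \min\{T,2^{bd}\}$. Since memory is reused sequentially across nodes (only outputs need persistent storage, and those are subsumed into the same $\max_{v,t}$ in \eqref{eq:space}), no additional factor of $n$ appears.

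Finally I would add the two surviving dominant contributions and factor the common $\min\{T,2^{bd}\}$, yielding
\begin{equation*}
\mathfrak{M}(\mathsf{A}\cup\mathsf{B}) = \gO\Big( (dbM + d^2b^2) \cdot \min\{T, 2^{bd}\} \Big) = \gO\Big( db(M+db) \cdot \min\{T, 2^{bd}\} \Big),
\end{equation*}
which is exactly the stated bound. The only subtle step is the claim that memory reuse across nodes prevents an $n$ multiplier; this relies on the fact that at inference time for node $v$ only the structures $\gS_v,\gQ_v,\bm{\mu}_v,\bm{\Sigma}_v$ are needed, and they can be swapped in and out from persistent storage, but since the lemma is about peak working memory rather than persistent output storage, this reuse argument is exactly what \eqref{eq:space} already formalizes via the outer $\max_{v,t}$.
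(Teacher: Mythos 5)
Your proposal is correct and follows essentially the same route as the paper: it starts from the generic bound in \eqref{eq:space}, substitutes $|\gU_v|\leq b$ and $|\gS_v|\leq\min\{t,2^{bd}\}$, identifies the queue and covariance terms as dominant, and takes the maximum over $t$ to obtain $db(M+db)\cdot\min\{T,2^{bd}\}$. Your additional remark on why the sequential per-node memory reuse eliminates the factor of $n$ is precisely the justification the paper gives when deriving \eqref{eq:space}, so nothing is missing.
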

\begin{proof} Simplifying \eqref{eq:space} results in:
    \begin{align*}
    \mathfrak{M}( \mathsf{A} \cup \mathsf{B}) &= \gO \left( \max_{\substack{v \in [n],\\ t \in [T]}} \, (db + cMdb + cdb + cd^2b^2) |\gS_v| + db \right)\\
    &= \gO \Big( (cMdb + cd^2b^2) \cdot \max_{t \in [T]} \min\{ t, 2^{bd} \}    \Big) =  \gO \Big( db(M + db) \cdot  \min\{ T, 2^{bd} \}    \Big).
    \end{align*}
\end{proof}

\begin{lemma}
    The space complexity of \texttt{mspace-S$\mu$} is $\mathfrak{M}( \mathsf{A} \cup \mathsf{B}) = \gO \Big( Mdb  \cdot  \min\{ T, 2^{bd} \}    \Big).$
    \label{lm:space_Smu}
\end{lemma}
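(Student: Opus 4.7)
The plan is to mirror the structure of the proof of Lemma for \texttt{mspace-S$\gN$} (Lemma~\ref{lm:space_SN}), with a single structural modification reflecting how \texttt{mspace-S$\mu$} differs from its Gaussian counterpart. The key observation is that the deterministic sampling function $\Omega_{\mu}(\vs) = \bm{\mu}(\vs)$ never accesses the covariance matrix, so $\bm{\Sigma}_v(\vs)$ does not need to be instantiated or stored at any point during either $\mathsf{A}$ or $\mathsf{B}$. Consequently, starting from the general per-node space accounting in \eqref{eq:space}, the $c(d|\gU_v|)^2|\gS_v|$ term (which arose from storing one covariance matrix per observed state) is eliminated.

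First I would restate \eqref{eq:space} with the covariance term dropped, giving
\begin{align*}
\mathfrak{M}(\mathsf{A}\cup\mathsf{B}) = \gO\!\left( \max_{\substack{v \in [n],\\ t \in [T]}} \; d|\gU_v||\gS_v| + cM\,d|\gU_v||\gS_v| + c\,d|\gU_v||\gS_v| + d|\gU_v| \right).
\end{align*}
Next I would apply the two bounds already used in the proof of Lemma~\ref{lm:SN}: $|\gU_v| \leq b$ by definition of $b$, and $|\gS_v| \leq \min\{t, 2^{bd}\}$ since at most $t$ shocks have been observed by time $t$ and the image of $\Psi_{\texttt{S}}$ restricted to $\gU_v$ has cardinality at most $2^{|\gU_v|d} \leq 2^{bd}$. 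Substituting and absorbing the additive $db$ term into the dominating $Mdb \cdot \min\{t,2^{bd}\}$ factor, the maximum over $t \in [T]$ then yields
\begin{align*}
\mathfrak{M}(\mathsf{A}\cup\mathsf{B}) = \gO\!\left( Mdb \cdot \min\{T, 2^{bd}\} \right),
\end{align*}
which is the claimed bound.

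There is no genuine obstacle here: the argument is essentially a direct specialisation of Lemma~\ref{lm:space_SN} obtained by zeroing out the covariance storage cost. The only point that deserves a brief explicit sentence is the justification that \emph{no} auxiliary covariance quantity is ever materialised by \texttt{mspace-S$\mu$} — in particular, lines corresponding to $\bm{\Sigma}_v(\vs)$ in Algorithm~\ref{alg:mspace_SN} are simply omitted because $\Omega_{\mu}$ needs only $\bm{\mu}_v(\vs^*)$. Once that is noted, the rest is the same bookkeeping as in the \texttt{mspace-S$\gN$} case.
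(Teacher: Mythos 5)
Your proposal is correct and follows essentially the same route as the paper: drop the covariance-storage term $c(d|\gU_v|)^2|\gS_v|$ from \eqref{eq:space} because $\Omega_{\mu}$ never materialises $\bm{\Sigma}_v(\vs)$, then apply $|\gU_v|\leq b$ and $|\gS_v|\leq\min\{t,2^{bd}\}$ to obtain $\gO\big(Mdb\cdot\min\{T,2^{bd}\}\big)$. No gaps.
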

\begin{proof}
    Some space is saved in \texttt{mspace-S$\mu$}, as we do not need to store the covariance matrices.
    \begin{align*}
    \mathfrak{M}( \mathsf{A} \cup \mathsf{B}) = \gO \left( \max_{\substack{v \in [n],\\ t \in [T]}} \, (db + cMdb + cdb) |\gS_v| + db  \right) =  \gO \Big( Mdb  \cdot  \min\{ T, 2^{bd} \}    \Big).
    \end{align*}
\end{proof}

\begin{lemma}
    The space complexity of \texttt{mspace-T$\gN$} is $\mathfrak{M}( \mathsf{A} \cup \mathsf{B}) =  \gO \Big( d(M + d) \tau_0    \Big).$
    \label{lm:space_TN}
\end{lemma}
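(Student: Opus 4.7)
The plan is to follow the exact template used in Lemma~\ref{lm:space_SN} and Lemma~\ref{lm:space_Smu}: start from the general space complexity expression \eqref{eq:space}, substitute the quantities specific to the state function $\Psi_{\texttt{T}}$ with Gaussian sampling $\Omega_{\gN}$, and simplify. The key bookkeeping is identical in spirit to the S$\gN$ case, so the proof will be quite short.

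First I would recall the two structural facts about $\Psi_{\texttt{T}}$ already used in the proof of Lemma~\ref{lm:TN}: namely that $|\gS_v| \leq \tau_0$ for every node (since the state is just $t \bmod \tau_0$), and that effectively $b = 1$ because $\Psi_{\texttt{T}}$ ignores neighbour information and each per-node model only tracks the seasonal index. These two facts collapse the terms inside the $\max$ in \eqref{eq:space}. Because sampling is Gaussian, I still have to carry both the mean vectors $\bm{\mu}_v(\vs) \in \mathbb{R}^d$ and the covariance matrices $\bm{\Sigma}_v(\vs) \in \mathbb{R}^{d \times d}$, plus the queue $\gQ_v(\vs)$ of size at most $M$ storing $d$-dimensional shocks.

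Then I would plug in and read off the bound. The five terms in \eqref{eq:space} become, respectively, $d\tau_0$ for $\gS_v$, $M d \tau_0$ for the union of queues, $d\tau_0$ for the means, $d^2\tau_0$ for the covariances, and $d$ for the working variable $\vs^*$. Summing and discarding the dominated $d\tau_0$ and $d$ contributions gives
\begin{align*}
\mathfrak{M}(\mathsf{A}\cup\mathsf{B}) = \gO\bigl( M d \tau_0 + d^2 \tau_0 \bigr) = \gO\bigl( d(M + d)\tau_0 \bigr),
\end{align*}
which is the claimed bound. I would also note, as in the preceding lemmas, that the per-node processing in the implementation allows the $n$ factor to drop out, so the $\max_{v \in [n]}$ is harmless.

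There is essentially no obstacle: the only thing that could go wrong is mis-assigning $b$. The subtle point worth stating explicitly is that even though $\Psi_{\texttt{T}}$ is a node-wise univariate state, the conditional model $p(\vep_{t+1}^{\langle \gU_v \rangle} \mid \cdot)$ still involves neighbourhoods in general; however, as used in Lemma~\ref{lm:TN}, the seasonal variant effectively models each node's own shock indexed by time-of-period rather than by neighbour states, giving $b=1$ for the complexity count. Once this is acknowledged, the arithmetic is immediate and mirrors the proof of Lemma~\ref{lm:space_SN} verbatim.
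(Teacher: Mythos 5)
Your proposal is correct and follows the same route as the paper: substitute $b=1$ and $|\gS_v|\leq\tau_0$ into the general space bound \eqref{eq:space}, keep the queue term $Md\tau_0$ and the covariance term $d^2\tau_0$ as the dominant contributions, and read off $\gO\bigl(d(M+d)\tau_0\bigr)$. Your explicit remark that the seasonal state function drops the neighbourhood dependence (so the queues store only single-node shocks) is exactly the justification the paper gives for setting $b=1$.
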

\begin{proof}
    As explained earlier, for the state function $\Psi_{\texttt{T}}$, $b=1$. Therefore, the queues only store the shock vectors for a single node, and not the neighbours. The space complexity expression is simplified as:
    \begin{align*}
    \mathfrak{M}( \mathsf{A} \cup \mathsf{B}) = \gO \left( \max_{\substack{v \in [n],\\ t \in [T]}} \, (d + cMd + cd + cd^2) |\gS_v| + db \right) = \gO \Big( d(M + d) \tau_0    \Big).
    \end{align*}
\end{proof}

\begin{lemma}
    The space complexity of \texttt{mspace-T$\mu$} is $\gO \Big( Md \tau_0    \Big).$
    \label{lm:space_Tmu}
\end{lemma}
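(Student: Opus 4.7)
The plan is to specialize the general space complexity expression in equation~\eqref{eq:space} to the \texttt{-T$\mu$} variant, by combining two simplifications that have already appeared in the preceding lemmas. First, because the state function is $\Psi_{\texttt{T}}$, the state of a node depends only on the timestamp and not on its neighbourhood, so effectively $b = 1$ in \eqref{eq:space}: the queues, means, and covariances are indexed per node rather than per neighbourhood tuple. Second, because the sampling function is $\Omega_{\mu}$, only the mean vector $\bm{\mu}_v(\vs)$ is needed at forecast time, so the covariance matrices $\bm{\Sigma}_v(\vs)$ need not be stored, which removes the $c(d|\gU_v|)^2|\gS_v|$ term entirely.

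After making both substitutions, the bracketed expression in \eqref{eq:space} reduces to $(d + cMd + cd)|\gS_v| + d$. The first step is then to apply the bound $|\gS_v| \leq \tau_0$ that was established in the proof of Lemma~\ref{lm:TN} (the total number of distinct states produced by $\Psi_{\texttt{T}}(t) = t \bmod \tau_0$ is at most $\tau_0$). Taking the maximum over $v \in [n]$ and $t \in [T]$ does not change the asymptotic order, because none of the remaining terms depend on $v$ or $t$ once we bound $|\gS_v|$ by $\tau_0$.

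Collecting constants, the dominating contribution comes from the queue storage term $cMd|\gS_v|$, yielding the claimed $\gO(Md\tau_0)$ bound. There is no real obstacle here; the lemma is a direct specialization, and the argument is essentially a one-line calculation once the two structural observations ($b=1$ for $\Psi_{\texttt{T}}$, and no covariance storage for $\Omega_{\mu}$) are made explicit. The proof will therefore be just the display

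\begin{align*}
\mathfrak{M}(\mathsf{A} \cup \mathsf{B}) = \gO\!\left( \max_{\substack{v \in [n],\\ t \in [T]}} (d + cMd + cd)|\gS_v| + d \right) = \gO\!\left( Md\tau_0 \right),
\end{align*}

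mirroring exactly the style used in Lemmas~\ref{lm:space_SN}--\ref{lm:space_TN}.
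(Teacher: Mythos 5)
Your proposal is correct and follows exactly the paper's own argument: specialize \eqref{eq:space} with $b=1$ for $\Psi_{\texttt{T}}$ and drop the covariance term for $\Omega_{\mu}$, then bound $|\gS_v| \leq \tau_0$ to obtain $\gO(Md\tau_0)$. The final display you give matches the paper's one-line proof verbatim in substance.
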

\begin{proof}
    $\mathfrak{M}( \mathsf{A} \cup \mathsf{B}) = \gO \left( \max_{\substack{v \in [n],\\ t \in [T]}} \, (d + cMd + cd) |\gS_v| + d  \right) = \gO \Big( Md \tau_0    \Big).$
\end{proof}

\subsection{Asymptotic Analysis} 
\label{app:asymp_complex}
Theorem~\ref{thm:complex} states that \textit{for asymptotically large number of nodes $n$ and timesteps $T$, the computational complexity of  \texttt{mspace} is $\gO(nT)$, and the space complexity is $\gO(1)$ across all variants}.
\begin{proof}
 We analyse the lemmas \ref{lm:SN}-\ref{lm:space_Tmu} introduced in this section for the asymptotic case of very large $n$ and $T$.
For very large $T$, $\min\left\{ \frac{(1+r)}{2}T, 2^{bd}  \right\} \rightarrow 2^{bd}$. Similarly, $ \min\{ T, 2^{bd} \} \rightarrow 2^{bd}$. Considering the terms $r, d, M, q, \tau_0, b$ as constants, the computational complexity for both offline and online parts of all the \texttt{mspace} variants becomes $\gO(nT)$ for asymptotically large $n,T$.

Furthermore, the space complexity terms lack $n$ or $T$ for very large $T$, which allows us to conclude that the space complexity of all the variants of \texttt{mspace} is constant, i.e., $\gO(1)$.
\end{proof}


\newpage
\section{Experiments on Synthetic Dataset}
\label{app:synthetic}
We generate datasets through Algorithm~\ref{alg:syn_data} by supplying the parameters outlined in Table~\ref{tab:syn_param}. For each dataset, we create multiple random instances and report the mean and standard deviation of the metrics in the results.
\begin{table}[htbp]
    \centering
    \small 
    \caption{Parameters for different synthetic dataset packages.}
    \label{tab:syn_param}
    \resizebox{\columnwidth}{!}{%
    \begin{tabular}{lcccccccccccc} \toprule
        Dataset & $\gG \sim$ & $d$ & $T$ & $\mu_{\min}$ & $\mu_{\max}$ & $\sigma_{\min}$ & $\sigma_{\max}$ & $\mu_0$ & $\sigma_0$ & $\tau$ & $\mu_{\tau}$ & $\sigma_{\tau}$ \\ \midrule
        \texttt{SYN01} & $\mathfrak{G}_{\rm ER}\left(20,0.2\right)$ & 1 & $10^3$ & $-200$ & $200$ & $40$ & $50$ & $2\times10^4$ & $5000$ & $100$ & $100$ & $20$ \\
        \texttt{SYN02} & $\mathfrak{G}_{\rm ER}\left(20, 0.2\right)$ & 1 & $10^3$ & $-200$ & $200$ & $40$ & $50$ & $2\times10^4$ & $5000$ & 0 & &\\
        \texttt{SYN03} & $\mathfrak{G}_{\rm ER}\left(40, 0.5\right)$ & 1 & $10^3$ & $-400$ & $400$ & $30$ & $40$ & $10^4$ & $2000$ & 0 & &\\
        \texttt{SYN04} & $\mathfrak{G}_{\rm ER}\left(40, 0.5\right)$ & 1 & $10^4$ & $-400$ & $400$ & $30$ & $40$ & $10^4$ & $2000$ & 0 & &\\ \bottomrule
    \end{tabular}}
\end{table}

\begin{figure}[h!]
\centering
    \subfigure[\texttt{SYN01}]{\includegraphics[width=0.48\columnwidth]{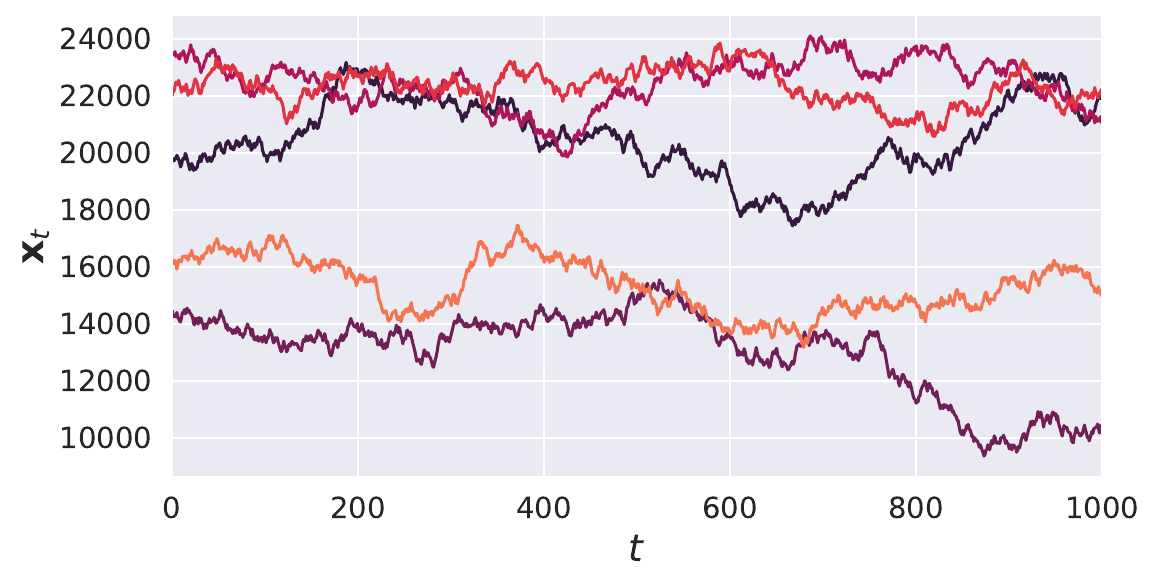}\label{fig:pems04_weekly}}
    \subfigure[\texttt{SYN02}]{\includegraphics[width=0.48\columnwidth]{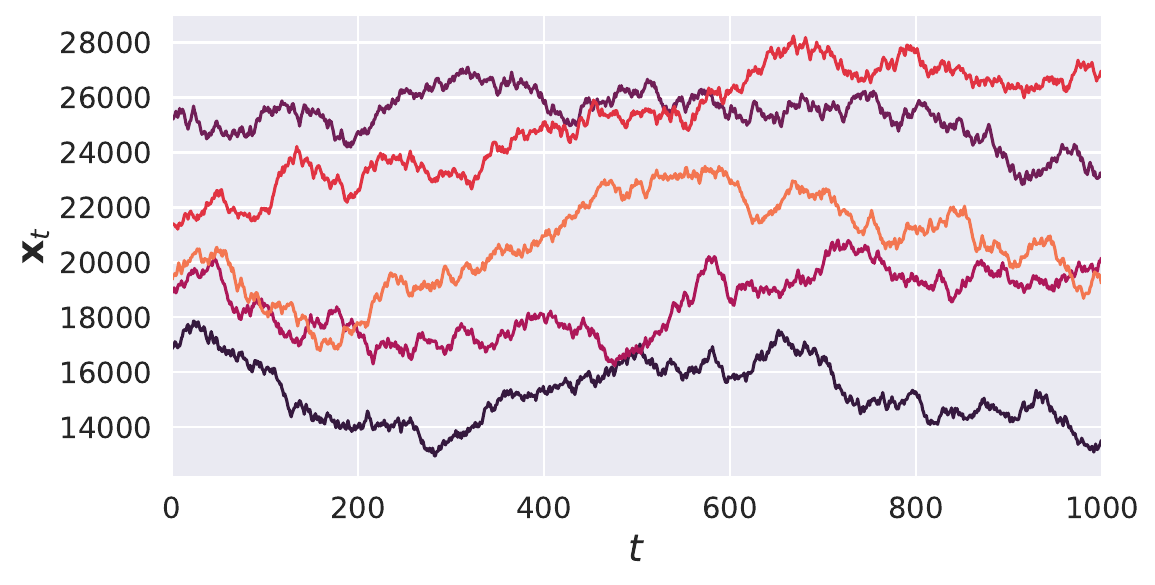}\label{fig:pems04_daily}}
    \subfigure[\texttt{SYN03}]{\includegraphics[width=0.48\columnwidth]{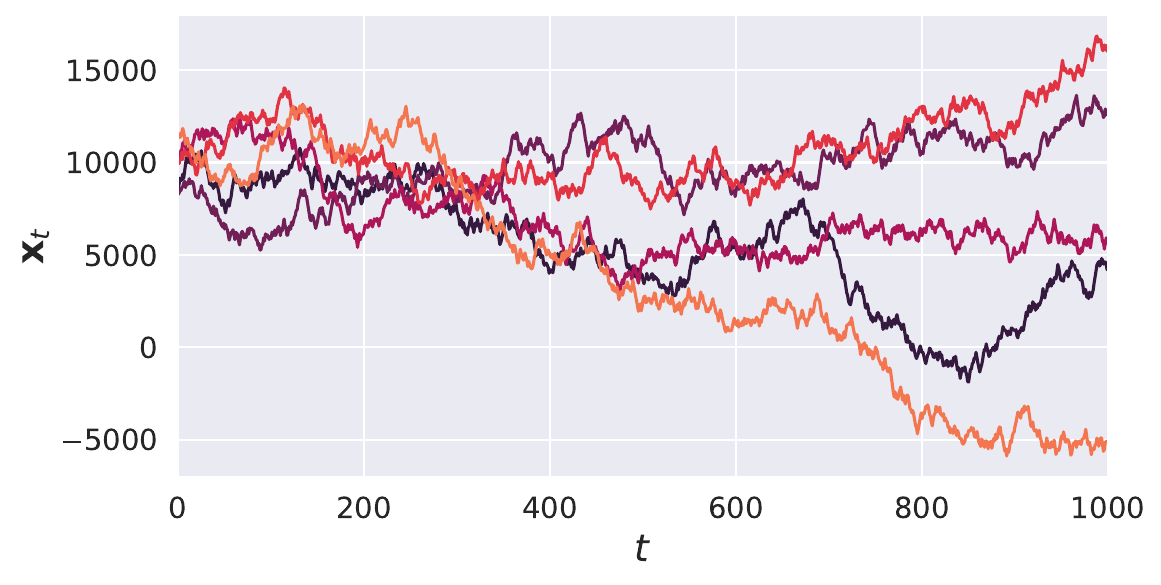}\label{fig:pems04_weekly}}
    \subfigure[\texttt{SYN04}]{\includegraphics[width=0.48\columnwidth]{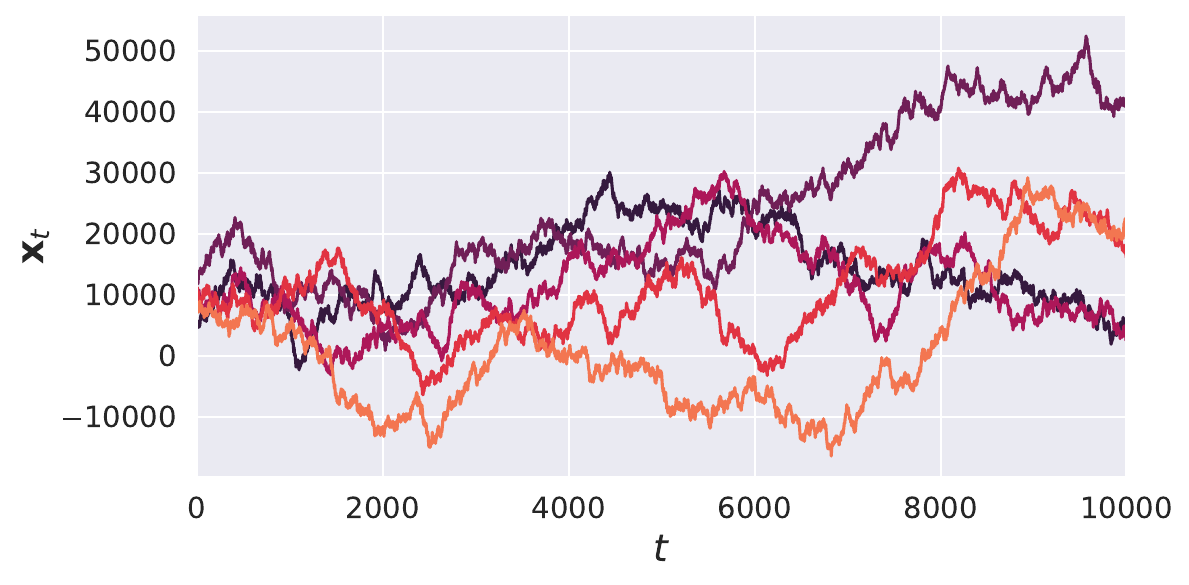}\label{fig:pems04_daily}}
    \caption{Exemplary synthetic dataset samples shown for 5 nodes.}\label{fig:syn_example}
\end{figure}


\paragraph{Periodicity}
The generator parameters for \texttt{SYN01} and \texttt{SYN02} are same except for the periodic component added to \texttt{SYN01} which has a period of $\tau=100$ timesteps consisting of shocks sampled from $\gN(100,20)$. An algorithm which can exploit the periodic influence in the signal should perform better on \texttt{SYN01} compared to \texttt{SYN02}. The models which perform worse on periodic dataset are marked \textcolor{red}{red}.
\begin{table}[h!]
    \centering
    \small 
    \caption{Impact of data periodicity on RMSE achieved by different models.}
    \begin{tabular}{lrclcrclc}
        \toprule
        & \multicolumn{3}{c}{ \texttt{SYN01}} & &  \multicolumn{3}{c}{\texttt{SYN02}} & \% increase\\
        \cmidrule{2-4} \cmidrule{6-8}
        & mean & & std. dev. & & mean & & std. dev. & $\left( \frac{\texttt{SYN02} - \texttt{SYN01}}{ \texttt{SYN01}} \right)$ \\
        \midrule
        \texttt{mspace-S$\mu$} &  299.18 & $\pm$ & 6.55 && 294.99 & $\pm$ & 8.81 & $-0.63$ \\
        \texttt{mspace-S$\gN$} &  400.99 & $\pm$ & 3.74 && 395.33 & $\pm$ & 3.24 & $-1.52$ \\
        \texttt{STGODE} &  420.86 & $\pm$ & 103.29 && 420.25 & $\pm$ & 52.17 & \textcolor{red}{$-9.87$} \\
        \texttt{GRAM-ODE} &  921.94 & $\pm$ & 537.63 && 853.77 & $\pm$ & 340.45 & \textcolor{red}{$-18.18$} \\
        \texttt{LightCTS} &  419.43 & $\pm$ & 176.5 && 334.59 & $\pm$ & 79.01 & \textcolor{red}{$-30.6$} \\
        \texttt{Kalman-$\vx$} &  781.94 & $\pm$ & 32.35 && 776.75 & $\pm$ & 30.38 & $-0.88$ \\
        \texttt{Kalman-$\vep$} &  393.76 & $\pm$ & 4.72 && 390.45 & $\pm$ & 3.54 & $-1.13$ \\
        \bottomrule
    \end{tabular}
    \label{tab:periodic}
\end{table}

\paragraph{Training Samples}
The generator parameters for \texttt{SYN03} and \texttt{SYN04} are same except for the total number of samples being ten times more in \texttt{SYN04}. If a model perform better on \texttt{SYN04} compared to \texttt{SYN03}, it would indicate that it is training intensive, requiring more samples to infer the trends. On the other hand, if the model performs worse on \texttt{SYN04}, it would indicate that there are scalability issues, or the training caused overfitting. An ideal model is expected to have similar performance on \texttt{SYN03} and \texttt{SYN04}. The models with ideal behaviour are marked \textcolor{teal}{teal}, and the models susceptible to overfitting are marked \textcolor{red}{red}. Moreover, model(s) that require more training samples are marked \textcolor{violet}{violet}.

\begin{table}[h!]
    \centering
    \caption{Impact of number of training samples on RMSE achieved by different models.}
    \begin{tabular}{lrclcrclc}
        \toprule
        & \multicolumn{3}{c}{ \texttt{SYN03}} & &  \multicolumn{3}{c}{\texttt{SYN04}} & \% increase\\
        \cmidrule{2-4} \cmidrule{6-8}
        & mean & & std. dev. & & mean & & std. dev. & $\left( \frac{\texttt{SYN04} - \texttt{SYN03}}{ \texttt{SYN03}} \right)$ \\
        \midrule
        \texttt{mspace-S$\mu$} & 793.41 & $\pm$ & 5.86 & & 789.36 & $\pm$ & 3 & \textcolor{teal}{$-0.86$} \\
        \texttt{mspace-S$\gN$} & 793.93 & $\pm$ & 5.73 & & 792.61 & $\pm$ & 2.02 & \textcolor{teal}{$-0.63$} \\
        \texttt{STGODE} & 830.63 & $\pm$ & 127 & & 931.33 & $\pm$ & 191.87 & \textcolor{red}{$+17.29$} \\
        \texttt{GRAM-ODE} & 1382.48 & $\pm$ & 80.78 & & 1423.93 & $\pm$ & 190.13 & \textcolor{red}{$+10.31$} \\
        \texttt{LightCTS} & 769.34 & $\pm$ & 196.6 & & 998.01 & $\pm$ & 319.72 & \textcolor{red}{$+36.42$} \\
        \texttt{Kalman-$\vx$} & 785.7 & $\pm$ & 8.95 & & 721.88 & $\pm$ & 1.73 & \textcolor{violet}{$-8.94$} \\
        \texttt{Kalman-$\vep$} & 782.6 & $\pm$ & 6.5 & & 783.36 & $\pm$ & 1.45 & \textcolor{teal}{$-0.54$} \\
        \bottomrule
    \end{tabular}
    \label{tab:timesteps}
\end{table}

\section{Evaluation}
\label{app:eval}

\subsection{Metrics}
The root mean squared error (RMSE) of $q$ consecutive predictions for all the nodes is:
\begin{align}
    \textstyle \textrm{RMSE}(q) \triangleq \mathbb{E} \left[\sqrt{\frac{1}{ndq}\sum_{v \in \gV} \sum_{i \in [q]} \lnorm \sum_{j \in [i]} \vep_{t+j}(v) - \hat{\vep}_{t+j}(v) \rnorm_2^2 } \right].
\end{align}
The  mean absolute error (MAE) of $q$ consecutive predictions for all the nodes is:
\begin{align}
    \textstyle \textrm{MAE}(q) \triangleq\frac{1}{ndq}\mathbb{E} \left[\sum_{v \in \gV}  \sum_{i \in [q]} \lnorm \sum_{j \in [i]} \vep_{t+j}(v) - \hat{\vep}_{t+j}(v) \rnorm_1\right].
\end{align}

\subsection{Datasets}
\label{sec:datasets}
In Table~\ref{tab:datasets}, we list the datasets commonly utilised in the literature for single and multi-step node feature forecasting.
\paragraph{tennis \cite{beres2018temporal}} represents a discrete-time dynamic graph showing the hourly changes in the interaction network among Twitter users during the 2017 Roland-Garros (RG17) tennis match. The input features capture the structural attributes of the vertices, with each vertex symbolizing a different user and the edges indicating retweets or mentions within an hour \footnote{\href{https://github.com/ferencberes/online-centrality}{https://github.com/ferencberes/online-centrality}}.
\paragraph{wikimath \cite{rozemberczki2021pytorch}} tracks daily visits to Wikipedia pages related to popular mathematical topics over a two-year period. Static edges denote hyperlinks between the pages \footnote{\href{https://pytorch-geometric-temporal.readthedocs.io/en/latest/_modules/torch_geometric_temporal/dataset/wikimath.html}{\texttt{wikimath} dataset from PyTorch Geometric Temporal}}. 
\paragraph{pedalme \cite{rozemberczki2021pytorch}} reports weekly bicycle package deliveries by Pedal Me in London throughout 2020 and 2021. The nodes are different locations, and the edge weight encodes the physical proximity. The count of weekly bicycle deliveries in a location forms the node feature footnote \footnote{\href{https://github.com/benedekrozemberczki/spatiotemporal_datasets}{https://github.com/benedekrozemberczki/spatiotemporal\_datasets}}.
\paragraph{cpox \cite{rozemberczki2021chickenpox}} tracks the weekly number of chickenpox cases for each county of Hungary between 2005 and 2015. Different counties form the nodes, and are connected if any two counties share a border  \footnotemark[\value{footnote}].

\paragraph{PEMS03/04/07/08 \cite{rao_fogs_2022}} 
The four datases are collected from four districts in California using the California Transportation Agencies (CalTrans) Performance Measurement System (PeMS) and aggregated into 5-minutes windows\footnote{\href{https://github.com/guoshnBJTU/ASTGNN/tree/main/data}{https://github.com/guoshnBJTU/ASTGNN/tree/main/data}}
. The spatial adjacency matrix for each dataset is constructed using the length of the roads. \texttt{PEMS03} is collected from September 2018 to November 2018. \texttt{PEMS04} is collected from San Francisco Bay area from July 2016 to August 2016. \texttt{PEMS07} is from Los Angeles and Ventura counties between May 2017 and August 2017. \texttt{PEMS08} is collected from San Bernardino area between July 2016 to August 2016.

\textit{Variables:} The \textbf{flow} represents the number of vehicles that pass through the loop detector per time interval (5 minutes). The \textbf{occupancy} variable represents the proportion of time during the time interval that the detector was occupied by a vehicle. It is measured as a percentage. Lastly, the \textbf{speed} variable represents the average speed of the vehicles passing through the loop detector during the time interval . It is measured in miles per hour (mph).

\paragraph{PEMSBAY \cite{li_diffusion_2017}} is a traffic dataset collected by CalTrans PeMS. It is represented by a network of 325 traffic sensors in the Bay Area with 6 months of traffic readings ranging from January 2017 to May 2017 in 5 minute intervals\footnote{\href{https://pytorch-geometric-temporal.readthedocs.io/en/latest/_modules/torch_geometric_temporal/dataset/pems_bay.html}{\texttt{PEMSBAY} dataset from PyTorch Geometric Temporal}}.

\paragraph{METRLA \cite{li_diffusion_2017}} is a traffic dataset based on Los Angeles Metropolitan traffic conditions. The traffic readings are collected from 207 loop detectors on highways in Los Angeles County over 5 minute intervals between March 2012 to June 2012\footnote{\href{https://pytorch-geometric-temporal.readthedocs.io/en/latest/_modules/torch_geometric_temporal/dataset/metr_la.html}{\texttt{METRLA} dataset from PyTorch Geometric Temporal}}.

\begin{table}[h!]
\centering
\caption{Real world datasets for single and multi-step forecasting.}
\small
\begin{tabular}{@{}lccccc@{}}
\toprule
\textbf{Name} & $n$ & $\vx$ & time-step & $T$ \\
\midrule
\texttt{tennis} & 1,000 &  \# tweets & 1 hour & 120 \\
\texttt{wikimath} & 1,068 &  \# visits & 1 day & 731 \\
\texttt{pedalme} & 15 &  \# deliveries & 1 week & 35 \\
\texttt{cpox} & 20&  \# cases & 1 week & 520 \\
\midrule
\texttt{PEMS03} & 358 &   flow & 5 min & 26,208 \\
\texttt{PEMS04} & 307 &   flow, occupancy, speed & 5 min & 16,992 \\
\texttt{PEMS07} & 883 &  flow & 5 min & 28,224 \\
\texttt{PEMS08} & 170 &  flow,  occupancy, speed & 5 min & 17,856 \\
\texttt{PEMSBAY} & 325 &  speed & 5 min & 52,116 \\
\texttt{METRLA} & 207 &  speed & 5 min & 34,272 \\
\bottomrule
\end{tabular}
\label{tab:datasets}

\end{table}

\subsection{Baselines}
\label{app:baselines}

\paragraph{DCRNN \cite{li_diffusion_2017}} The Diffusion Convolutional Recurrent Neural Network (\texttt{DCRNN}) models the node features as a diffusion process on a directed graph, capturing spatial dependencies through bidirectional random walks. Additionally, it addresses nonlinear temporal dynamics by employing an encoder-decoder architecture with scheduled sampling.

\paragraph{TGCN \cite{zhao_t-gcn_2019}} Temporal Graph Convolutional Network (\texttt{TGCN}) combines the graph convolutional network (GCN) with a gated recurrent unit (GRU), where the former learns the spatial patterns, and the latter learns the temporal.

\paragraph{EGCN \cite{pareja_evolvegcn_2020}} EvolveGCN (\texttt{EGCN}) adapts a GCN model without using node embeddings. The evolution of the GCN parameters is learnt through an RNN. \texttt{EGCN} has two variants: \texttt{ECGN-H} which uses a GRU, and \texttt{ECGN-O} which uses an LSTM.

\paragraph{DynGESN \cite{micheli_discrete-time_2022}} Dynamic Graph Echo State Networks (\texttt{DynGESN}) employ echo state networks (ESNs) a special type of RNN in which the recurrent weights are conditionally initialized, while a memory-less readout layer is trained. The ESN evolves through state transitions wheere the states belong to a compact space. For more details please refer to the original text.

\paragraph{GWNet \cite{wu_graph_2019} } GraphWave Net (\texttt{GWNet}) consists of an adaptive dependency matrix which is learnt through node embeddings, which is capable of capturing the hidden spatial relations in the data. \texttt{GWNet} can handle long sequences owing to its one-dimensional convolutional component whose receptive field grows exponentially with the number of layers.

\paragraph{STGODE \cite{fang_spatial-temporal_2021}} Spatial-temporal Graph Ordinary Differential Equation (\texttt{STGODE}) employs tensor-based ordinary differential equations (ODEs) to model the temporal evolution of the node features.

\paragraph{GRAM-ODE \cite{liu_graph-based_2023}} Graph-based Multi-ODE (\texttt{GRAM-ODE}) improves upon \texttt{STGODE} by connecting multiple ODE-GNN modules to capture different views of the local and global spatiotemporal dynamics.

\paragraph{FOGS \cite{rao_fogs_2022}} \texttt{FOGS} utilises first-order gradients to train a predictive model because the traffic data distribution is irregular.

\paragraph{LightCTS \cite{lai_lightcts_2023}} \texttt{LightCTS} stacks temporal and spatial operators in a computationally-efficient manner, and uses lightweight modules L-TCN and GL-Former.

\paragraph{ARIMA \cite{box_distribution_1970}} \texttt{ARIMA} is a multivariate time series forecasting technique that combines autoregressive, integrated, and moving average components. It models the relationship between observations and their lagged values, adjusts for non-stationarity in the data, and accounts for short-term fluctuations. 

\paragraph{Kalman \cite{welch_introduction_1997}} Since \texttt{mspace} is a state-space algorithm, we also use the Kalman filter  as a baseline. We introduce two variants of the Kalman filter: \texttt{Kalman-$\vx$}, which considers the node features as observations, and \texttt{Kalman-$\vep$}, which operates on the shocks.

\end{document}